\theoremstyle{plain}
\newcommand{\fed}{\textsc{FedAvg}}
\newcommand{\fedprox}{\textsc{FedProx}}
\newcommand{\ours}{\textsc{LG-FedAvg}}
\newcommand{\ourl}{Local Global Federated Averaging}
\DeclareMathOperator*{\argmax}{arg\,max}
\DeclareMathOperator*{\argmin}{arg\,min}
\definecolor{gg}{RGB}{15,150,15}
\definecolor{rr}{RGB}{190,45,45}
\def\maketag@@@#1{\hbox{\m@th\normalfont\normalsize#1}}
\def\eqref#1{equation~\ref{#1}}
\def\1{\bm{1}}
\DeclareMathAlphabet{\mathsfit}{\encodingdefault}{\sfdefault}{m}{sl}
\SetMathAlphabet{\mathsfit}{bold}{\encodingdefault}{\sfdefault}{bx}{n}
\setlist{nolistsep}
\setlist[itemize]{noitemsep, topsep=0pt}
\newcolumntype{H}{>{\setbox0=\hbox\bgroup}c<{\egroup}@{}}
\newcommand{\noaistats}[1]{}  %
\definecolor{darkgreen}{rgb}{0,0.4,0.0}
\definecolor{darkblue}{rgb}{0,0.1,0.3}
\definecolor{darkred}{rgb}{0.7,0.0,0.0}
\newcommand{\clientfrac}{\ensuremath{C}}    %
\newcommand{\SUB}[1]{\ENSURE \hspace{-0.15in} \textbf{#1}}
\newcommand{\grad}{\triangledown}
\newtheorem{theorem}{Theorem}
\newtheorem{proposition}{Proposition}
\newtheorem{corollary}{Corollary}
\newcommand\blfootnote[1]{%
  \begingroup
  \renewcommand\thefootnote{}\footnote{#1}%
  \addtocounter{footnote}{-1}%
  \endgroup
}
   \def\tagform@#1{\maketag@@@{\normalsize(#1)\@@italiccorr}}
\title{\Large Think Locally, Act Globally:\\Federated Learning with Local and Global Representations}
\author{%
  Paul Pu Liang$^{1*}$, Terrance Liu$^{1*}$, Liu Ziyin$^2$, Nicholas B. Allen$^3$, Randy P. Auerbach$^4$, David Brent$^5$, Ruslan Salakhutdinov$^1$, Louis-Philippe Morency$^1$\\
  $^1$School of Computer Science, Carnegie Mellon University\\
  $^2$Department of Physics, University of Tokyo\\
  $^3$Department of Psychology, University of Oregon\\
  $^4$Department of Psychiatry, Columbia University\\
  $^5$Department of Psychiatry, University of Pittsburgh\\
  \texttt{\{pliang,terrancl,morency\}@cs.cmu.edu}\\
}
\begin{document}

\maketitle

\begin{abstract}
Federated learning is a method of training models on private data distributed over multiple devices. To keep device data private, the global model is trained by only communicating parameters and updates which poses scalability challenges for large models. To this end, we propose a new federated learning algorithm that jointly learns compact \textit{local representations} on each device and a global model across all devices. As a result, the global model can be smaller since it only operates on local representations, reducing the number of communicated parameters. Theoretically, we provide a generalization analysis which shows that a combination of local and global models reduces both variance in the data as well as variance across device distributions. Empirically, we demonstrate that local models enable \textit{communication-efficient} training while retaining performance. We also evaluate on the task of \textit{personalized} mood prediction from real-world mobile data where privacy is key. Finally, local models handle \textit{heterogeneous} data from new devices, and learn \textit{fair} representations that obfuscate protected attributes such as race, age, and gender.\blfootnote{$^*$first two authors contributed equally.}
\end{abstract}

\vspace{-2mm}
\section{Introduction}
\vspace{-1mm}

Federated learning is an emerging research paradigm to train machine learning models on private data distributed in a potentially non-i.i.d. setting over multiple devices~\cite{DBLP:journals/corr/McMahanMRA16}. A key challenge involves keeping private all the data on each device by training a global model only via communication of parameter updates to each device. This relies on the global model being sufficiently compact so that the parameters and updates can be sent efficiently over existing communication channels such as wireless networks~\cite{Nilsson:2018:PEF:3286490.3286559}. However, the recent demands in larger models pose a challenge for deploying federated learning on real-world tasks. In this paper, we propose a new federated learning algorithm, \ourl\ (\ours), which jointly learns compact \textit{local representations} on each device and a global model across all devices.
We perform a generalization analysis of federated learning which shows that a combination of local and global models reduces both variance in the data as well as variance across device distributions, which is more optimal than either extreme.
To support our theoretical analysis, we perform a wide range of experiments that suggest local representation learning is beneficial for the following reasons:

1) \textit{Efficiency:} Having local models extract useful, lower-dimensional representations means that the global model now requires fewer number of parameters, thereby reducing the number of parameters and updates that need to be communicated to and from the global model as well as the bottleneck in terms of communication cost. Our proposed method also maintains performance on publicly available datasets spanning image recognition (MNIST, CIFAR) and multimodal learning (VQA).

%2) \textit{Flexibility:} splitting the traditional federated learning global model into local on-device and global server components also allows for greater flexibility to control the trade-off between learning from the local device data distribution and the global data distribution. We provide an analysis of our method's \textit{flexibility} by evaluating these trade-offs on various settings and datasets (\S\ref{flexible}).

\begin{figure*}[tbp]
\centering
\includegraphics[width=0.9\linewidth]{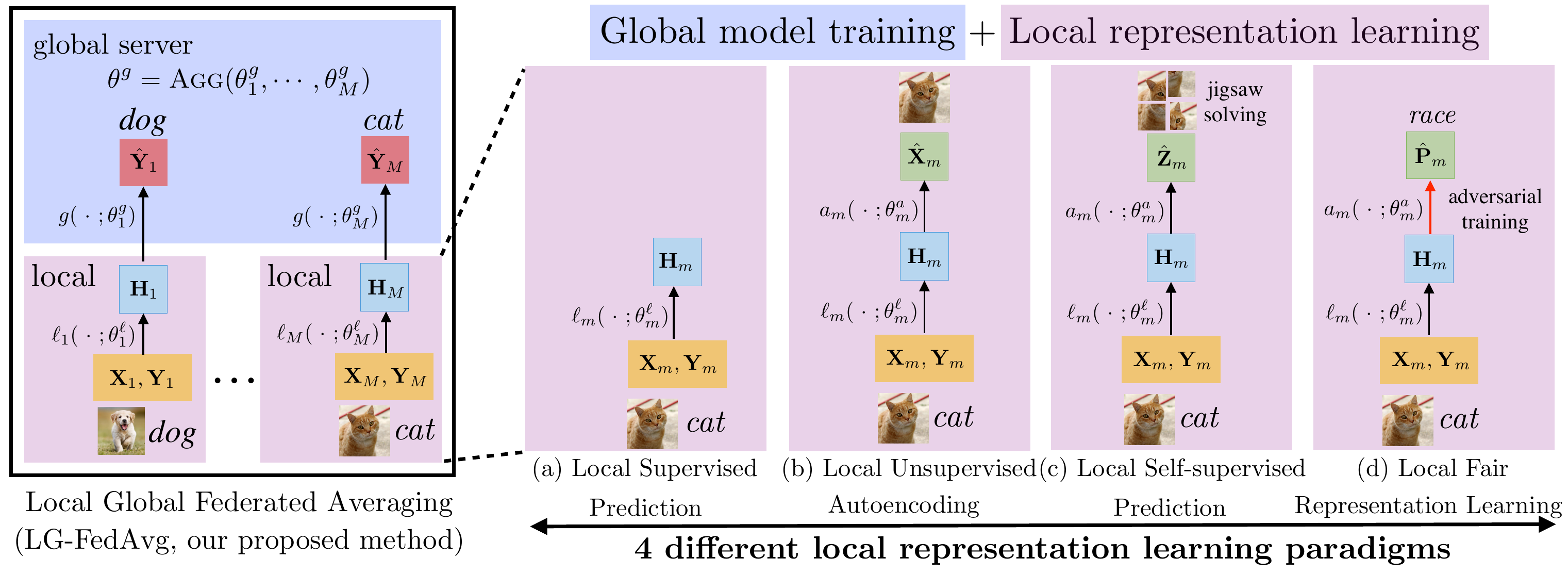}
\caption{\small (a) \ourl\ (\ours) allows for \textit{efficient} global parameter updates (smaller number of global parameters $\theta^g$), \textit{flexibility} in design across local and global models, the ability to handle \textit{heterogeneous} data, and \textit{fair} representation learning. (a) through (c) show various approaches of training local models including supervised, unsupervised, and self-supervised learning (e.g. jigsaw solving~\cite{DBLP:journals/corr/NorooziF16}). (d) shows adversarial training against protected attributes $\mathbf{P}_m$. Blue represents the global server and purple represents the local devices. $(\mathbf{X}_m, \mathbf{Y}_m)$ represents data on device $m$, $\mathbf{H}_m$ are learned local representations via local models $\ell_m(\ \cdot \ ;\theta_m^\ell) : \mathbf{x} \rightarrow \mathbf{h}$ and (optionally) auxiliary models $a_m(\ \cdot \ ;\theta_m^a) : \mathbf{h} \rightarrow \mathbf{z}$. $g(\ \cdot \ ;\theta^{g}) : \mathbf{h} \rightarrow \mathbf{y}$ is the global model. \textsc{Agg} is an aggregation function over local updates to the global model (e.g. \textsc{FedAvg}).\vspace{-4mm}}
\label{local}
\end{figure*}

2) \textit{Heterogeneity:} Real-world data is often heterogeneous (coming from different sources). A new device could contain sources of data that have never been observed before during training, such as images of a different domain or different texting styles on personalized mobile devices. Local representations allow us to process new device data using specialized encoders depending on their source modalities~\cite{DBLP:journals/corr/BaltrusaitisAM17} instead of using a single global model that might not generalize to new modalities and distributions~\cite{DBLP:journals/corr/abs-1902-00146}. We show that our model learns \textit{personalized} mood predictors from real-world private mobile data and better deals with \textit{heterogeneous} data never seen during training.

3) \textit{Fairness:} Real-world data often contains sensitive attributes and recent work has shown that it is possible to recover these attributes from data representations without access to the data itself~\cite{bolukbasi2016man}.
%or simply training for some other tangential objective~\cite{Zemel:2013:LFR:3042817.3042973,DBLP:journals/corr/abs-1904-13341,DBLP:journals/corr/abs-1901-10443,NIPS2017_6699,Belinkov2019OnAR}.
%For example, pre-trained word and sentence representations encode and exacerbate gender, race, and religious biases, and demographic attributes can be recovered from text classifiers~\cite{DBLP:journals/corr/abs-1808-06640}. 
We show that local models can be modified to learn \textit{fair} representations that obfuscate protected attributes such as race, age, and gender, a feature crucial to preserving the privacy of on-device data.

\vspace{-2mm}
\section{Related Work}
\vspace{-1mm}

\textbf{Federated Learning} aims to train models in massively distributed networks~\cite{DBLP:journals/corr/McMahanMRA16} at a large scale~\cite{DBLP:journals/corr/abs-1902-01046}, over multiple sources of heterogeneous data~\cite{DBLP:journals/corr/abs-1812-06127}, and over multiple learning objectives~\cite{DBLP:journals/corr/SmithCST17}. Recent methods aim to improve the efficiency of federated learning~\cite{DBLP:journals/corr/abs-1812-07210}, perform learning in a one-shot setting~\cite{DBLP:journals/corr/abs-1902-11175}, propose realistic benchmarks~\cite{DBLP:journals/corr/abs-1812-01097}, and reduce the data mismatch between local and global data distributions~\cite{DBLP:journals/corr/abs-1902-00146}. While several specific algorithms have been proposed for heterogeneous data, \ours\ is a more \textit{general} framework that can handle heterogeneous data from new devices, reduce communication complexity, and ensure fair representation learning. We compare with these existing baselines and show that \ours\ outperforms them in heterogeneous settings.

\vspace{-0mm}
\textbf{Distributed Learning} is a related field with similarities and key differences: while both study the theory and practice involving the partition of data and aggregation of updates~\cite{DBLP:journals/corr/abs-1802-09941,pmlr-v70-suresh17a}, federated learning is additionally concerned with data that is private and distributed in a \textit{non-i.i.d.} fashion. Recent work has improved communication efficiency by sparsifying the data and model~\cite{pmlr-v70-wang17f}, developing efficient gradient methods~\cite{Wang2018CooperativeSA,NIPS2017_7218}, and compressing the updates~\cite{DBLP:journals/corr/abs-1802-06058}. These compression techniques are \textit{complementary} to our approach and can be applied to our local and global models.

\vspace{-0mm}
\textbf{Representation Learning} involves learning features from data for generative and discriminative tasks. A recent focus has been on learning \textit{fair} representations~\cite{Zemel:2013:LFR:3042817.3042973},
%group fairness necessitates that the proportion of members in a protected group receiving positive classification is identical to the proportion in the population as a whole, and individual fairness where similar individuals should be treated similarly.
including using adversarial training~\cite{Goodfellow:2014:GAN:2969033.2969125} to learn representations that are not informative of private attributes~\cite{DBLP:journals/corr/abs-1904-13341,DBLP:journals/corr/abs-1901-10443} such as demographics~\cite{DBLP:journals/corr/abs-1808-06640} and gender~\cite{Wang2018AdversarialRO}. A related line of research is differential privacy which constraints statistical databases to limit the privacy impact on individuals whose information is in the database~\cite{Dwork:2006:DP:2097282.2097284,Dwork:2014:AFD:2693052.2693053}. %Differential privacy has also been integrated with deep learning~\cite{45428}, distributed learning~\cite{NIPS2018_7984,NIPS2018_8069}, and federated learning~\cite{geyer2019differentially,DBLP:journals/corr/abs-1812-03224}.
Our approach extends recent advances in adapting federated learning for heterogeneous data and fairness. \ours\ is a \textit{general} framework that can handle heterogeneous data from new devices, reduce communication complexity, and ensure fair representation learning.

\vspace{-2mm}
\section{\ourl}
\vspace{-1mm}

%Traditional federated learning algorithms maintain a global model on a server that makes predictions from high-dimensional raw data. A copy of the global model is transmitted to all devices which compute gradient updates using their own subset of the data. Gradient updates from each device are transmitted back to the global server where all updates are aggregated. The use of a single global model allows learning of a general model that can make predictions on all subsets of data across devices. However, a global model operating on high-dimensional data requires many parameters that must be transmitted multiple times across devices and training epochs~\cite{DBLP:journals/corr/McMahanMRA16}.

At a high level, \ours\ combines local representation learning with global model learning in an \textit{end-to-end manner}. Each local device learns to extract higher-level representations from raw data before a global model operates on the representations (rather than raw data) from all devices. An overview of \ours\ is shown in Figure~\ref{local}. The local and global learning procedures are designed to be complementary: local representation learning aims to extract \textit{high level, compact features} important for prediction, thereby allowing the global model to save parameters by operating only on \textit{lower dimensional representations}. At the same time, the global model objective ensures that the global model must be able to classify data from \textit{all} devices, thereby ensuring that the local representations are general enough instead of overfitting to the subset of data on each device. We begin by describing how local (\S\ref{alg:local}) and global (\S\ref{alg:global}) learning is performed. We then detail one example of adversarial local learning to learn fair local representations (Appendix~\ref{alg:fair}).

\textbf{Notation:} We use uppercase letters $X$ to denote random variables and lowercase letters $x$ to denote their values. Upper case boldface letters $\mathbf{X}$ denote datasets consisting of multiple vector data points $\mathbf{x}$ which we represent by lowercase boldface letters. In the standard federated learning setting, we assume that we have data $\mathbf{X}_m \in \mathbb{R}^{N_m \times d}, m \in [M]$ and their corresponding labels $\mathbf{Y}_m \in \mathbb{R}^{N_m \times c}, m \in [M]$ across $M$ devices. $N_m$ denotes the number of data points on device $m$, $N = \sum_m N_m$ is the total number of data points, $d$ represents the input dimension and $c$ represents the number of classes for classification ($c=1$ for regression). Intuitively, each source of data captures a different view $p(X_m,Y_m)$ of the global data distribution $p(X,Y)$. In our experiments, we consider settings where the individual data points in $\mathbf{X}_m,\mathbf{Y}_m$ are sampled both i.i.d. and non i.i.d. with respect to $p(X,Y)$.
%(e.g. non i.i.d. implies that data is distributed unevenly with respect their labels: one device may have, in expectation, a lot more cat images, and another a lot more dog images)
During training, we use parenthesized superscripts (e.g. $\theta^{(t)}$) to represent iteration $t$.

\begin{figure*}
    \begin{minipage}{\linewidth}
    \vspace{-4mm}
    \begin{algorithm}[H]
    \caption{\ours. The $M$ clients are indexed by $m$, $\eta$ is the learning rate.}
    \begin{algorithmic}[1]
        \SUB{\ Server executes:}
            \STATE initialize global model with weights $\theta^g$; initialize $M$ local models with weights  $\theta_m^\ell$.
            \STATE \textbf{for} each round $t = 1, 2, \dots$ \textbf{do}
            \STATE \quad $m \leftarrow \max(\clientfrac\cdot M, 1)$; $S_t \leftarrow$ (random set of $m$ clients)
            \STATE \quad \textbf{for} each client $m \in S_t$ \textbf{in parallel do}
            \STATE \quad \quad ${\theta^{g(t+1)}_m} \leftarrow \text{ClientUpdate}(m, {\theta^{g(t)}})$
            \STATE \quad ${\theta^{g(t+1)}} \leftarrow \sum_{m=1}^M \frac{N_m}{N} \theta^{g(t+1)}_m$ \qquad \ \ \ \ // aggregate updates
        \vspace{2mm}
        \SUB{\ ClientUpdate ($m, \theta_m^g$):}\ \ \qquad \qquad \ \ \ \ \ \ \ // run on client $m$
            \STATE $\mathcal{B} \leftarrow$ (split local data $(\mathbf{X}_m,\mathbf{Y}_m)$ into batches)
            \STATE \textbf{for} each local epoch \textbf{do}
            \STATE \quad \textbf{for} batch $(\mathbf{X},\mathbf{Y}) \in \mathcal{B}$ \textbf{do}
            \STATE \quad \quad $\mathbf{H} = \ell_m(\mathbf{X} ; \theta_m^\ell)$, $\hat{\mathbf{Y}} = g(\mathbf{H} ; \theta_m^g)$ \ \ \ \ \ // inference steps
            \STATE \quad \quad $\theta_m^\ell \leftarrow \theta_m^\ell - \eta \grad_{\theta_m^\ell} \mathcal{L}_m^g (\theta_m^\ell, \theta_m^g)$ \ \ \ \ \ \ // update local model wrt global loss
            \STATE \quad \quad $\theta_m^g \leftarrow \theta_m^g - \eta \grad_{\theta_m^g} \mathcal{L}_m^g (\theta_m^\ell, \theta_m^g)$ \ \ \ \ \ \ // update (local copy of) global model wrt global loss
            \STATE return global parameters $\theta_m^g$ to server
    \end{algorithmic}
    \label{algo}
    \end{algorithm}
    \end{minipage}
    \vspace{-6mm}
\end{figure*}

\vspace{-2mm}
\subsection{Local Representation Learning}
\label{alg:local}
\vspace{-1mm}

For each source of data $(\mathbf{X}_m,\mathbf{Y}_m)$, we learn a representation $\mathbf{H}_m$ which should: 1) be low-dimensional as compared to raw data $\mathbf{X}_m$, 2) capture important features in $\mathbf{X}_m$ that are useful towards the global model, and 3) not overfit to device data which may not align to the global data distribution. To be more concrete, we define features $\mathbf{z} \in \mathcal{Z}$ that should be captured using a good representation $\mathbf{h}$. In Figure~\ref{local}(a) through~\ref{local}(c) we summarize these local learning methods according to the choice of $\mathbf{z}$: (a) the labels $\mathbf{y}$ (supervised learning), (b) the data itself $\mathbf{x}$ (unsupervised autoencoder learning), or (c) some auxiliary labels $\mathbf{z}$ (self-supervised learning). For simplicity, we focus the description on supervised learning but describe extensions to local adversarial learning of fair representations (Figure~\ref{local}(d)) and unsupervised learning in Appendix~\ref{alg:fair}.

Each device consists of a local model $\ell_m: \mathbf{x} \rightarrow \mathbf{h}$ with parameters $\theta_m^\ell$ which allow us to infer features $\mathbf{H}_m = \ell_m(\mathbf{X}_m ; \theta_m^\ell)$ from local device data. These features should be useful in predicting the labels using a joint global model $g: \mathbf{h} \rightarrow \mathbf{y}$ with parameters $\theta^g$ over the features from all devices $\{\mathbf{H}_1, ..., \mathbf{H}_M\}$. The key difference is that the global model $g: \mathbf{h} \rightarrow \mathbf{y}$ now operates on lower-dimensional local representations $\mathbf{H}_m$. Therefore, $g$ can be a much smaller model which we will show in our experiments (\S\ref{efficient}).

\vspace{-2mm}
\subsection{Global Aggregation}
\label{alg:global}
\vspace{-1mm}

%The \textit{non-i.i.d.} requirements of federated learning implies that simply learning the best local model $p(Y_m|X_m)$ is still insufficient for learning a good model over the true joint distribution $p(X,Y)$.
%Contrast this with traditional federated learning where the global model $g$ takes as input raw device data $\mathbf{X}_m$ and makes a prediction $\mathbf{Y}_m$. A model $g$ operating on raw data will usually require multiple layers of representation learning to achieve good performance as shown from the recent trend of using large models for language understanding~\cite{DBLP:journals/corr/abs-1810-04805} and visual recognition~\cite{DBLP:journals/corr/HuangLW16a}. This leads to significant communication costs when transmitting global parameters $\theta^g$ to local devices.
Learning this joint global model $g$ across all devices requires the aggregation of global parameter updates from each device. At each iteration $t$ of global model training, the server sends a copy of the global model parameters $\theta^{g(t)}$ to each device which we now label as $\theta_m^{g(t)}$ to represent the asynchronous updates made to each local copy. Each device runs their local model $\mathbf{H}_m = \ell_m(\mathbf{X}_m; \theta_m^\ell)$ to obtain local features and the global model $\hat{\mathbf{Y}}_m = g(\mathbf{H}_m ; \theta_m^{g(t)})$ to obtain predictions. We can compute the overall loss on device $m$:
%\begin{equation}
%    \mathcal{L}_g^m (\theta_m^\ell, \theta_m^g; \mathbf{X}_m,\mathbf{Y}_m) = \frac{1}{N_m} \sum_{(\mathbf{x},\mathbf{y}) \in (\mathbf{X}_m,\mathbf{Y}_m)} \ell_{g}^m (\mathbf{y}, \hat{\mathbf{y}})
%\end{equation}
%\begin{align}
%    {\cal L}_m^g(\theta_m^\ell, \theta_m^g) &= \mathbb{E}_{\mathbf{x} \sim X_m} \mathbb{E}_{\mathbf{y} \sim Y_m|\mathbf{x}} [ -\log p_{\theta_m^\ell} (\mathbf{h}|\mathbf{x}) -\log p_{\theta_m^g} (\mathbf{y}|\mathbf{h}) ].
%\end{align}
{\fontsize{9.5}{12}\selectfont
\begin{align}
    \label{joint_obj}
    {\cal L}_m^g (\theta_m^\ell, \theta_m^g) = \mathbb{E}_{\substack{\mathbf{x} \sim X_m\\\mathbf{y} \sim Y_m|\mathbf{x}}} \left[ -\log \sum_{\mathbf{h}} \left( p_{\theta_m^g} (\mathbf{y}|\mathbf{h}) \ p_{\theta_m^\ell} (\mathbf{h}|\mathbf{x}) \right) \right].
\end{align}
}The loss is a function of both the local and global model parameters ($\theta_m^\ell$ and $\theta_m^g$ respectively) so both can be updated in an end-to-end manner. We argue that this synchronizes the training of the local models: while local models can flexibly fit the small amounts of data on their device, objective~\ref{joint_obj} acts as a regularizer to synchronize the local representations learned from all devices. Each local model cannot overfit to local data because otherwise, the joint global model would not be able to simultaneously predict from all local representations and the value of objective~\ref{joint_obj} would be high.

The local model parameters $\theta_m^\ell$ are updated by gradient-based methods in a straightforward manner. The global model parameters on device $m$, $\theta_m^{g(t)}$, are also asynchronously updated to $\theta_m^{g(t+1)}$ using gradient-based methods. After these local updates, each device now returns the updated global parameters $\theta_m^{g(t+1)}$ back to the server which aggregates these updates using a weighted average of the fraction of data points in each device, $\theta^{g(t+1)} = \sum_{m=1}^M \frac{N_m}{N} \theta^{g(t+1)}_m$~\citep{DBLP:journals/corr/McMahanMRA16}.
%We also found that weighting the updates by the norm of the global gradient $\left(\textrm{i.e.} \left\lVert \nabla_{\theta_m^g} \mathcal{L}^g_m (\theta_m^\ell, \theta_m^g) \right\rVert_2^2\right)$ sped up convergence, a technique proposed by~\citet{Alain2015VarianceRI}.
The overall training procedure is shown in Algorithm~\ref{algo}. Communication only happens when training the global model, which as we will show in our experiments, can be small given good local representations.
%We have shown the simple case where the local models and the global model are updated jointly from scratch during each client update, but it is also easy to modify our algorithm for settings where pretraining local models or the global model helps in convergence, as well as to define additional losses for each local model.  For example, each local model can be trained in an unsupervised fashion via an autoencoder.
%In the following, we outline an example of learning fair local representations via an adversarial loss in each local model.

% \input{text_fair.tex}

\vspace{-2mm}
\subsection{Inference at Test Time}
\label{alg:test}
\vspace{-1mm}

Given a new test sample $\mathbf{x}'$, how do we know which trained local model $\ell^*_m$ fits $\mathbf{x}'$ best? We consider two settings: (1) \textbf{Local Test} where we know which device the test data belongs to (e.g. training a personalized text completer). Using that local model works best for best match between train and test distributions. (2) \textbf{New Test} where it is possible to have a new device during testing with new data distributions. To address the new device, we view each local model as trained on a different view of the global data distribution. We pass $\mathbf{x}'$ through all trained local models $\ell^*_m$ and \textit{ensemble} the outputs.
%To do so, we send all local model weights \textit{only once} to the server and average weights. We include this parameter exchange step and still show substantial communication improvement. We find that averaging model weights performs similarly (within 0.01\% for MNIST, 0.5\% for CIFAR) to averaging model outputs (since deep ReLU nets are mostly linear).
%Prior research on bagging~\cite{breiman1996bagging} and boosting~\cite{schapire1990strength} has shown that ensembling base classifiers each trained on a different view of the data works well in both theory~\cite{kaariainen2005comparison} and practice~\cite{zhou2012ensemble}.
%Alternatively, we can train on the new device in an online setting: first train a new local model $\ell_{M+1}$ on device $M+1$ and then (optionally) fine-tune the global model.
%Our method gives a single model after training: we use either the corresponding local model (Local Test) or ensemble all local models (New Test). For ensemble, we send all local model weights only once to the server and average weights. We include this parameter exchange step and still show substantial communication improvement. We find that averaging model weights performs similarly (within 0.01\% for MNIST, 0.5\% for CIFAR) to averaging model outputs (since deep ReLU nets are mostly linear).

\vspace{-2mm}
\section{Theoretical Analysis}
\vspace{-1mm}

In this section, we provide a theoretical analysis of using local and global models for federated learning. We show that 1) purely local models do not suffer from \textit{device variance} but suffer from \textit{data variance}, 2) the opposite holds true for purely global models, and 3) having both local and global models achieves a balance between both desiderata. All detailed proofs can be found in Appendix~\ref{sec:theory}. The link between the analysis and \ours\ for deep networks is discussed at the end of the section and verified via comprehensive experiments in \S~\ref{theory}.

We assume a student-teacher setting~\cite{krogh1992generalization,hastie2019surprises}, where the goal is to train a network $f_{\hat{\mathbf{u}}}$ with weights $\hat{\mathbf{u}} \in \mathbb{R}^d$ on a task whose target is produced by a teacher network $f_\mathbf{u}$ (i.e. the \textit{realizability assumption}). We assume that the targets are generated from a linear model. While simple in setup, its behavior is rich and has proved insightful in the understanding of nonlinear models as well~\cite{mei2019generalization,mei2019mean,hastie2019surprises}. %Given $N$ datapoints $\{\mathbf{x}_i\}_{i=1,...,N}$ and observed labels $f_\mathbf{u}(\mathbf{x}_i)$, we minimize the MSE loss, 
%{\fontsize{9.5}{12}\selectfont
%    $\mathcal{L} = \frac{1}{N} \sum_{i=1}^N \left( \hat{\mathbf{u}}^\top \mathbf{x}_{i} - \mathbf{u}^\top \mathbf{x}_{i} \right)^2$.%}.
    % $\mathcal{L} = \frac{1}{N} \sum_{i=1}^N \left( f_\mathbf{\hat{u}}(\mathbf{x}_i) - f_\mathbf{u}(\mathbf{x}_i) \right)^2 = \frac{1}{N} \sum_{i=1}^N \left( \hat{\mathbf{u}}^\top \mathbf{x}_{i} - \mathbf{u}^\top \mathbf{x}_{i} \right)^2$%}
\iffalse
As in a series of previous works that have been shown very successful \cite{gardner1989three, krogh1992generalization, hastie2019surprises}, we assume a teacher-student setting, where we want to train a network $f_\mathbf{w}$, with weight $\mathbf{w}$, on a task whose target is produced by a teacher network $f_\mathbf{u}$. This teacher-student setting is equivalent to the \textit{realizability assumption} in the machine learning literature. Let $\{\mathbf{x}_i\}_{i=1,...,N}$ denote the dataset, and $f_\mathbf{u}(\mathbf{x}_i)$ their corresponding labels. We want to minimize the following loss function:
\begin{align}
    \mathcal{L}(\mathbf{w}) &= \frac{1}{N} \sum_{i=1}^N \left( f_\mathbf{v}(\mathbf{x}_i) - f_\mathbf{u}(\mathbf{x}_i) \right)^2\\
    &= \frac{1}{N} \sum_{i=1}^N \left( \mathbf{v}^\top \mathbf{x}_{i} - \mathbf{u}^\top \mathbf{x}_{i} \right)^2
\end{align}
where $N$ is the number of data points.
\fi
%\vspace{-1mm}
To adapt this setting for federated learning, we assume that all device share some underlying structure (e.g. natural syntactic and semantic structures in text) while also displaying personalization across users (e.g. personalized vocabularies and writing styles). Mathematically, this involves a global feature vector $\mathbf{v}$ that represents shared features across devices as well as local features $\mathbf{r}_m$ that represent differences across devices. The labels on device $m$ are generated by a local teacher with weights $\mathbf{u}_{m} = \mathbf{v} + \mathbf{r}_m \in \mathbb{R}^d$. We assume that each local feature $\mathbf{r}_m \sim \mathcal{N}(0, \rho^2 I)$ is a different independent draw from a $d$-dimensional Gaussian with diagonal covariances of $\rho^2$.
%(for simplicity we assume that all $d$ variances are the same, but the analysis extends trivially to the general case). 
$\rho^2$ represents \textit{device variance}: with higher $\rho^2$, the local features differ more representing more personalized targets across devices. We also assume that the training targets are corrupted by noise $\epsilon \sim \mathcal{N}(0, \sigma^2)$ to account for naturally-occurring \textit{data variance}~\citep{Montgomery}. Under this model, the training targets on model $m$ are given by $\tilde{f}_{\mathbf{u}_{m}} (\mathbf{x}) =
{f}_{\mathbf{u}_{m}} (\mathbf{x}) + \epsilon$. For simplicity, we assume each device contains $N$ data points.

%\ours\ involves learning both local and global models.
Given $N$ datapoints $\{\mathbf{x}_i\}_{i=1,...,N}$, learning local and global parameters $\hat{\mathbf{u}}_m, \hat{\mathbf{v}}$ involve optimizing the following training objectives:
\begin{align}
    \label{learn_u}
    \hat{\mathbf{u}}_m &= \argmin_{\mathbf{w}} \frac{1}{N}\sum_{i=1}^N \left(f_{\mathbf{w}} (\mathbf{x}_i) - \tilde{f}_{\mathbf{u}_m}(\mathbf{x}_i) \right)^2, \hat{\mathbf{v}} = \argmin_{\mathbf{w}} \frac{1}{N} \sum_{m=1}^M \sum_{i=1}^N \left(f_{\mathbf{w}} (\mathbf{x}_i) - \tilde{f}_{\mathbf{u}_m}(\mathbf{x}_i) \right)^2.
\end{align}
%Note that $\hat{\mathbf{v}}$ converges to $\frac{1}{M} \sum_{m=1}^M \hat{\mathbf{u}}_m = \frac{1}{M} \sum_{m=1}^M (\hat{\mathbf{v}} + \hat{\mathbf{r}}_m)$, $\mathbb{E} [\hat{\mathbf{v}}|X] = \mathbf{v}$, and $\text{Var}[\hat{\mathbf{v}}|X] = \text{Var}[\mathbf{r}_m]/M + \text{Var}_\epsilon[\hat{\mathbf{r}}_m|X]/M  = \rho^2/M + \sigma^2/(MN)$. This already suggests that using global models suffers if there is high device variance ($\rho^2$). 
We denote the overall model as $f(\mathbf{x}; \hat{\mathbf{v}}, \hat{\mathbf{u}}_{m})$ using both local and global models. The \textit{local empirical generalization error} on device $m$ is defined as
\begin{align}
    \hat{\mathcal{E}}_m &= \frac{1}{N}\sum_{i=1}^{N} \left( f(\mathbf{x}_i; \hat{\mathbf{v}}, \hat{\mathbf{u}}_{m}) - {f}_{\mathbf{u}_m}(\mathbf{x}_i) \right)^2.
\end{align}
%When using gradient descent to learn local models $\hat{\mathbf{u}}_m$ (equation~\ref{learn_u}), it has been shown that, asymptotically, the local generalization error $\hat{\mathcal{E}}_m = \frac{d\sigma^2}{N} \frac{\gamma}{\gamma -1} = \frac{d\sigma^2}{N} + O(\frac{M}{N})$, where $\gamma = \frac{N}{M}$ and we assume $N \gg M$ \cite{hastie2019surprises}. 
%[ZING] the $M$ in this paragraph should be $d$
The total \textit{empirical} generalization error is defined as the mean of all local errors $\hat{\mathcal{E}} = \frac{1}{M} \sum_{m=1}^M \hat{\mathcal{E}}_m$ and the true \textit{generalization error} is defined as the expectation taken over the randomness present in the data, devices, and noise:
{\fontsize{9}{12}\selectfont
\begin{align}
\label{eq: generalization}
    \mathcal{E} = \mathbb{E}_{\mathbf{x}, \mathbf{r}_m, \epsilon} [\hat{\mathcal{E}}_m] = \mathbb{E}_{\mathbf{x}, \mathbf{r}_m, \epsilon} \left[\left( f(\mathbf{x}; \hat{\mathbf{v}}, \hat{\mathbf{u}}_{m}) - \tilde{f}_{\mathbf{u}_m}(\mathbf{x}) \right)^2\right]
\end{align}
}which can further be manipulated to obtain a \textit{bias-variance decomposition} for federated learning.
\begin{theorem}
    The generalization loss for federated learning can be decomposed as
    \begin{equation}
    \label{eq: standard decomposition}
        \mathcal{E} = \mathbb{E}_{\mathbf{x}, \mathbf{r}_m, \epsilon}[\hat{\mathcal{E}}_m] = \text{Var}[\hat{f}] + b^2
    \end{equation}
    with variance $\text{Var}[\hat{f}] = \mathbb{E}_{\mathbf{x}, \mathbf{r}_m} \left[\text{Var}_{\epsilon}[\hat{f}| \mathbf{x}, \mathbf{r}_m] \right]$ and bias $b^2 = \mathbb{E}\left[\left( f_{\mathbf{u}_m} - \mathbb{E}_\epsilon f(\hat{\mathbf{v}}, \hat{\mathbf{u}}_{m})\right)^2\right]$.
\end{theorem}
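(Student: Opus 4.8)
The statement is the standard bias--variance decomposition, specialized to the federated setting, so the plan is to start from the definition of $\mathcal{E}$ in~\eqref{eq: generalization} and perform the usual add-and-subtract trick with the noise-averaged predictor. Concretely, I would write $\mathcal{E} = \mathbb{E}_{\mathbf{x},\mathbf{r}_m,\epsilon}\big[(f(\mathbf{x};\hat{\mathbf{v}},\hat{\mathbf{u}}_m) - \tilde f_{\mathbf{u}_m}(\mathbf{x}))^2\big]$ and insert $\pm\,\mathbb{E}_\epsilon f(\mathbf{x};\hat{\mathbf{v}},\hat{\mathbf{u}}_m)$ inside the square. Note that $\tilde f_{\mathbf{u}_m}(\mathbf{x}) = f_{\mathbf{u}_m}(\mathbf{x}) + \epsilon$, so it is cleanest to group the square as $\big[(f(\mathbf{x};\hat{\mathbf{v}},\hat{\mathbf{u}}_m) - \mathbb{E}_\epsilon f(\mathbf{x};\hat{\mathbf{v}},\hat{\mathbf{u}}_m)) + (\mathbb{E}_\epsilon f(\mathbf{x};\hat{\mathbf{v}},\hat{\mathbf{u}}_m) - f_{\mathbf{u}_m}(\mathbf{x})) - \epsilon\big]^2$ and expand into six terms (three squares and three cross terms).

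\textbf{Key steps.} First, the square of the first bracketed difference, after taking $\mathbb{E}_\epsilon$ conditional on $\mathbf{x},\mathbf{r}_m$ and then $\mathbb{E}_{\mathbf{x},\mathbf{r}_m}$, is exactly $\mathrm{Var}[\hat f] = \mathbb{E}_{\mathbf{x},\mathbf{r}_m}[\mathrm{Var}_\epsilon[\hat f\mid\mathbf{x},\mathbf{r}_m]]$ by the definition of conditional variance. Second, the square of the second difference is deterministic given $\mathbf{x},\mathbf{r}_m$, so it contributes $\mathbb{E}_{\mathbf{x},\mathbf{r}_m}[(\mathbb{E}_\epsilon f(\hat{\mathbf{v}},\hat{\mathbf{u}}_m) - f_{\mathbf{u}_m})^2]$, which is the claimed $b^2$. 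Third, the $\epsilon^2$ term contributes $\sigma^2$; I would absorb this into the variance term (or note that the statement's $\mathrm{Var}[\hat f]$ is understood to include the irreducible noise, since $\tilde f$ rather than $f$ appears in~\eqref{eq: generalization}), which is consistent with the way the theorem is phrased. Fourth, I must show all three cross terms vanish: the cross term between the first difference and the second difference vanishes because, conditioning on $\mathbf{x},\mathbf{r}_m$, the second factor is constant and $\mathbb{E}_\epsilon[f - \mathbb{E}_\epsilon f] = 0$; the cross term between the first difference and $-\epsilon$ requires $\mathbb{E}_\epsilon[\epsilon\,(f - \mathbb{E}_\epsilon f)\mid\mathbf{x},\mathbf{r}_m] = \mathrm{Cov}_\epsilon(\epsilon, f)$, which need not be zero in general — but here I can use that $\hat{\mathbf{u}}_m,\hat{\mathbf{v}}$ are \emph{fit} on the noisy targets, so actually this covariance is the mechanism coupling noise into the estimator; the standard resolution is that the "generalization" test point's noise $\epsilon$ is drawn independently of the training noise, so on the fresh example $\mathbb{E}_\epsilon[\epsilon\cdot(\text{anything not depending on this }\epsilon)] = 0$; and the cross term between the deterministic second difference and $-\epsilon$ vanishes since $\mathbb{E}_\epsilon[\epsilon] = 0$.

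\textbf{Main obstacle.} The one subtlety that needs care is the status of the noise variable $\epsilon$ in~\eqref{eq: generalization}: is it the training noise (which the estimators $\hat{\mathbf{u}}_m,\hat{\mathbf{v}}$ depend on) or an independent fresh draw at the test point? The decomposition as stated is only clean if the test-point noise is independent of the estimator — then all cross terms with $\epsilon$ die and the $\epsilon^2$ term is the irreducible $\sigma^2$. I would make this independence explicit at the start of the proof (the test point $\mathbf{x}$ and its label noise are sampled independently of the training set), after which the argument is the textbook expansion. If instead one insists on a single shared noise realization, then $\mathrm{Var}[\hat f]$ must be read as already containing the $\mathrm{Cov}_\epsilon(\epsilon,\hat f)$ contributions, and I would remark that this is exactly the bookkeeping convention implied by writing the conditional-variance term as $\mathrm{Var}_\epsilon[\hat f\mid\mathbf{x},\mathbf{r}_m]$ with the target's $\epsilon$ included. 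Either way, the proof is two or three lines of expansion plus the vanishing-cross-term bookkeeping; the content is entirely in stating the independence assumptions precisely.
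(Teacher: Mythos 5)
Your proposal is correct and follows essentially the same route as the paper's proof: expand around the noise-averaged predictor $\mathbb{E}_\epsilon f(\hat{\mathbf{v}},\hat{\mathbf{u}}_m)$ and kill the cross term by conditioning on $\mathbf{x},\mathbf{r}_m$. The one place you and the paper diverge is exactly the bookkeeping issue you flag as the main obstacle: the paper's proof silently replaces $\tilde f_{\mathbf{u}_m}$ by the clean teacher $f_{\mathbf{u}_m}$ in its second line, i.e.\ the error is measured against the noiseless target, which is consistent with the definition of $\hat{\mathcal{E}}_m$ in the main text and with the downstream results (e.g.\ $\mathcal{E}(f_\ell)=\frac{d}{N}\sigma^2$ carries no additive irreducible $\sigma^2$). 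So of your two proposed resolutions, the paper's is \emph{not} to absorb $\sigma^2$ into $\mathrm{Var}[\hat f]$ --- which the theorem explicitly defines as $\mathbb{E}_{\mathbf{x},\mathbf{r}_m}[\mathrm{Var}_\epsilon[\hat f\mid\mathbf{x},\mathbf{r}_m]]$, the predictor's variance only --- but to read the $\tilde f$ in equation~\ref{eq: generalization} as standing for the clean target $f_{\mathbf{u}_m}$; with that convention your $\epsilon^2$ term and the $\epsilon$-cross terms never appear, and the two-term decomposition is exact with no independence caveat needed for a test-point noise draw.
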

Using only local models results in an unbiased estimator of $\mathbf{u}_{m}$. The bias term arises when learning global parameters since federated learning couples the estimation of both local and global parameters. The variance term comes from both the variance of both local and global parameter estimates.

As a simplified version, \ours\ can be seen as a ensemble of local and global models, i.e. $f(\mathbf{x}; \hat{\mathbf{v}}, \hat{\mathbf{u}}_{m}) = \alpha f_{\hat{\mathbf{u}}_m}(\mathbf{x}) + (1-\alpha) f_{\hat{\mathbf{v}}}(\mathbf{x})$. In this case, one can show that:
\begin{proposition}
\label{theo: bias-variance tradeoff}
Let $\mathbb{E}_{\mathbf{r}_m,\epsilon}[f_{\hat{\mathbf{v}}}] = f_\mathbf{v}$, $\mathbb{E}_\epsilon[f_{\hat{\mathbf{u}}_m}] = f_{\mathbf{u}_m}$, and let $f(\mathbf{x}; \hat{\mathbf{v}}, \hat{\mathbf{u}}_{m}) = \alpha f_{\hat{\mathbf{u}}_m}(\mathbf{x}) + (1-\alpha) f_{\hat{\mathbf{v}}}(\mathbf{x})$, then equation~\ref{eq: standard decomposition} can be written as
\begin{equation}
   \mathcal{E} = (1 - \alpha)^2 \delta^2 + \textup{Var}[\hat{f}],
\end{equation}
where $\delta^2 = \mathbb{E}_{\mathbf{x}, \mathbf{r}_m}[(f_\mathbf{u} - \mathbb{E}_{\epsilon}[f_\mathbf{v}])^2|\mathbf{r}_m]]$ measures the discrepancy between the local and global features as a result of local variations across devices.
\end{proposition}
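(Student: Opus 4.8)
The plan is to reduce the claim to the bias--variance decomposition already established in Theorem~1 and then evaluate the bias term under the ensemble parametrization. Theorem~1 gives $\mathcal{E} = \text{Var}[\hat{f}] + b^2$ with $b^2 = \mathbb{E}\big[(f_{\mathbf{u}_m} - \mathbb{E}_\epsilon f(\hat{\mathbf{v}},\hat{\mathbf{u}}_m))^2\big]$, and the variance term $\text{Var}[\hat{f}]$ is the same object that appears on the right-hand side of the target identity, so it is simply carried over verbatim. Hence the entire content of the proposition is the statement that, under the ensemble form $f(\mathbf{x};\hat{\mathbf{v}},\hat{\mathbf{u}}_m) = \alpha f_{\hat{\mathbf{u}}_m}(\mathbf{x}) + (1-\alpha) f_{\hat{\mathbf{v}}}(\mathbf{x})$ together with the two unbiasedness hypotheses, $b^2 = (1-\alpha)^2 \delta^2$.

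To prove that, I would first push $\mathbb{E}_\epsilon$ through the ensemble by linearity of expectation: $\mathbb{E}_\epsilon f(\hat{\mathbf{v}},\hat{\mathbf{u}}_m) = \alpha\,\mathbb{E}_\epsilon[f_{\hat{\mathbf{u}}_m}] + (1-\alpha)\,\mathbb{E}_\epsilon[f_{\hat{\mathbf{v}}}]$. The hypothesis $\mathbb{E}_\epsilon[f_{\hat{\mathbf{u}}_m}] = f_{\mathbf{u}_m}$ collapses the first term to $\alpha f_{\mathbf{u}_m}$, so $f_{\mathbf{u}_m} - \mathbb{E}_\epsilon f(\hat{\mathbf{v}},\hat{\mathbf{u}}_m) = (1-\alpha)\big(f_{\mathbf{u}_m} - \mathbb{E}_\epsilon[f_{\hat{\mathbf{v}}}]\big)$. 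Squaring and pulling the deterministic constant $(1-\alpha)^2$ out of the outer expectation yields $b^2 = (1-\alpha)^2\,\mathbb{E}\big[(f_{\mathbf{u}_m} - \mathbb{E}_\epsilon[f_{\hat{\mathbf{v}}}])^2\big]$, and the remaining expectation is exactly $\delta^2$ (the mean-squared gap between the local teacher $f_{\mathbf{u}_m}$ and the noise-averaged global estimate), with the outer average over $\mathbf{r}_m$ reconciled against the conditional expectation in the definition of $\delta^2$ via the tower property. Combining with Theorem~1 then gives $\mathcal{E} = \text{Var}[\hat{f}] + (1-\alpha)^2\delta^2$, as claimed.

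The main obstacle here is bookkeeping of the randomness rather than any substantive computation: $\hat{\mathbf{v}}$ is fit jointly over all $M$ devices, so $\mathbb{E}_\epsilon[f_{\hat{\mathbf{v}}}]$ is still a random object depending on $\mathbf{r}_1,\dots,\mathbf{r}_M$, and one must check that the expectation operators inherited from Theorem~1 (over $\mathbf{x}$, $\mathbf{r}_m$, and the noise) align with those implicit in the definition of $\delta^2$. A second, more cosmetic point I would address up front is that the $\delta^2$ in the statement should be read with $\mathbb{E}_\epsilon[f_{\hat{\mathbf{v}}}]$ (and $f_{\mathbf{u}}=f_{\mathbf{u}_m}$): the deterministic global teacher $f_{\mathbf{v}}$ is recovered only after the further average over $\mathbf{r}_m$ guaranteed by the hypothesis $\mathbb{E}_{\mathbf{r}_m,\epsilon}[f_{\hat{\mathbf{v}}}] = f_{\mathbf{v}}$, so I would make this interpretation explicit before carrying out the one-line algebra above. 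Everything else is immediate from linearity of expectation and the two stated unbiasedness assumptions.
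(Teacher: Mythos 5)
Your proposal is correct and follows essentially the same route as the paper: starting from the bias--variance decomposition of Theorem~1, substituting the ensemble form into the bias term, using $\mathbb{E}_\epsilon[f_{\hat{\mathbf{u}}_m}] = f_{\mathbf{u}_m}$ to collapse the local part, and factoring out $(1-\alpha)^2$ to leave $\delta^2$. Your remark that $\delta^2$ must be read with $\mathbb{E}_\epsilon[f_{\hat{\mathbf{v}}}\,|\,\mathbf{r}_m]$ rather than the deterministic $f_{\mathbf{v}}$ is consistent with the paper's own proof, whose final line is written exactly in that conditional form.
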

\vspace{-1mm}
For the linear setting we are considering, the above result can be further expanded as:
\begin{equation}
    \mathcal{E} = (1 - \alpha)^2 \left(\frac{M-1}{M} \right)\rho^2 + \textup{Var}[\hat{f}].
\end{equation}

\vspace{-1mm}
\textbf{Analysis of local and global baselines:} We first analyze two baselines for federated learning. The first method learns local models on each device~\citep{DBLP:journals/corr/SmithCST17}: $f_{\ell}(\mathbf{x}; \hat{\mathbf{v}}, \hat{\mathbf{u}}_{m}) = f(\mathbf{x}; \hat{\mathbf{u}}_{m}) = \hat{\mathbf{u}}_{m}^\top \mathbf{x}$.
\begin{proposition}
The generalization error $\mathcal{E}(f_\ell)$ of local model $f_\ell(\mathbf{x}; \hat{\mathbf{v}}, \hat{\mathbf{u}}_{m})$ is $\frac{d}{N}\sigma^2$.
\end{proposition}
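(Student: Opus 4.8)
The plan is to compute the generalization error $\mathcal{E}(f_\ell)$ directly from the definition in equation~\ref{eq: generalization}, specialized to the purely local estimator $f_\ell(\mathbf{x};\hat{\mathbf{v}},\hat{\mathbf{u}}_m) = \hat{\mathbf{u}}_m^\top \mathbf{x}$. Since the local model is trained only on device $m$'s data by ordinary least squares against the noisy linear targets $\tilde f_{\mathbf{u}_m}(\mathbf{x}) = \mathbf{u}_m^\top\mathbf{x} + \epsilon$, we have the closed form $\hat{\mathbf{u}}_m = (\mathbf{X}_m^\top\mathbf{X}_m)^{-1}\mathbf{X}_m^\top(\mathbf{X}_m\mathbf{u}_m + \bm{\epsilon}) = \mathbf{u}_m + (\mathbf{X}_m^\top\mathbf{X}_m)^{-1}\mathbf{X}_m^\top\bm{\epsilon}$, where $\mathbf{X}_m \in \mathbb{R}^{N\times d}$ stacks the $N$ data points and $\bm\epsilon\in\mathbb{R}^N$ is the noise vector. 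The key observation is that $\hat{\mathbf{u}}_m$ is an \emph{unbiased} estimator of $\mathbf{u}_m$ (as already noted in the text following Theorem~1), so the bias term $b^2$ vanishes and $\mathcal{E}(f_\ell) = \text{Var}[\hat f]$ entirely.

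First I would write the error at a fresh test point $\mathbf{x}$ as $\big(f(\mathbf{x};\hat{\mathbf{v}},\hat{\mathbf{u}}_m) - f_{\mathbf{u}_m}(\mathbf{x})\big)^2 = \big((\hat{\mathbf{u}}_m - \mathbf{u}_m)^\top\mathbf{x}\big)^2 = \big(\bm\epsilon^\top \mathbf{X}_m(\mathbf{X}_m^\top\mathbf{X}_m)^{-1}\mathbf{x}\big)^2$. Then I would take the expectation over $\epsilon$ first, using $\mathbb{E}[\bm\epsilon\bm\epsilon^\top] = \sigma^2 I_N$, to get $\sigma^2\,\mathbf{x}^\top(\mathbf{X}_m^\top\mathbf{X}_m)^{-1}\mathbf{X}_m^\top\mathbf{X}_m(\mathbf{X}_m^\top\mathbf{X}_m)^{-1}\mathbf{x} = \sigma^2\,\mathbf{x}^\top(\mathbf{X}_m^\top\mathbf{X}_m)^{-1}\mathbf{x}$. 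Taking the expectation over the test point $\mathbf{x}$ (assumed, as is standard in this line of work, to be isotropic with $\mathbb{E}[\mathbf{x}\mathbf{x}^\top] = I_d$) turns this into $\sigma^2\,\text{Tr}\big((\mathbf{X}_m^\top\mathbf{X}_m)^{-1}\big)$. Finally I would take the expectation over the training design $\mathbf{X}_m$: under the standard assumption that the training inputs are i.i.d.\ isotropic Gaussian, $\mathbf{X}_m^\top\mathbf{X}_m$ is Wishart and $\mathbb{E}[\text{Tr}((\mathbf{X}_m^\top\mathbf{X}_m)^{-1})] = \frac{d}{N-d-1}$ exactly; in the regime $N \gg d$ this is $\approx \frac{d}{N}$, which the paper is evidently using. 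The expectation over $\mathbf{r}_m$ is trivial here since the whole quantity no longer depends on $\mathbf{u}_m$ after the $\epsilon$-average.

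The main obstacle — really the only subtle point — is pinning down exactly which normalization and asymptotic convention the authors intend: whether $\frac{d}{N}\sigma^2$ is meant as the exact finite-sample value (which would require a slightly different scaling of the Gram matrix, e.g.\ defining $\hat{\mathbf{u}}_m$ via the $\frac1N$-normalized empirical covariance so that $\mathbb{E}[(\frac1N\mathbf{X}_m^\top\mathbf{X}_m)^{-1}]$ is handled consistently) or as the leading-order term in the $N\to\infty$ (or proportional $d/N\to0$) limit of the Wishart inverse-trace identity. I would state the isotropy assumptions on $\mathbf{x}$ explicitly, carry the computation through with the exact Wishart expectation $\frac{d}{N-d-1}$, and then note that this equals $\frac{d}{N}\sigma^2(1+o(1))$, matching the claimed bound; everything else is the routine least-squares variance calculation sketched above.
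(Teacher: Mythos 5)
Your proposal is correct, and it actually supplies more detail than the paper does. The paper's own proof is a one-liner: it sets $\alpha=1$ in the general decomposition of Corollary~\ref{corr_final_decomp}, observes that the bias term $(1-\alpha)^2\frac{M-1}{M}\rho^2$ and the cross/global terms vanish, and then simply \emph{asserts} that $\text{Var}[f_{\hat{\mathbf{u}}}]=\frac{d}{N}\sigma^2$ without computing it. Your route bypasses the $\alpha$-decomposition and instead carries out the underlying least-squares variance calculation directly from the definition in equation~\ref{eq: generalization}: the OLS closed form $\hat{\mathbf{u}}_m=\mathbf{u}_m+(\mathbf{X}_m^\top\mathbf{X}_m)^{-1}\mathbf{X}_m^\top\bm{\epsilon}$, the $\epsilon$-average giving $\sigma^2\,\mathbf{x}^\top(\mathbf{X}_m^\top\mathbf{X}_m)^{-1}\mathbf{x}$, the isotropy assumption $\mathbb{E}[\mathbf{x}\mathbf{x}^\top]=I$ (which the paper also invokes in proving Corollary~\ref{corr_final_decomp}), and the Wishart inverse-trace identity. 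The two approaches are consistent: yours is the computation the paper implicitly relies on when it writes ``Likewise, we obtain $\text{Var}[f_{\hat{\mathbf{u}}}]=\frac{d}{N}\sigma^2$'' in the proof of the interpolation theorem. Your attention to the exact-versus-asymptotic distinction ($\frac{d}{N-d-1}$ versus $\frac{d}{N}$) is also well placed; the paper resolves this only by its blanket remark at the top of Appendix~\ref{sec:theory} that $\frac{d}{N}$ is taken to be small and such terms are understood at leading order, so your $\frac{d}{N}\sigma^2(1+o(1))$ conclusion matches the authors' intended convention. The only thing you gain by instead following the paper's route is uniformity: the $\alpha=1$ specialization makes this proposition, Proposition~3, and Theorem~2 all fall out of the single decomposition in Corollary~\ref{corr_final_decomp}.
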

\vspace{-1mm}
This shows that local models only control data variance at a rate of $d/{N}$ since they are only updated using local device data which may be limited in number and vary highly (both in quality and quantity) across devices. However, local models \textit{do not suffer from device variance}.

The second method updates a joint global model (i.e. vanilla federated learning;~\cite{DBLP:journals/corr/McMahanMRA16}), which is equivalent to setting $\alpha=0$, i.e. $f_{g}(\mathbf{x}; \hat{\mathbf{v}}, \hat{\mathbf{u}}_{m}) = f(\mathbf{x}; \hat{\mathbf{v}}) = \hat{\mathbf{v}}^\top \mathbf{x}$. Its generalization error $\mathcal{E}(f_g)$ is:
\begin{proposition}
The generalization error $\mathcal{E}(f_g)$ of the global model $f_{g}(\mathbf{x}; \hat{\mathbf{v}}, \hat{\mathbf{u}}_{m})$ is $\frac{M-1}{M}\rho^2 + \frac{d}{MN}\sigma^2$.
\end{proposition}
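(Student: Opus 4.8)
The plan is to reduce the claim to two short second--moment computations by passing through the bias--variance decomposition of Theorem~1 and Proposition~\ref{theo: bias-variance tradeoff} specialized to $\alpha=0$ (where $f_g(\mathbf{x};\hat{\mathbf{v}},\hat{\mathbf{u}}_m)=\hat{\mathbf{v}}^\top\mathbf{x}$). First I would write down the closed form of $\hat{\mathbf{v}}$ from the pooled least--squares problem in \eqref{learn_u}. Since the targets on device $m'$ are generated by $\mathbf{u}_{m'}=\mathbf{v}+\mathbf{r}_{m'}$ corrupted by noise $\epsilon$, the normal equations of the pooled fit make $\hat{\mathbf{v}}$ equal to the shared part $\mathbf{v}$, plus the empirical average of the offsets $\bar{\mathbf{r}}:=\frac{1}{M}\sum_{m'=1}^{M}\mathbf{r}_{m'}$, plus a noise term $\xi$ that is a fixed linear image of the stacked training noise. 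Under the same design normalization that yields the $\frac{d}{N}\sigma^2$ rate in Proposition~2, pooling uses $M$-fold more data than a single device, so $\xi$ has covariance equal to $\frac{1}{M}$ times the covariance of the single--device estimation noise.

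Next I would substitute $f_g(\mathbf{x};\hat{\mathbf{v}},\hat{\mathbf{u}}_m)=\hat{\mathbf{v}}^\top\mathbf{x}$ into $\hat{\mathcal{E}}_m$ and take the expectation, using the decomposition $\hat{\mathbf{v}}-\mathbf{u}_m=(\bar{\mathbf{r}}-\mathbf{r}_m)+\xi$. Expanding the square under the expectation over the input, the device offsets $\mathbf{r}_1,\dots,\mathbf{r}_M$, and the noise, the cross term vanishes: the offsets and the noise are independent and each has mean zero, and the input is independent of both, so the cross expectation factors through $\mathbb{E}[\bar{\mathbf{r}}-\mathbf{r}_m]=0$ and $\mathbb{E}[\xi]=0$. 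Hence
\[ \mathcal{E}(f_g) = \mathbb{E}[((\bar{\mathbf{r}}-\mathbf{r}_m)^\top\mathbf{x})^2] + \mathbb{E}[(\xi^\top\mathbf{x})^2], \]
i.e.\ a pure "device variance" term plus a pure "data variance" term.

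I would then evaluate the two pieces. The noise term $\mathbb{E}[(\xi^\top\mathbf{x})^2]$ is exactly the quantity computed for Proposition~2 with $N$ replaced by $MN$, hence equals $\frac{d}{MN}\sigma^2$. For the device term I would compute $\mathrm{cov}(\bar{\mathbf{r}}-\mathbf{r}_m)$ directly: the $m$-th offset enters $\bar{\mathbf{r}}-\mathbf{r}_m$ with coefficient $\frac{1}{M}-1$ and each of the other $M-1$ offsets with coefficient $\frac{1}{M}$, and since the $\mathbf{r}_{m'}$ are i.i.d.\ $\mathcal{N}(0,\rho^2 I)$ this gives covariance $(\frac{(M-1)^2}{M^2}+\frac{M-1}{M^2})\rho^2 I=\frac{M-1}{M}\rho^2 I$; contracting against $\mathbb{E}[\mathbf{x}\mathbf{x}^\top]$ under the chosen input normalization yields $\frac{M-1}{M}\rho^2$. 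Adding the two terms gives $\mathcal{E}(f_g)=\frac{M-1}{M}\rho^2+\frac{d}{MN}\sigma^2$, and as a consistency check this matches Proposition~\ref{theo: bias-variance tradeoff} at $\alpha=0$: the bias$^2$ there is $\delta^2=\frac{M-1}{M}\rho^2$ and the variance is precisely the $\frac{d}{MN}\sigma^2$ contribution.

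The hard part is the bookkeeping in the first step: making the OLS solve and the scaling of $\mathrm{cov}(\xi)$ explicit under whatever design assumptions are implicit in the $\frac{d}{N}$ normalization of Proposition~2 (whether all devices share the same inputs $\{\mathbf{x}_i\}$, or merely have per--device empirical second moments that are the identity up to scale, and whether $\mathbb{E}\|\mathbf{x}\|^2=1$), together with correctly handling the "self" contribution of $\mathbf{r}_m$ inside $\bar{\mathbf{r}}$ so that the ambient--dimension factors cancel and the clean $\frac{M-1}{M}$ survives. Everything after that identification is a routine Gaussian second--moment calculation.
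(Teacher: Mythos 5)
Your proposal is correct and follows essentially the same route as the paper: the paper also sets $\alpha=0$ in the bias--variance decomposition, identifies the bias term as $\delta^2=\mathbb{E}\big[\big((\mathbf{r}_m-\tfrac{1}{M}\sum_j\mathbf{r}_j)^\top\mathbf{x}\big)^2\big]=\tfrac{M-1}{M}\rho^2$ by exactly the coefficient bookkeeping you describe, and takes $\mathrm{Var}[f_{\hat{\mathbf{v}}}]=\tfrac{d}{MN}\sigma^2$ from the pooled estimator using $MN$ samples. Your write-up is actually somewhat more explicit than the paper's (which asserts the unbiasedness of $\hat{\mathbf{v}}$ and the $\tfrac{d}{MN}\sigma^2$ variance without spelling out the OLS solve), and your caveat about the input normalization implicit in the $\tfrac{d}{N}$ rate is a fair one that the paper also glosses over.
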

\vspace{-1mm}
Global models can control for data variance ($\sigma^2$) at a rate of $d/(MN)$, decreasing with the total number of datapoints across \textit{all} devices (since global parameters are updated using data across all devices), which is better than the rate for local models. However, it suffers from an extra $O(\rho^2)$ term representing device variance so one global model is unable to account for very different devices.

\vspace{-1mm}
\textbf{Analysis of \ours:} Given that the above baselines achieve different generalization errors, one should be able to interpolate between the two methods to find the optimal tradeoff point. Therefore, our method defines an $\alpha-$interpolation between the global and local models, $f_{\alpha}(\mathbf{x}; \hat{\mathbf{v}}, \hat{\mathbf{u}}_{m}) = \alpha f_{\ell}(\mathbf{x}; \hat{\mathbf{u}}_{m}) + (1-\alpha)f_{g}(\mathbf{x}; \hat{\mathbf{v}})$, where $\alpha \in [0,1]$. The generalization error, $\mathcal{E}(f_{\alpha})$ is:
\begin{theorem}
    The generalization error $\mathcal{E}(f_{\alpha})$ is $\alpha^2\frac{d}{N}\sigma^2 + (1-\alpha)^2 \frac{M-1}{M}\rho^2 + (1-\alpha^2) \frac{d}{MN}\sigma^2$.
\end{theorem}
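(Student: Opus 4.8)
The plan is to reduce the statement about $f_\alpha$ to the two already-established baseline computations for $f_\ell$ and $f_g$, exploiting linearity of the ensemble and the independence structure built into the model. Recall that $f_\alpha(\mathbf{x};\hat{\mathbf{v}},\hat{\mathbf{u}}_m) = \alpha f_\ell(\mathbf{x};\hat{\mathbf{u}}_m) + (1-\alpha) f_g(\mathbf{x};\hat{\mathbf{v}}) = (\alpha\hat{\mathbf{u}}_m + (1-\alpha)\hat{\mathbf{v}})^\top \mathbf{x}$. First I would write the local teacher as $f_{\mathbf{u}_m}(\mathbf{x}) = \mathbf{u}_m^\top\mathbf{x}$ with $\mathbf{u}_m = \mathbf{v} + \mathbf{r}_m$ and expand the prediction error $f_\alpha - \tilde f_{\mathbf{u}_m}$ into three pieces: the local estimation residual $\alpha(\hat{\mathbf{u}}_m - \mathbf{u}_m)^\top\mathbf{x}$, the global estimation/approximation residual $(1-\alpha)(\hat{\mathbf{v}} - \mathbf{u}_m)^\top\mathbf{x}$, and the label noise $-\epsilon$. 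Then I would use Proposition~\ref{theo: bias-variance tradeoff} (and the linear specialization immediately following it) to handle the bias-type contribution, which is exactly $(1-\alpha)^2\left(\tfrac{M-1}{M}\right)\rho^2$, and reduce the remaining work to computing $\textup{Var}[\hat f]$ for the ensemble.

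The key step is therefore the variance computation. I would condition on $\{\mathbf{x}_i\}$ and $\mathbf{r}_m$ and first take variance over the noise $\epsilon$, as in Theorem~\ref{eq: standard decomposition}. The estimators $\hat{\mathbf{u}}_m$ and $\hat{\mathbf{v}}$ from \eqref{learn_u} are least-squares solutions, so each is of the form (signal) $+$ (linear functional of the stacked noise vectors); the crucial observation is that $\hat{\mathbf{u}}_m$ depends only on device-$m$'s noise while $\hat{\mathbf{v}}$ depends on all $M$ devices' noise, so their noise components are correlated only through the shared device-$m$ block. I would expand $\textup{Var}_\epsilon[\alpha\hat{\mathbf{u}}_m^\top\mathbf{x} + (1-\alpha)\hat{\mathbf{v}}^\top\mathbf{x} - \epsilon]$ into $\alpha^2$ times the local-model variance, $(1-\alpha)^2$ times the global-model variance, plus a cross term $2\alpha(1-\alpha)\,\textup{Cov}_\epsilon[\hat{\mathbf{u}}_m^\top\mathbf{x}, \hat{\mathbf{v}}^\top\mathbf{x}]$, plus noise terms. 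From Propositions~2 and 3, after the outer expectation over $\mathbf{x}$ (and $\mathbf{r}_m$), the pure-local variance piece contributes $\alpha^2\tfrac{d}{N}\sigma^2$ and the pure-global piece contributes $(1-\alpha)^2\tfrac{d}{MN}\sigma^2$ (its $\rho^2$ part is bias, already accounted for). The covariance term is where the arithmetic lives: because $\hat{\mathbf{v}}$ pools $M$ i.i.d. noise blocks and $\hat{\mathbf{u}}_m$ is one of them, the cross-covariance of the noise parts scales like $\tfrac1M$ of the local variance, giving a contribution of $2\alpha(1-\alpha)\tfrac{d}{MN}\sigma^2$; together with the $\epsilon$-self term and the global piece one collects $(1-\alpha)^2\tfrac{d}{MN}\sigma^2 + 2\alpha(1-\alpha)\tfrac{d}{MN}\sigma^2 = (1-\alpha^2)\tfrac{d}{MN}\sigma^2 - \alpha^2\tfrac{d}{MN}\sigma^2 + \dots$, which after combining with the noise variance $\sigma^2$ cancellations reorganizes into the stated $(1-\alpha^2)\tfrac{d}{MN}\sigma^2$. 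Summing the three surviving pieces yields exactly $\alpha^2\tfrac{d}{N}\sigma^2 + (1-\alpha)^2\tfrac{M-1}{M}\rho^2 + (1-\alpha^2)\tfrac{d}{MN}\sigma^2$.

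I expect the main obstacle to be getting the cross-covariance term $\textup{Cov}_\epsilon[\hat{\mathbf{u}}_m^\top\mathbf{x},\hat{\mathbf{v}}^\top\mathbf{x}]$ exactly right, including the correct combinatorial factor relating device $m$'s noise block to the pooled global estimator, and then verifying that the $\alpha(1-\alpha)$ cross term and the various $\sigma^2$-noise contributions recombine cleanly into the coefficient $(1-\alpha^2)$ rather than something like $(1-\alpha)^2 + 2\alpha(1-\alpha)$ left unsimplified. A secondary subtlety is bookkeeping which $\rho^2$ contributions are ``bias'' versus ``variance'': since $\hat{\mathbf{u}}_m$ is conditionally unbiased for $\mathbf{u}_m$ but $\hat{\mathbf{v}}$ is not (it targets $\mathbf{v}$, not $\mathbf{u}_m$), the $\mathbf{r}_m$ randomness enters only through the $(1-\alpha)^2$ discrepancy term, and I must be careful that the outer expectation over $\mathbf{r}_m$ does not generate any spurious additional variance in the $\sigma^2$ terms. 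Once the decomposition $\mathcal{E} = (1-\alpha)^2\tfrac{M-1}{M}\rho^2 + \textup{Var}[\hat f]$ from Proposition~\ref{theo: bias-variance tradeoff} is in hand, the remainder is the controlled but somewhat delicate Gaussian/least-squares moment calculation sketched above.
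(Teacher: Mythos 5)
Your proposal matches the paper's proof essentially step for step: both start from the decomposition $\mathcal{E} = (1-\alpha)^2\frac{M-1}{M}\rho^2 + \mathrm{Var}[\hat f]$, expand the variance as $\alpha^2\,\mathrm{Var}[f_{\hat{\mathbf{u}}}] + (1-\alpha)^2\,\mathrm{Var}[f_{\hat{\mathbf{v}}}] + 2\alpha(1-\alpha)\,\mathrm{Cov}[f_{\hat{\mathbf{v}}},f_{\hat{\mathbf{u}}}]$, compute the cross-covariance as $\frac{d}{MN}\sigma^2$ from exactly the observation you make (that $\hat{\mathbf{v}}$ pools the $M$ independent noise blocks of which $\hat{\mathbf{u}}_m$ uses one), and collapse $(1-\alpha)^2 + 2\alpha(1-\alpha) = 1-\alpha^2$. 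The only blemish is the garbled intermediate line ``$=(1-\alpha^2)\tfrac{d}{MN}\sigma^2 - \alpha^2\tfrac{d}{MN}\sigma^2 + \dots$'' and the appeal to ``noise variance cancellations'': the identity closes immediately with no extra $-\alpha^2$ term, and no $-\epsilon$ self-term enters at all because (as in the appendix proof) the generalization error is measured against the clean teacher $f_{\mathbf{u}_m}$ rather than the noisy target, so the approach and final answer are correct.
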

\begin{corollary}
The optimal $\alpha^*$ minimizing $\mathcal{E}(f_{\alpha})$ is $\alpha^* = \frac{\rho^2}{\rho^2 + \frac{d}{N}\sigma^2}$. When $\rho^2,\ \sigma^2 >0$, we have that $\mathcal{E}(f_{\alpha^*}) < \mathcal{E}(f_\ell)$ and $\mathcal{E}(f_{\alpha^*}) < \mathcal{E}(f_g)$, a generalization error better than local or global extremes.
\end{corollary}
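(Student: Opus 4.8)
The plan is to treat the closed form for $\mathcal{E}(f_\alpha)$ supplied by the preceding theorem purely as a one-variable quadratic in $\alpha$ and minimize it by elementary calculus. Abbreviate $A = \frac{d}{N}\sigma^2$, $B = \frac{M-1}{M}\rho^2$, and $C = \frac{d}{MN}\sigma^2$, so that the theorem reads $\mathcal{E}(f_\alpha) = \alpha^2 A + (1-\alpha)^2 B + (1-\alpha^2) C$. Expanding and collecting powers of $\alpha$ gives
\begin{equation}
\mathcal{E}(f_\alpha) = (A + B - C)\,\alpha^2 - 2B\,\alpha + (B + C).
\end{equation}
The one algebraic observation that makes everything clean is $A - C = \frac{d}{N}\sigma^2 - \frac{d}{MN}\sigma^2 = \frac{M-1}{M}\cdot\frac{d}{N}\sigma^2$, so the leading coefficient factors as $A + B - C = \frac{M-1}{M}(\rho^2 + \frac{d}{N}\sigma^2)$, sharing the factor $\frac{M-1}{M}$ with $B$.

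First I would record that, for $M \ge 2$ and $\rho^2, \sigma^2 > 0$, the leading coefficient $A + B - C$ is strictly positive, hence $\mathcal{E}(f_\alpha)$ is a strictly convex parabola in $\alpha$ with a unique unconstrained minimizer found by setting its derivative $2(A+B-C)\alpha - 2B$ to zero. This yields $\alpha^* = B/(A + B - C)$; plugging in the factorizations above, the common factor $\frac{M-1}{M}$ cancels and $\alpha^* = \rho^2/(\rho^2 + \frac{d}{N}\sigma^2)$, as claimed. Since $\rho^2 > 0$ and $\frac{d}{N}\sigma^2 > 0$, this lies strictly between $0$ and $1$, so the unconstrained minimizer is also the minimizer over the feasible set $\alpha \in [0,1]$, and it is an interior point.

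For the two strict inequalities I would note that the local and global baselines are exactly the endpoints of the $\alpha$-family: $f_\ell = f_\alpha|_{\alpha=1}$ and $f_g = f_\alpha|_{\alpha=0}$, so $\mathcal{E}(f_\ell) = \mathcal{E}(f_1) = A$ and $\mathcal{E}(f_g) = \mathcal{E}(f_0) = B + C$, consistent with the earlier propositions computing these two errors. Because $\mathcal{E}(f_\alpha)$ is strictly convex in $\alpha$ with unique minimizer $\alpha^*$ lying strictly inside $(0,1)$ and therefore distinct from both $0$ and $1$, strict convexity gives at once $\mathcal{E}(f_{\alpha^*}) < \mathcal{E}(f_0) = \mathcal{E}(f_g)$ and $\mathcal{E}(f_{\alpha^*}) < \mathcal{E}(f_1) = \mathcal{E}(f_\ell)$.

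There is no real obstacle here: once the theorem's formula for $\mathcal{E}(f_\alpha)$ is available the proof is routine optimization of a quadratic, and the only thing to watch is bookkeeping — ensuring the $\frac{M-1}{M}$ factor cancels in $\alpha^*$, and noting the degenerate regimes that the hypothesis $\rho^2, \sigma^2 > 0$ (and implicitly $M \ge 2$) exists to exclude: if $\sigma^2 = 0$ the minimizer slides to $\alpha^* = 1$, if $\rho^2 = 0$ to $\alpha^* = 0$, and if $M = 1$ the objective is constant in $\alpha$ so there is nothing to optimize. The substantive work was already done in the preceding theorem's bias--variance computation; this corollary merely optimizes its output.
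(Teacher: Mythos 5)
Your proof is correct and follows essentially the same route as the paper: differentiate the quadratic $\mathcal{E}(f_\alpha)$ in $\alpha$, set it to zero, and cancel the common factor $\frac{M-1}{M}$ to get $\alpha^* = \rho^2/(\rho^2 + \frac{d}{N}\sigma^2)$. You are in fact slightly more careful than the paper's own proof, which stops after computing $\alpha^*$; your verification that the leading coefficient is positive (strict convexity) and that $\alpha^*$ lies strictly in $(0,1)$ is exactly what is needed to justify the strict inequalities $\mathcal{E}(f_{\alpha^*}) < \mathcal{E}(f_\ell)$ and $\mathcal{E}(f_{\alpha^*}) < \mathcal{E}(f_g)$.
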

This shows that using an ensemble of local and global models \textit{reduces both data variance and device variance}. When $\rho^2$ is large (high device variance), one should prioritize local models that better model the local data distributions (larger $\alpha^*$). Conversely, when $\sigma^2$ is large (high data variance), one should prioritize a global model (smaller $\alpha^*$).

While our theory holds true for linear models, we believe that it provides accurate insight into the practical generalization abilities of \ours, where we use deep networks and treat $\alpha$ as the \textit{split} of the layers of between the local and global models. Empirically, we compare Figure~\ref{test_plot}(a)-(b) (test error for linear models using $\alpha$-\textit{interpolation} on synthetic data) with Figure~\ref{test_plot}(d) (test accuracy for deep networks using $\alpha$-\textit{split} on real-world mobile data). The close similarity implies that our theoretical analysis captures the correct relationship between local and global models.

\vspace{-2mm}
\section{Experiments}
\vspace{-1mm}

We evaluate how our method 1) verifies our theory under different data and device variances, 2) \textit{efficiently} reduces parameters while retaining performance, 3) learns \textit{personalized} models and handles data from \textit{heterogeneous} sources, two settings with particularly high device variance, and 4) learns \text{fair} local representations that obfuscate private attributes. Code is included in the supplementary. Implementation details and sensitivity reports across hyperparameters are provided in Appendix~\ref{details_supp}.

%We follow this by demonstrating the \textit{flexibility} of this approach in controlling the trade-offs between learning from the local device data distribution and the global data distribution (\S\ref{flexible}).
%We then consider settings where data from \textit{heterogeneous} sources is seen in an online manner where local models help to prevent catastrophic forgetting in the global model. Finally, we demonstrate how to learn \text{fair} representations that obfuscate private attributes. Anonymized code is included in the supplementary and implementation details are in the appendix.

\begin{figure}[t]
    \centering
    \vspace{-4mm}
    \includegraphics[width=1.0\linewidth]{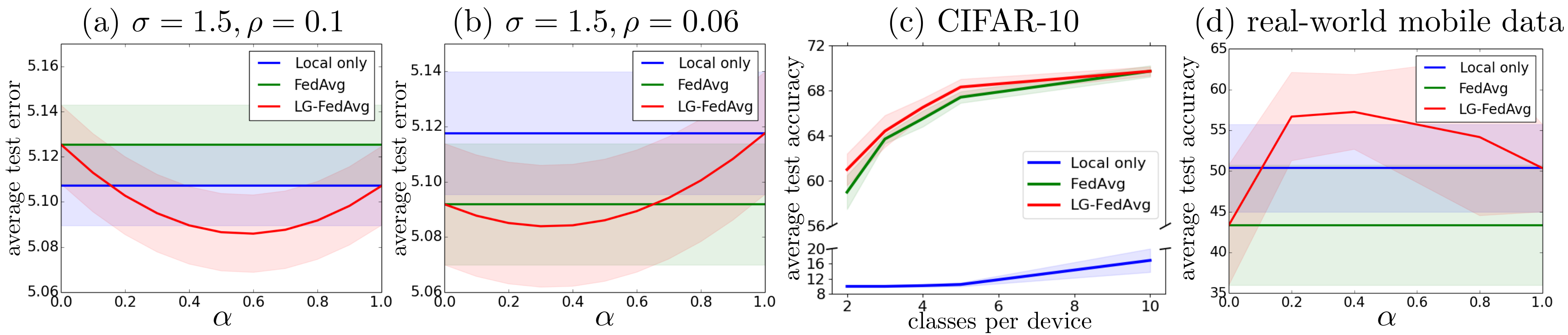}
    \caption{\small Test error (with shaded std dev) on synthetic data when local models perform better (plot (a): $\sigma=1.5,\rho=0.1$) and when global models perform better (plot (b): $\sigma=1.5,\rho=0.06$). For both settings, using an $\alpha$-interpolation of local and global models performs better than either extremes. (c): We also verify these theoretical findings on increasing device variance when splitting CIFAR-10 (fewer classes per device), where \ours\ consistently outperforms local only and \fed. (d): On predicting personalized moods from real-world private mobile data, an $\alpha$-split across local and global models outperforms either extremes.\vspace{-2mm}}
    \label{test_plot}
\end{figure}

\vspace{-2mm}
\subsection{Verifying Theoretical Analysis}
\label{theory}
\vspace{-1mm}

%\subsubsection{Image Recognition}

%MNIST, CIFAR-10, a synthetic dataset to test:

%1) the effects of the local-global split on train test performance,

%2) the difference between setting 1 (train test split per device, so test time is trivial, just put it through the correct local model) and setting 2 (train test split is across devices, so test time is harder, we need to put it through all the local models and ensemble the final result).

%3) study the trade-offs between accuracy and communication cost as a function of the local-global split

\textbf{Synthetic Data:} We first experiment on synthetic data to verify our theoretical analysis on ensembling local and global models. Data on device $m$ is generated by $\mathbf{x} \sim \mathcal{U}[-1.0, 1.0]$ and teacher weights $\mathbf{u}_m = \mathbf{v} + \mathbf{r}_m$ are sampled as $\mathbf{v} \sim \mathcal{U}[0.0, 1.0]$, $\mathbf{r}_m \sim \mathcal{N}(\mathbf{0}_d, \rho^2 \mathbf{I}_d)$, where $\rho^2$ represents device variance. Labels are observed with noise, $y = \mathbf{u}_m^\top \mathbf{x} + \epsilon$, $\epsilon \sim \mathcal{N}(0, \sigma^2)$, where $\sigma^2$ represents data variance. We plot the average test error when local models perform better due to higher device variance (Figure~\ref{test_plot}(a), $\sigma=1.5,\rho=0.1$) and when global models perform better due to lower device variance (Figure~\ref{test_plot}(b), $\sigma=1.5,\rho=0.06$). In Appendix~\ref{synthetic_supp}, we discuss the performance of \ours\ under several other variance settings. For all settings, using an $\alpha$-interpolation of both local and global models performs either close to the optimal extremes or better than either extreme.

\textbf{CIFAR-10:} Next, we verify our theory on deep networks over complex image classification problems using CIFAR-10. We focus on a highly \textit{non-i.i.d.} setting and follow the experimental design in~\cite{DBLP:journals/corr/McMahanMRA16} by assigning examples from at most $s \in \{2,3,4,5,10\}$ classes to each device.
%We partition the training data by sorting the dataset by labels and dividing it into $200$ shards of size $250$. We randomly assign $s \in \{2,3,4,5,10\}$ shards to $100$ devices so that each device has at most examples of $s$ classes.
The value of $s$ simulates device variance: $s=2$ represents highest device variance while $s=10$ represents an i.i.d. split of labels to devices. From Figure~\ref{test_plot}(c), we observe that \ours\ consistently outperforms local only and \fed. The performance gap is higher as device variance increases, which supports our theory that local models deal with high device variance.

\definecolor{gg}{RGB}{15,125,15}
\definecolor{rr}{RGB}{190,45,45}

\begin{table*}[!tbp]
\fontsize{8.5}{11}\selectfont
\centering
\caption{\small Comparison of federated learning methods on CIFAR-10 with non-iid split over devices. We report accuracy under both local test and new test settings as well as the total number of parameters communicated across training iterations. Best results in \textbf{bold}. \ours \ outperforms \fed\ and \textsc{MTL} under local test and achieves similar performance under new test while using around $50\%$ of the total parameters, and outperforms all using the same number of parameters. Mean and standard deviation are computed over $10$ runs.}
\setlength\tabcolsep{1.0pt}
\begin{tabular}{l l || c c c c c}
\Xhline{3\arrayrulewidth}
Data & \multirow{1}{*}{Method} &  \multirow{1}{*}{Local Test Acc. $(\uparrow)$} & \multicolumn{1}{c}{New Test Acc. $(\uparrow)$} & \multirow{1}{*}{FedAvg Rounds} & \multirow{1}{*}{LG Rounds} & \multirow{1}{*}{Params Comm. $(\downarrow)$} \\
\Xhline{0.5\arrayrulewidth}
%\multirow{5}{*}{\rotatebox{90}{MNIST}} & \fed~\citep{DBLP:journals/corr/McMahanMRA16} & $98.2 \pm 0.1$ & $\mathbf{98.2 \pm 0.1}$ & $800$ & $0$ & $5.6 \times 10^{10}$ \\
%& Local only~\citep{DBLP:journals/corr/SmithCST17} & $98.7 \pm 0.4$ & $30.4 \pm 7.9$ & $0$ & $0$ & $0$ \\
%& \textsc{MTL}~\citep{DBLP:journals/corr/SmithCST17} & $98.8 \pm 0.2$ & $17.53 \pm 4.0$ & $400$ & $0$ & $2.5 \times 10^{10}$ \\
%& \ours\ (ours) & $\mathbf{98.8 \pm 0.1}$ & $97.7 \pm 0.1$ & $400$ & $100$ & $\mathbf{2.9 \times 10^{10}}$ \\
%& \ours\ (ours) & $98.5 \pm 0.1$ & $98.2 \pm 0.1$ & $800$ & $100$ & $5.7 \times 10^{10}$ \\
%\Xhline{0.5\arrayrulewidth}
\multirow{5}{*}{\rotatebox{90}{CIFAR-10}} & \fed~\citep{DBLP:journals/corr/McMahanMRA16} & $58.99 \pm 1.50$ & $58.99 \pm 1.50$ & $1800$ & $0$ & $12.7 \times 10^{9}$ \\
& Local only~\citep{DBLP:journals/corr/SmithCST17} & $87.93 \pm 2.14$ & $10.03 \pm 0.06$ & $0$ & $0$ & $0$ \\
& \textsc{MTL}~\citep{DBLP:journals/corr/SmithCST17} & $89.68 \pm 0.75$ & $10.06 \pm 0.11$ & $1800$ & $0$ & $12.0 \times 10^{9}$ \\
& \ours\ (ours) & $\mathbf{91.07 \pm 0.50}$ & $57.95 \pm 1.48$ & $1200$ & $100$ & $\mathbf{8.5 \times 10^{9}}$ \\
& \ours\ (ours) & $\mathbf{91.77 \pm 0.56}$ & $\mathbf{60.79 \pm 1.45}$ & $1800$ & $100$ & $12.7 \times 10^{9}$ \\
\Xhline{3\arrayrulewidth}
\end{tabular}
\label{cifar10}
\vspace{-6mm}
\end{table*}

\vspace{-2mm}
\subsection{Model Performance \& Communication Efficiency}
\label{efficient}
\vspace{-1mm}

\textbf{CIFAR-10:} We compare our approach with existing federated learning methods with respect to model performance and communication. We randomly assign each device to examples of two classes (highly unbalanced). We consider two settings during testing: 1) \textit{Local Test}, where we know which device the data belongs to (i.e. new predictions on an existing device) and choose that particular trained local model. For this setting, we split each device's data into train, validation, and test data, similar to~\cite{DBLP:journals/corr/SmithCST17}. 2) \textit{New Test}, in which we do not know which device the data belongs to (i.e. new predictions on new devices)~\cite{DBLP:journals/corr/McMahanMRA16}, so we use an ensemble approach by averaging all trained local model logits before choosing the most likely class~\cite{breiman1996bagging}. For ensembling, all local model weights are sent to the server \textit{only once} and averaged. We include this parameter exchange step and still show substantial communication improvement. We find that averaging model weights performs similarly ($0.5\%$ for CIFAR) to averaging model outputs since deep ReLU nets are mostly linear). We choose LeNet-5~\cite{Lecun98gradient-basedlearning} as our base model and we compare 4 methods: 1) \fed~\cite{DBLP:journals/corr/McMahanMRA16} which is the traditional federated learning approach, 2) Local only~\citep{DBLP:journals/corr/SmithCST17} as an extreme setting with only local models, 3) \textsc{MTL}~\citep{DBLP:journals/corr/SmithCST17} which trains local models with parameter sharing in a multi-task fashion, and 4) \ours\ which is our proposed method with local and global models.

The results in Table~\ref{cifar10} show that \textbf{\ours\ gives strong performance with low communication cost}. For CIFAR local test, \ours\ significantly outperforms \fed\ since local models allow us to better model the local device data distribution. For new test, \ours\ achieves similar performance to \fed\ while using around $50\%$ of the total parameters, and better performance with the same number of parameters. \ours\ also outperforms using local models only and local models trained with multitask learning (\textsc{MTL}). This shows that our end-to-end training strategy for local and global models is particularly suitable for large neural networks. We show MNIST results and sensitivity analysis wrt data and model splits in Appendix~\ref{efficient_supp} and find similar observations.

\definecolor{gg}{RGB}{15,125,15}
\definecolor{rr}{RGB}{190,45,45}

\begin{table*}[!tbp]
\fontsize{8.5}{11}\selectfont
\centering
\vspace{-4mm}
\caption{Comparison of \fed \ and \ours \ methods on VQA on non-i.i.d. device split setting.
%We report the number of rounds required to reach our goal accuracy of 40\%.
\ours\ achieves strong performance while using fewer communicated parameters.\vspace{-2mm}}
\setlength\tabcolsep{3.0pt}
\begin{tabular}{l l || c c c c c}
\Xhline{3\arrayrulewidth}
Data & Method     &  Local Test Acc. $(\uparrow)$ & FedAvg Rounds & LG Rounds & Params Communicated $(\downarrow)$ \\
\Xhline{0.5\arrayrulewidth}
\multirow{2}{*}{VQA} & \fed~\cite{DBLP:journals/corr/McMahanMRA16} & $40.02$ & $47$ & $0$ & $13.97 \times 10^{10}$ \\
& \ours\ (ours) & $\mathbf{40.94}$ & $32$ & $17$ & $\mathbf{9.99 \times 10^{10}}$ \\
\Xhline{3\arrayrulewidth}
\end{tabular}
\label{vqa}
\vspace{-6mm}
\end{table*}

\textbf{Visual Question Answering (VQA):} VQA is a large-scale multimodal benchmark with $0.25$M images, $0.76$M questions, and $10$M answers~\cite{VQA}. We split the dataset in a non-i.i.d. manner and evaluate the accuracy under the local test setting. We use LSTM~\cite{hochreiter1997long} and ResNet-18~\cite{DBLP:journals/corr/HeZRS15} unimodal encoders as our local models and a global model which performs early fusion~\cite{DBLP:journals/corr/abs-1906-02125} of text and image features for answer prediction. In Table \ref{vqa}, we observe that \ours\ reaches a goal accuracy of $40\%$ while requiring lower communication costs (more VQA results and details in Appendix~\ref{vqa_supp}).

\vspace{-2mm}
\subsection{Learning Personalized Mood Predictors from Mobile Data}
\label{mood}
\vspace{-1mm}

\textbf{Data:} We designed and collected a new dataset, Mobile Assessment for the Prediction of Suicide (MAPS), to determine real-time indicators of suicide risk in adolescents aged $13-18$ years. This study monitors $100$ adolescents including individuals who have recently attempted suicide, individuals who experience suicidal ideation, and psychiatric controls. Across a duration of $6$ months, data was collected from each participant's smartphone using a keyboard logger which tracks all typed words. Participants were asked to rate their mood for the previous day on a scale ranging from $1-100$, with higher scores indicating a better mood. All users have given consent for their mobile device data to be collected and shared with us for research purposes. MAPS is a realistic federated learning benchmark since it contains real-world data with privacy concerns and high device variance due to highly personalized use of mobile phones. We used a preliminary preprocessed version containing $572$ samples across $14$ participants. We discretize the scores into $5$ bins for $5$-way classification. We use a random $80/10/10$ split for training/validation/testing, conduct all experiments $10$ times, and report the average accuracy and standard deviation (details in Appendix~\ref{mood_supp}).

\textbf{Models:} To assess how mobile text data can be used to make personalized mood predictions, we train a MLP classifier on top of a Bi-LSTM encoder on word embeddings.
%The Bi-LSTM has 128 hidden units and the MLP has two hidden layers, each with size 512.
In addition to local only and \fed, we test \ours\ across different splits of local and global model layers (i.e. $\alpha \in \{ 0.2, 0.4, 0.6, 0.8\}$) while keeping the total parameter count constant.
%while adjusting the assignment of local and global layers.

%We conduct our experiments over 10 iterations. Within each iteration, 
%We train and validate our model 5 times on this split and select the model that performs best on the validation set. We use the test accuracy of this best-performing model as the test accuracy for the iteration.
\textbf{Results:} From Figure~\ref{test_plot}(d), consistent with our theoretical findings, an $\alpha$-split across local and global models leverages both personalized representations per devices as well as statistical strength sharing through data across all devices, \textbf{outperforming either local or global extremes}.

\vspace{-2mm}
\subsection{Heterogeneous Data in an Online Setting}
\label{hetero}
\vspace{-1mm}

\definecolor{gg}{RGB}{15,125,15}
\definecolor{rr}{RGB}{190,45,45}

\begin{table*}[!tbp]
\fontsize{8.5}{11}\selectfont
\centering
\vspace{-6mm}
\caption{\small What happens when \fed \ trained on 100 devices of normal MNIST sees a device with rotated MNIST? Catastrophic forgetting, unless one fine-tunes again on training devices and incur high communication cost. \ours \ relieves catastrophic forgetting by using local models to perform well on both online rotated and regular MNIST, with $(C=0.1)$ and without $(C=0.0)$ fine-tuning.\vspace{-2mm}}%Mean and standard deviation are computed over $10$ runs.}
\setlength\tabcolsep{6.0pt}
\begin{tabular}{l l c || c c | c c }
\Xhline{3\arrayrulewidth}
\multirow{2}{*}{Data} & \multirow{2}{*}{Method} & \multirow{2}{*}{$C$} & \multicolumn{2}{c|}{{i.i.d. device data}} & \multicolumn{2}{c}{{non-i.i.d. device data}}\\
& & & Normal $(\uparrow)$ & Rotated $(\uparrow)$ & Normal $(\uparrow)$ & Rotated $(\uparrow)$ \\
\Xhline{0.5\arrayrulewidth}
\multirow{2}{*}{MNIST} & \multirow{1}{*}{\fed}~\cite{DBLP:journals/corr/McMahanMRA16} & $0.0$ & $32.0 \pm 6.2$ & $91.8 \pm 3.0$ & $35.7 \pm 4.3$ & $93.6 \pm 0.3$ \\
& \multirow{1}{*}{\ours} (ours) & $0.0$ & $\mathbf{96.6 \pm 0.9}$ & $\mathbf{92.9 \pm 2.7}$ & $\mathbf{96.3 \pm 0.3}$ & $\mathbf{94.1 \pm 0.7}$ \\
\Xhline{0.5\arrayrulewidth}
\multirow{3}{*}{MNIST} & \multirow{1}{*}{\fed}~\cite{DBLP:journals/corr/McMahanMRA16} & $0.1$ & $97.4 \pm 0.3$ & $89.3 \pm 0.8$ & $96.9 \pm 0.5$ & $89.6 \pm 0.6$ \\
%& \textsc{MTL}~\citep{DBLP:journals/corr/SmithCST17} & $0.1$ & \\
& \multirow{1}{*}{\fedprox}~\cite{DBLP:journals/corr/abs-1812-06127} & $0.1$ & $94.8 \pm 1.1$ & $87.2 \pm 0.7$ & $97.9 \pm 0.1$ & $91.6 \pm 0.2$ \\
& \multirow{1}{*}{\ours} (ours) & $0.1$ & $\mathbf{97.7 \pm 0.8}$ & $\mathbf{93.2 \pm 1.3}$ & $\mathbf{98.2 \pm 0.7}$ & $\mathbf{93.9 \pm 1.4}$ \\
\Xhline{3\arrayrulewidth}
\end{tabular}
\label{mnist_rotated}
\vspace{-2mm}
\end{table*}

\textbf{Data:} We test whether \ours\ can handle heterogeneous data from a new source introduced during testing. We split MNIST across $100$ devices in both an {i.i.d.} and {non-i.i.d.} setting, then introduce a new device with $3,000$ training and $500$ test MNIST examples but \textit{rotated $90$ degrees}. This simulates a drastic change in the data distribution.

\textbf{Models:} We consider 3 methods: 1) \fed,
%train a global model on the original $100$ devices, and when a new device comes, update the same global model, 
2) \textsc{FedProx}~\cite{DBLP:journals/corr/abs-1812-06127}: a method designed specifically for heterogeneous data by regularizing the local updates to reduce overfitting to local devices, and 3) \ours: train on the original $100$ devices, and when a new device comes, learn local representations before fine-tuning the global model. We hypothesize that good local models can ``unrotate'' images from the new device to better match the data distribution seen by the global model. When learning on the new device, we also retrain on a fraction $C$ of the original devices:
%We first train on the original $100$ devices until we reach a test accuracy of $98\%$, then train for an additional $500$ rounds using the new device in addition to a fraction $C$ of the original devices for fine-tuning:
$C=0.0$ implies no fine-tuning and $C=0.1$ implies some fine-tuning ($C=1.0$ implies retraining on all data which is impractical).

%Each round after the new device is introduced, we always include the new device in each round. However, we vary the fraction of the original devices sampled for fine-tuning (denoted as $C$) by considering the case when $C=0.0$ (no fine-tuning) and $C=0.1$ (some fine-tuning). Note that $C=1.0$ implies completely retraining on all data, which is impractical.
\textbf{Results:} We report results in Table~\ref{mnist_rotated} and observe that: 1) \textbf{\fed\ suffers from catastrophic forgetting}~\cite{pmlr-v80-serra18a} without fine-tuning ($C=0.0$), in which the global model can perform well on the new device's rotated MNIST $(92\%)$ but completely forgets how to classify regular MNIST $(32\%)$. Only after fine-tuning ($C=0.1$) does the performance on both regular and rotated MNIST improve, but this requires more communication over the $100$ devices. 2) \textbf{\ours\ with local models relieves catastrophic forgetting}. Augmenting local models indeed helps to improve online performance on rotated MNIST $(93\%)$ while allowing the global model to retain performance on regular MNIST $(97\%)$, outperforming both \fed\ and \fedprox. We believe \ours\ achieves these results by learning a strong local representation that requires fewer updates from the trained global model.

%at the cost of performing poorly on the original device's training examples.

%We observe several different trade-offs when using each method. First, we notice the catastrophic forgetting~\cite{pmlr-v80-serra18a,DBLP:journals/corr/KirkpatrickPRVD16,Robins95catastrophicforgetting} phenomenon when setting $C=0$, in which the methods are able to perform well on the new device's dataset but at the cost of performing poorly on the original device's training examples. 

%Crucially, we do not observe this issue when using \textsc{FedAvg} + LG which we believe is due to the fact that \ours\ has learnt a strong local model which therefore requires fewer updates from the global model to reach good performance. In the case where $C=0.1$, we see that LG + LG performs better on the heterogeneous data. However in the {i.i.d.} case, it performs worse that \fed\ on the non-rotated images. We hypothesize that using \ours, each device has trouble learning strong local representations in the {i.i.d.} case where it trains on only 60 images per digit on average. Lastly, we note that \textsc{FedAvg} + LG performs the best across all settings.

%We compare to the following settings:

%1) Vanilla \textsc{FedAvg} trained across training devices

%Of course, we could simply dodo without local representations and simply fine-tune the global model

%Catastrophic Forgetting~\cite{pmlr-v80-serra18a,DBLP:journals/corr/KirkpatrickPRVD16,Robins95catastrophicforgetting}

\definecolor{gg}{RGB}{15,125,15}
\definecolor{rr}{RGB}{190,45,45}

\begin{table*}[!tbp]
\fontsize{8.5}{11}\selectfont
\centering
\vspace{-2mm}
\caption{Enforcing independence with respect to protected attributes \textit{race} and \textit{gender} on income prediction with the UCI dataset. \ours$+$Adv uses local models with adversarial (adv) training to remove information about protected attributes, at the expense of a small drop in classifier (class) accuracy of around $2-4\%$.\vspace{-2mm}}%Mean and standard deviation are computed over $10$ runs.}
\setlength\tabcolsep{0.8pt}
\begin{tabular}{l l || c c c | c c c}
\Xhline{3\arrayrulewidth}
\multirow{2}{*}{Data} & \multirow{2}{*}{Method} & \multicolumn{3}{c|}{{i.i.d. device data}} & \multicolumn{3}{c}{{non-i.i.d. device data}} \\
& & Class Acc $(\uparrow)$ & Class AUC $(\uparrow)$ & Adv AUC $(\downarrow)$ & Class Acc $(\uparrow)$ & Class AUC $(\uparrow)$ & Adv AUC $(\downarrow)$ \\
\Xhline{0.5\arrayrulewidth}
\multirow{3}{*}{UCI} & \multirow{1}{*}{\fed}~\citep{DBLP:journals/corr/McMahanMRA16} & $83.7 \pm 3.1$ & $89.4 \pm 1.9$ & $65.5 \pm 1.6$& $83.7 \pm 1.8$ & $88.7 \pm 1.2$ & $64.1 \pm 2.1$ \\
& \multirow{1}{*}{\ours} & $84.3 \pm 2.4$ & $89.0 \pm 2.2$ & $63.3 \pm 3.7$ & $81.1 \pm 1.6$ & $84.4 \pm 2.4$ & $62.7 \pm 2.5$ \\
& \multirow{1}{*}{\ours$+$Adv} & $82.1 \pm 1.0$ & $85.7 \pm 1.7$ & $\mathbf{50.1 \pm 1.3}$ & $80.1 \pm 2.0$ & $84.1 \pm 2.3$ & ${\fontseries{b}\selectfont 49.8 \pm 2.2}$ \\
\Xhline{3\arrayrulewidth}
\end{tabular}
\label{res:fair}
\vspace{-6mm}
\end{table*}

\vspace{-2mm}
\subsection{Learning Fair Representations}
\label{fair}
\vspace{-1mm}

\textbf{Data:} We examine whether local models can be trained to protect private attributes from the global model. We use the UCI adult dataset~\cite{Kohavi96scalingup} to predict whether an individual makes more than $50$K per year based on their personal attributes. However, we want our models to be invariant to the sensitive attributes of \textit{race} and \textit{gender} instead of picking up on correlations that could exacerbate biases.
%The dataset contains $15,470$ instances each in training and testing which we take the first $15,000$ for easier splitting. We set the number of devices to be $10$ and split the dataset in two ways. For the i.i.d. setting we uniformly sample a device for each train and test point, and for the non-i.i.d. setting we choose $100$ shards of $150$ data points each to obtain imbalanced devices.

\textbf{Models:} We adapt adversarial learning to remove protected attributes from local models (see Appendix~\ref{alg:fair}. Specifically, we aim to learn fair local representations from which a fully trained adversarial network should \textit{not} be able to predict the protected attributes.
%We use neural networks for all models with details in Appendix~\ref{fair_supp}.
We report three methods: 1) \fed\ with only a global model and global adversary both updated using \fed. The global model is not trained with the adversarial loss since it is simply not possible: once local device data passes through the global model, privacy is potentially violated. 2) \ours\ without penalizing the adversarial network, and 3) \ours$+$Adv which jointly trains local, global, and adversary models to learn fair local representations before global prediction.

\textbf{Results:} We report results according to: 1) classifier binary accuracy, 2) classifier ROC AUC score, and 3) adversary ROC AUC score. The classifier metrics should be as close to $100\%$ as possible while the adversary should be as close to $50\%$ as possible. From Table~\ref{res:fair}, \ours$+$Adv \textbf{learns fair local representations that are unable to predict protected attributes} ($\sim50\%$ adversary AUC) with \textbf{only a small drop in global accuracy} $(\sim4\%)$. In order to ensure that poor adversary AUC was indeed due to fair representations instead of a poorly trained adversary, we train a post-fit classifier from local representations to protected attributes and achieve similar random results.

\vspace{-2mm}
\section{Conclusion}
\vspace{-1mm}

We proposed \ours\ combining \textit{local representation learning} with federated training of global models. Our theoretical analysis shows that an ensemble of local and global models reduces both data variance and device variance. On a suite of real-world datasets, \ours\ achieves strong performance while reducing communication costs, learns personalized models, better deals with heterogeneous data, and effectively learns fair representations that obfuscate protected attributes.

%As a result, the global model can now be a smaller model that only operates on higher-level dense representations. We show that our proposed method achieves superior or competitive results as compared to vanilla federated averaging on a suite of real-world datasets spanning image recognition (MNIST, CIFAR) and multimodal learning (VQA) in a federated setting. Our choice of local representation learning also reduces the number of parameters and updates that need to be communicated to and from the global model, thereby reducing the bottleneck in terms of communication cost. Finally, we show that our local models provide flexibility in dealing with online \textit{heterogeneous} data and can be easily modified to learn \textit{fair} representations that obfuscate protected attributes such as race, age, and gender, a feature crucial to preserving the privacy of on-device data. We hope that our work will inspire future research on efficient and privacy-preserving federated learning.

\vspace{-2mm}
\section*{Acknowledgements}
\vspace{-1mm}

PPL and LM were partially supported by the National Science Foundation (Awards \#1750439, \#1722822) and National Institutes of Health. RS was supported in part by NSF IIS1763562, Office of Naval Research N000141812861, and Google focused award. Any opinions, findings, and conclusions or recommendations expressed in this material are those of the author(s) and do not necessarily reflect the views of National Science Foundation or National Institutes of Health, and no official endorsement should be inferred. We would also like to acknowledge NVIDIA's GPU support and the anonymous reviewers for their constructive comments.

\vspace{-2mm}
\section*{Broader Impacts}
\vspace{-1mm}

Federated learning provides tools for large-scale distributed training at unprecedented scales but at the same time requires more research on its implications to society and policy.

\textbf{Broader applications:} By 2025, it is estimated that there will be more than to 75 billion IoT (Internet of Things) devices all connected to the internet and sharing data with each other~\citep{DBLP:journals/corr/abs-1903-05266}. Organizing, processing, and learning from device data will use federated learning techniques. It has already been shown to be a promising approach for applications such as learning the social activities of mobile phone users, early forecasting of health events like heart attack risks from wearable devices, and localization of pedestrians for autonomous vehicles~\citep{li2019federated}. The societal impacts revolving around more invasive federated learning technologies have to be taken into account as we design future systems that increasingly leverage distributed mobile data.

\textbf{Applications in mental health:} Suicide is the second leading cause of death among adolescents. In addition to deaths, 16\% of high school students report seriously considering suicide each year, and 8\% make one or more suicide attempts (CDC, 2015). Despite these alarming statistics, there is little consensus concerning imminent risk for suicide~\citep{franklin2017risk,large2017patient}. Given the impact of suicide on society, there is an urgent need to better understand the behavior markers related to suicidal ideation.

``Just-in-time'' adaptive interventions delivered via mobile health applications provide a platform of exciting developments in low-intensity, high-impact interventions~\citep{nahum2018just}. The ability to intervene precisely during an acute risk for suicide could dramatically reduce the loss of life. To realize this goal, we need accurate and timely methods that predict when interventions are most needed. Federated learning is particularly useful in monitoring (with participants' permission) mobile data to assess mental health and provide early interventions. Our data collection, experimental study, and computational approaches provide a step towards data intensive longitudinal monitoring of human behavior. However, one must take care to summarize behaviors from mobile data without identifying the user through personal (e.g., personally identifiable information) or protected attributes (e.g., race, gender). This form of anonymity is critical when implementing these suicide detection technologies in real-world scenarios. Our goal is to be highly predictive of STBs while remaining as privacy-preserving as possible. We outline some of the potential privacy and security concerns below and show some possibilities brought about from the flexibility of our local models.

\textbf{Privacy:} There are privacy risks associated with making predictions from mobile data. Although federated learning only keeps data private on each device without sending it to other locations, the presence of one's data during distributed model training will likely affect model predictions. Therefore it is crucial to obtain user consent before collecting device data. In our experiments with real-world mobile data, all participants have given consent for their mobile device data to be collected and shared with us for research purposes. All data was anonymized and stripped of all personal (e.g., personally identifiable information) and protected attributes (e.g., race, gender).

\textbf{Security:} Communicating model updates throughout the training process could possibly reveal sensitive information, either to a third-party, or to the central server. Federated learning is also particularly sensitive to external security attacks from adversaries~\citep{lyu2020threats}. Recent methods to increase the security of federated learning systems come at the cost of reduced performance or efficiency. We believe that our proposed local-global models make federated learning more interpretable and flexible since local models can be appropriately adjusted to be more secure. However, there is a lot more work to be done in these directions, starting by accurately quantifying the trade-offs between security, privacy and performance in federated learning~\citep{li2019federated}.

\textbf{Social biases:} We acknowledge that there is a risk of exposure bias due to imbalanced datasets, especially when personal mobile data is involved. Models trained on biased data have been shown to amplify the underlying social biases especially when they correlate with the prediction targets~\citep{DBLP:journals/corr/abs-1809-07842}. Our experiment showcased one example of maintaining fairness via adversarial training, but leaves room for future work in exploring other methods tailored for specific scenarios (e.g. debiasing words~\citep{bolukbasi2016man}, sentences~\citep{debiasing}, and images~\citep{10.1145/3209978.3210094}). These methods can be easily applied to local representations before input into the global model. Future research should also focus on quantifying the trade-offs between bias and performance~\citep{DBLP:journals/corr/abs-1906-08386}.

Overall, we believe that our proposed approach can help quantify the tradeoffs between local and global models regarding performance, communication, privacy, security, and fairness. Its flexibility also offers several exciting directions of future work in ensuring privacy and fairness of local representations. We hope that this brings about future opportunities for large-scale real-time analytics in healthcare and transportation using federated learning.

%\vspace{-2mm}
%\section*{Acknowledgements}
%\vspace{-2mm}

%PPL and LPM were supported in part by the National Science Foundation (Awards \#1750439, \#1722822) and National Institutes of Health. RS was supported in part by DARPA SAGAMORE HR00111990016, AFRL CogDeCON FA875018C0014, and NSF IIS1763562. Any opinions, findings, and conclusions or recommendations expressed in this material are those of the author(s) and do not necessarily reflect the views of the National Science Foundation, National Institutes of Health, DARPA, and AFRL, and no official endorsement should be inferred. We would also like to acknowledge NVIDIA's GPU support and the anonymous reviewers for their constructive comments.

{\small
\bibliography{main}
\bibliographystyle{plain}
}

\clearpage
\onecolumn 
\appendix

\section*{Appendix}

\vspace{-1mm}
\section{Theoretical Analysis}
\label{sec:theory}
\vspace{-1mm}

\addtocounter{theorem}{-2}
\addtocounter{proposition}{-4}
\addtocounter{corollary}{-2}

In this section, we restate our main results and provide details proofs for them. We start with the short comment that, for the federated linear regression problem we are solving, gradient descent converges to the same solution as the commonly used ridgeless linear regression solution, as is shown in \cite{hastie2019surprises}. This means that $\hat{\mathbf{u}}$ converges to $\mathbf{u}$ and $\hat{\mathbf{v}}$ converges to $\mathbf{v}$ in expectation. We also assume in our analysis that $\frac{d}{N}$ is small and can be summarized in the big-$O$ notation; however, this does not mean that the term $\frac{d}{N}\sigma^2$ is small because the noise present in the dataset might be a function of $N$. For example, in the well-studied label noise literature, the noise rate $\sigma^2$ is often proportional to $N$, cancelling out the $N$ term in the denominator~\cite{ziyin2020learning}. In practice, this happens when the dataset has a fixed probability of wrong labels.
 
\begin{theorem}
    The generalization loss for federated learning can be decomposed as
    \begin{equation}
    \label{eq: standard decomposition_supp}
        \mathcal{E} = \mathbb{E}_{\mathbf{x}, \mathbf{r}_m, \epsilon}[\hat{\mathcal{E}}_m] = \text{Var}[\hat{f}] + b^2
    \end{equation}
    where $\text{Var}[\hat{f}] = \mathbb{E}_{\mathbf{x}, \mathbf{r}_m} \left[\text{Var}_{\epsilon}[\hat{f}| \mathbf{x}, \mathbf{r}_m] \right]$ is the variance, and $b^2 = \mathbb{E}\left[\left( f_{\mathbf{u}_m} - \mathbb{E}_\epsilon f(\hat{\mathbf{v}}, \hat{\mathbf{u}}_{m})\right)^2\right]$ is the bias.
\end{theorem}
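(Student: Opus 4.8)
The plan is to prove the decomposition by the standard add-and-subtract-the-mean argument; the only real work is bookkeeping which source of randomness each expectation runs over. Write $\hat f = f(\mathbf{x};\hat{\mathbf{v}},\hat{\mathbf{u}}_m)$ for the trained model's prediction at an input $\mathbf{x}$. By construction the fitted parameters $\hat{\mathbf{u}}_m$ and $\hat{\mathbf{v}}$ are functions of the training labels, hence of the training noise $\epsilon$, whereas the teacher output $f_{\mathbf{u}_m}(\mathbf{x})$ depends only on $\mathbf{x}$ and on the device shift $\mathbf{r}_m$. Thus, conditionally on $(\mathbf{x},\mathbf{r}_m)$, the residual randomness in $\hat f$ comes entirely from $\epsilon$, and $f_{\mathbf{u}_m}(\mathbf{x})$ is a constant.

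First I would condition on $(\mathbf{x},\mathbf{r}_m)$ and decompose the conditional risk. Inserting $\pm\,\mathbb{E}_\epsilon[\hat f\mid\mathbf{x},\mathbf{r}_m]$ inside the square and expanding, the cross term $2\,\mathbb{E}_\epsilon\big[(\hat f-\mathbb{E}_\epsilon[\hat f])(\mathbb{E}_\epsilon[\hat f]-f_{\mathbf{u}_m})\mid\mathbf{x},\mathbf{r}_m\big]$ vanishes, because given $(\mathbf{x},\mathbf{r}_m)$ the factor $\mathbb{E}_\epsilon[\hat f\mid\mathbf{x},\mathbf{r}_m]-f_{\mathbf{u}_m}(\mathbf{x})$ is deterministic and pulls out of $\mathbb{E}_\epsilon$, leaving $\mathbb{E}_\epsilon[\hat f-\mathbb{E}_\epsilon[\hat f]\mid\mathbf{x},\mathbf{r}_m]=0$. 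What remains is
\[
\mathbb{E}_\epsilon\big[(\hat f-f_{\mathbf{u}_m})^2\,\big|\,\mathbf{x},\mathbf{r}_m\big]
= \text{Var}_\epsilon[\hat f\mid\mathbf{x},\mathbf{r}_m] + \big(\mathbb{E}_\epsilon[\hat f\mid\mathbf{x},\mathbf{r}_m]-f_{\mathbf{u}_m}(\mathbf{x})\big)^2 .
\]

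Second I would take the outer expectation over $(\mathbf{x},\mathbf{r}_m)$ of this conditional identity. Since, by the tower property, $\mathcal{E}=\mathbb{E}_{\mathbf{x},\mathbf{r}_m,\epsilon}[(\hat f-f_{\mathbf{u}_m})^2]=\mathbb{E}_{\mathbf{x},\mathbf{r}_m}\big[\mathbb{E}_\epsilon[(\hat f-f_{\mathbf{u}_m})^2\mid\mathbf{x},\mathbf{r}_m]\big]$, substituting gives $\mathcal{E}=\mathbb{E}_{\mathbf{x},\mathbf{r}_m}[\text{Var}_\epsilon[\hat f\mid\mathbf{x},\mathbf{r}_m]]+\mathbb{E}_{\mathbf{x},\mathbf{r}_m}[(\mathbb{E}_\epsilon[\hat f]-f_{\mathbf{u}_m})^2]$, which are exactly $\text{Var}[\hat f]$ and $b^2$ as named in the statement (the outer expectation in $b^2$ being taken over $\mathbf{x},\mathbf{r}_m$).

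The main obstacle — in fact the only delicate point — is the randomness bookkeeping: there are three independent sources ($\mathbf{x}$, the device shift $\mathbf{r}_m$, and the training noise $\epsilon$), and the decomposition must be performed with respect to $\epsilon$ alone, conditionally on $(\mathbf{x},\mathbf{r}_m)$. That conditioning is precisely what kills the cross term and what justifies attaching the label ``variance'' to the $\epsilon$-fluctuation of $\hat f$ and ``bias'' to the residual systematic error introduced by the federated coupling of the local and global estimates (recall that a purely local fit is unbiased for $\mathbf{u}_m$, so the bias is entirely an artifact of estimating $\hat{\mathbf{v}}$ jointly). A minor related subtlety is the choice of regression target: here I take the noiseless teacher $f_{\mathbf{u}_m}$ as ground truth (matching the definition of $\hat{\mathcal E}_m$); if one instead measures against the noisy target $\tilde f_{\mathbf{u}_m}=f_{\mathbf{u}_m}+\epsilon$, an extra additive irreducible-noise term of order $\sigma^2$ appears and should be stated explicitly.
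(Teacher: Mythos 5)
Your proof is correct and takes essentially the same route as the paper: add and subtract the conditional mean $\mathbb{E}_\epsilon[\hat f\mid \mathbf{x},\mathbf{r}_m]$, observe that the cross term vanishes, and apply the tower property over $(\mathbf{x},\mathbf{r}_m)$. Your explicit conditioning, and your closing remark about measuring against the noiseless teacher $f_{\mathbf{u}_m}$ rather than $\tilde f_{\mathbf{u}_m}$, merely make precise what the paper leaves implicit in its unsubscripted expectations.
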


\begin{proof}
The \textit{generalization error} can be derived by taking expectation over the random variables:
\begin{align}
\label{eq: generalization_supp}
    \mathcal{E} = \mathbb{E}_{\mathbf{x}, \mathbf{r}_m, \epsilon} [\hat{\mathcal{E}}_m] = \mathbb{E}_{\mathbf{x}, \mathbf{r}_m, \epsilon} \left[\left( f(\mathbf{x}; \hat{\mathbf{v}}, \hat{\mathbf{u}}_{m}) - \tilde{f}_{\mathbf{u}_m}(\mathbf{x}) \right)^2\right].
\end{align}
which can further be manipulated to obtain a \textit{bias-variance decomposition} for federated learning:
\begin{align}
\label{eq: generalization_supp}
    \mathcal{E} &= \mathbb{E}_{\mathbf{x}, \mathbf{r}_m, \epsilon} [\hat{\mathcal{E}}_m] \\
    &= \mathbb{E}_{\mathbf{x}, \mathbf{r}_m, \epsilon} \left[\left( f(\mathbf{x}; \hat{\mathbf{v}}, \hat{\mathbf{u}}_{m}) - f_{\mathbf{u}_m}(\mathbf{x}) \right)^2\right]\\
    &= \mathbb{E} \left[\left( f(\hat{\mathbf{v}}, \hat{\mathbf{u}}_{m}) - \mathbb{E}f(\hat{\mathbf{v}}, \hat{\mathbf{u}}_{m}) - [f_{\mathbf{u}_m} - \mathbb{E}f(\hat{\mathbf{v}}, \hat{\mathbf{u}}_{m})] \right)^2\right]\\
    &= \mathbb{E}\left[\left( f(\hat{\mathbf{v}}, \hat{\mathbf{u}}_{m}) - \mathbb{E}f(\hat{\mathbf{v}}, \hat{\mathbf{u}}_{m})\right)^2\right] + 
    \mathbb{E}\left[\left( f_{\mathbf{u}_m} - \mathbb{E}f(\hat{\mathbf{v}} , \hat{\mathbf{u}}_{m})\right)^2\right] \\
    &= \underbrace{\mathbb{E}_{\mathbf{x}, \mathbf{r}_m}\left[\text{Var}_{\epsilon}[\hat{f}|\mathbf{x}, \mathbf{r}_m]\right]}_{\text{Var}[\hat{f}]\text{: variance of model}} + \underbrace{\mathbb{E}\left[\left( f_{\mathbf{u}_m} - \mathbb{E}_\epsilon f(\hat{\mathbf{v}} , \hat{\mathbf{u}}_{m})\right)^2\right]}_{b^2\text{: bias of model}} + \underbrace{0}_{\text{cross term}}
\end{align}
\end{proof}
where we have omitted $\mathbf{x}_i$ in the input to $f(\cdot)$ for notational conciseness. Using only local models results in an unbiased estimator of $\mathbf{u}_{m}$. The bias term arises when learning global model parameters since federated learning couples the estimation of both local and global parameters. The variance term comes from both the variance of both local and global parameter estimates.

As a simplified version, \ours\ can be seen as a ensemble of local and global models, i.e. $f(\mathbf{x}; \hat{\mathbf{v}}, \hat{\mathbf{u}}_{m}) = \alpha f_{\hat{\mathbf{u}}_m}(\mathbf{x}) + (1-\alpha) f_{\hat{\mathbf{v}}}(\mathbf{x})$. In this case, one can show that:
\begin{proposition}
\label{theo: bias-variance tradeoff_supp}
Let $\mathbb{E}_{\mathbf{r}_m,\epsilon}[f_{\hat{\mathbf{v}}}] = f_\mathbf{v}$, $\mathbb{E}_\epsilon[f_{\hat{\mathbf{u}}_m}] = f_{\mathbf{u}_m}$, and let $f(\mathbf{x}; \hat{\mathbf{v}}, \hat{\mathbf{u}}_{m}) = \alpha f_{\hat{\mathbf{u}}_m}(\mathbf{x}) + (1-\alpha) f_{\hat{\mathbf{v}}}(\mathbf{x})$, then equation~\ref{eq: standard decomposition_supp} can be written as
\begin{equation}
    \mathcal{E} = (1 - \alpha)^2 \delta^2 + \textup{Var}[\hat{f}]
\end{equation}
where $\delta^2 = \mathbb{E}_{\mathbf{x}, \mathbf{r}_m}[(f_\mathbf{u} - \mathbb{E}_{\epsilon}[f_\mathbf{v}])^2|\mathbf{r}_m]]$ measures the discrepancy between the local features and the global features, and thus measures the local variations across devices.
\end{proposition}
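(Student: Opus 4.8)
The plan is to treat the bias--variance decomposition of Theorem~1 as a black box and merely re-express its bias term under the ensemble parametrization. Theorem~1 gives $\mathcal{E} = \textup{Var}[\hat f] + b^2$ for \emph{any} predictor of the form $f(\mathbf{x};\hat{\mathbf{v}},\hat{\mathbf{u}}_m)$, with $\textup{Var}[\hat f] = \mathbb{E}_{\mathbf{x},\mathbf{r}_m}[\textup{Var}_\epsilon[\hat f\mid\mathbf{x},\mathbf{r}_m]]$ and $b^2 = \mathbb{E}[(f_{\mathbf{u}_m} - \mathbb{E}_\epsilon f(\hat{\mathbf{v}},\hat{\mathbf{u}}_m))^2]$. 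Since the ensemble $f(\mathbf{x};\hat{\mathbf{v}},\hat{\mathbf{u}}_m) = \alpha f_{\hat{\mathbf{u}}_m}(\mathbf{x}) + (1-\alpha)f_{\hat{\mathbf{v}}}(\mathbf{x})$ is just a particular instance of such a predictor, the variance term requires no work at all: it is literally the variance of the ensemble predictor and passes into the final formula unchanged as $\textup{Var}[\hat f]$. All of the content is therefore in evaluating $b^2$.

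For the bias term I would expand the inner (noise-conditional) expectation by linearity, $\mathbb{E}_\epsilon f(\hat{\mathbf{v}},\hat{\mathbf{u}}_m) = \alpha\,\mathbb{E}_\epsilon f_{\hat{\mathbf{u}}_m} + (1-\alpha)\,\mathbb{E}_\epsilon f_{\hat{\mathbf{v}}}$, and then invoke the hypothesis $\mathbb{E}_\epsilon[f_{\hat{\mathbf{u}}_m}] = f_{\mathbf{u}_m}$ (unbiasedness of the local fit, which is the ridgeless-regression fact recalled at the start of this appendix) to collapse the first summand to $\alpha f_{\mathbf{u}_m}$. This yields $f_{\mathbf{u}_m} - \mathbb{E}_\epsilon f(\hat{\mathbf{v}},\hat{\mathbf{u}}_m) = (1-\alpha)\bigl(f_{\mathbf{u}_m} - \mathbb{E}_\epsilon f_{\hat{\mathbf{v}}}\bigr)$; squaring and taking the outer expectation over $\mathbf{x},\mathbf{r}_m$ pulls out $(1-\alpha)^2$ and leaves exactly $\mathbb{E}_{\mathbf{x},\mathbf{r}_m}[(f_{\mathbf{u}_m} - \mathbb{E}_\epsilon f_{\hat{\mathbf{v}}})^2]$, which is precisely the quantity the proposition names $\delta^2$ (reading $f_\mathbf{u}$ as the device-$m$ teacher $f_{\mathbf{u}_m}$ and $\mathbb{E}_\epsilon[f_\mathbf{v}]$ as the noise-averaged global fit $\mathbb{E}_\epsilon f_{\hat{\mathbf{v}}}$). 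Substituting $b^2 = (1-\alpha)^2\delta^2$ back into $\mathcal{E} = \textup{Var}[\hat f] + b^2$ is the claimed identity.

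The only real care needed is bookkeeping of the nested expectations, not any estimate. One must keep straight that $\mathbb{E}_\epsilon[\cdot]$ inside $b^2$ is conditional on the draws $\mathbf{x}$ and $\mathbf{r}_1,\dots,\mathbf{r}_M$ — in particular $\mathbb{E}_\epsilon f_{\hat{\mathbf{v}}}$ still depends on every device's shift, since $\hat{\mathbf{v}}$ is fit across all devices — whereas the outer average producing $\delta^2$ is over $\mathbf{x}$ and $\mathbf{r}_m$ only; the $\mid\mathbf{r}_m$ decoration in the definition of $\delta^2$ is exactly this reminder of what has and has not been integrated out. It is also worth noting that the second stated hypothesis, $\mathbb{E}_{\mathbf{r}_m,\epsilon}[f_{\hat{\mathbf{v}}}] = f_\mathbf{v}$, is \emph{not} needed for this proposition; it is invoked only afterwards to specialize $\delta^2$ to $\frac{M-1}{M}\rho^2$ in the linear-teacher model, where $f_{\mathbf{u}_m}-\mathbb{E}_\epsilon f_{\hat{\mathbf{v}}}$ reduces to $(\mathbf{r}_m-\bar{\mathbf{r}})^\top\mathbf{x}$ with $\bar{\mathbf{r}}$ the average device shift, and the remaining second-moment calculation gives the stated value. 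With the conditioning tracked correctly, the whole argument is two lines of linearity of expectation.
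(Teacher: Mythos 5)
Your proposal is correct and matches the paper's own proof essentially step for step: both plug the ensemble form into the bias term of the Theorem~1 decomposition, use $\mathbb{E}_\epsilon[f_{\hat{\mathbf{u}}_m}] = f_{\mathbf{u}_m}$ to reduce it to $(1-\alpha)^2\,\mathbb{E}_{\mathbf{x},\mathbf{r}_m}[(f_{\mathbf{u}_m} - \mathbb{E}_\epsilon[f_{\hat{\mathbf{v}}}\mid\mathbf{r}_m])^2]$, and carry the variance term through unchanged. Your side observations --- that the hypothesis $\mathbb{E}_{\mathbf{r}_m,\epsilon}[f_{\hat{\mathbf{v}}}] = f_{\mathbf{v}}$ is only needed later to evaluate $\delta^2 = \frac{M-1}{M}\rho^2$, and that the conditioning on $\mathbf{r}_m$ must be tracked --- are accurate and consistent with how the paper proceeds in the subsequent corollary.
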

\begin{proof}
We plug in to get 
\begin{align}
    \mathbb{E}_{\mathbf{x}, \mathbf{r}_m}\left[\left( f_{\mathbf{u}_m} - \mathbb{E}_\epsilon f(\hat{\mathbf{v}} , \hat{\mathbf{u}}_{m})\right)^2\right] &= \mathbb{E}_{\mathbf{x}, \mathbf{r}_m} \left[\left( f_{\mathbf{u}_m} - \alpha f_{\mathbf{u}_{m}} - \mathbb{E}_\epsilon  [(1-\alpha)f_{\hat{\mathbf{v}}}]\right)^2\right]\\
    & = (1-\alpha)^2 \mathbb{E}_{\mathbf{x}, \mathbf{r}_m}\left[(f_{\mathbf{u}_m} -  \mathbb{E}_\epsilon  [f_{\hat{\mathbf{v}}}|\mathbf{r}_m])^2  \right]
\end{align}
\end{proof}
When using linear models, we can further expand this result as follows:
\begin{corollary}
\label{corr_final_decomp}
Let $f_{\hat{\mathbf{v}}}(\mathbf{x}) = \hat{\mathbf{v}}^\top\mathbf{x}$, $f_{\hat{\mathbf{u}}_m}(\mathbf{x}) = \hat{\mathbf{u}_m}^\top\mathbf{x}$ be learned through gradient descent algorithm, then $\delta^2 = (\frac{M-1}{M})\rho^2$, and $\textup{Var}[\hat{f}]$ can be expanded as follows: 
\begin{equation}
\label{final_decomp}
    \mathcal{E} = (1 - \alpha)^2 \left(\frac{M-1}{M}\right)\rho^2 + (1-\alpha)^2 \textup{ Var}[{f}_{\hat{\mathbf{v}}}] +  \alpha^2 \textup{ Var}[{f}_{\hat{\mathbf{u}}}] + 2\alpha(1-\alpha) \textup{ Cov}[f_{\hat{\mathbf{v}}}, f_{\hat{\mathbf{u}}}]
\end{equation}
\end{corollary}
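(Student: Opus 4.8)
The plan is to prove Corollary~\ref{corr_final_decomp} in two parts: first establish $\delta^2 = \left(\frac{M-1}{M}\right)\rho^2$, then expand the variance term $\textup{Var}[\hat f]$ for the linear ensemble model.

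For the first part, I would use the fact (stated at the start of this section, following~\cite{hastie2019surprises}) that gradient descent on the ridgeless linear regression objectives in~\eqref{learn_u} converges to solutions satisfying $\mathbb{E}_\epsilon[\hat{\mathbf{u}}_m] = \mathbf{u}_m = \mathbf{v} + \mathbf{r}_m$ and $\mathbb{E}_{\mathbf{r}_m,\epsilon}[\hat{\mathbf{v}}] = \mathbf{v}$. More carefully, since $\hat{\mathbf{v}}$ is fit jointly to all $M$ devices' targets $\tilde f_{\mathbf{u}_{m'}}$, in expectation over noise it recovers the average teacher $\frac{1}{M}\sum_{m'=1}^M \mathbf{u}_{m'} = \mathbf{v} + \frac{1}{M}\sum_{m'=1}^M \mathbf{r}_{m'}$. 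Then, plugging into Proposition~\ref{theo: bias-variance tradeoff_supp}, $\delta^2 = \mathbb{E}_{\mathbf{x},\mathbf{r}_m}[(f_{\mathbf{u}_m} - \mathbb{E}_\epsilon[f_{\hat{\mathbf v}}|\mathbf{r}_m])^2]$. Writing $f_{\mathbf{u}_m}(\mathbf{x}) - \mathbb{E}_\epsilon[f_{\hat{\mathbf v}}(\mathbf{x})|\mathbf{r}_m] = (\mathbf{u}_m - \mathbb{E}_\epsilon[\hat{\mathbf v}|\mathbf{r}_m])^\top \mathbf{x}$, conditioning on $\mathbf{r}_m$ the relevant vector is $\mathbf{r}_m - \frac{1}{M}\sum_{m'}\mathbf{r}_{m'} = \frac{M-1}{M}\mathbf{r}_m - \frac{1}{M}\sum_{m'\neq m}\mathbf{r}_{m'}$; taking $\mathbb{E}_{\mathbf{x}}$ with $\mathbb{E}[\mathbf{x}\mathbf{x}^\top]=I$ (after the $\mathcal{U}[-1,1]$ normalization or whatever isotropy convention is in force), then $\mathbb{E}_{\mathbf{r}_{m'}}$ over the remaining devices, the cross terms vanish by independence and we are left with $\left(\frac{(M-1)^2}{M^2} + \frac{M-1}{M^2}\right)\rho^2 = \frac{M-1}{M}\rho^2$.

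For the second part, I would simply substitute $\hat f = \alpha f_{\hat{\mathbf u}_m} + (1-\alpha)f_{\hat{\mathbf v}}$ into $\textup{Var}[\hat f] = \mathbb{E}_{\mathbf{x},\mathbf{r}_m}[\textup{Var}_\epsilon[\hat f|\mathbf{x},\mathbf{r}_m]]$ and expand the variance of a linear combination: $\textup{Var}[\alpha A + (1-\alpha)B] = \alpha^2\textup{Var}[A] + (1-\alpha)^2\textup{Var}[B] + 2\alpha(1-\alpha)\textup{Cov}[A,B]$. This is purely the bilinearity of variance/covariance and requires no structural assumptions beyond linearity of the models; combining with the bias expression from Proposition~\ref{theo: bias-variance tradeoff_supp} and the value of $\delta^2$ just computed yields~\eqref{final_decomp}.

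The main obstacle I expect is the first part — pinning down $\mathbb{E}_\epsilon[\hat{\mathbf v}|\mathbf{r}_m]$ rigorously. The subtlety is that $\hat{\mathbf v}$ depends on \emph{all} the $\mathbf{r}_{m'}$ (not just $\mathbf{r}_m$), so one must be careful about which expectations are taken when, and about whether the conditioning in $\delta^2$ is over $\mathbf{r}_m$ alone while the other $\mathbf{r}_{m'}$ are still averaged out. Getting the combinatorial bookkeeping right — that the "own-device" contribution scales like $\frac{(M-1)^2}{M^2}$ while each of the $M-1$ "other-device" contributions scales like $\frac{1}{M^2}$, summing to exactly $\frac{M-1}{M}$ — is the one place a sign or index error could creep in. The variance expansion in part two is routine by comparison.
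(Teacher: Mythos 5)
Your proposal is correct and follows essentially the same route as the paper: identify $\mathbb{E}_\epsilon[\hat{\mathbf{v}}]$ with the average teacher $\frac{1}{M}\sum_{j}\mathbf{u}_j$ so that the bias vector conditional on $\mathbf{r}_m$ reduces to $\frac{M-1}{M}\mathbf{r}_m - \frac{1}{M}\sum_{j\neq m}\mathbf{r}_j$, use isotropy of $\mathbf{x}$ and independence of the $\mathbf{r}_j$ to get $\bigl(\frac{(M-1)^2}{M^2}+\frac{M-1}{M^2}\bigr)\rho^2 = \frac{M-1}{M}\rho^2$, and then expand $\textup{Var}[\hat f]$ by bilinearity. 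Your flagged subtlety about which $\mathbf{r}_{m'}$ are averaged when is handled in the paper exactly as you anticipate (all of them are integrated out in $\mathbb{E}_{\mathbf{x},\mathbf{r}_m}$), and the combinatorial bookkeeping you describe matches the paper's $\frac{M(M-1)}{M^2}$ computation exactly.
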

\begin{proof}
We first show that
$$\mathbb{E}_{\mathbf{x}, \mathbf{r}_m}\left[(f_{\mathbf{u}_m} -  \mathbb{E}_\epsilon  [f_{\hat{\mathbf{v}}}|\mathbf{r}_m])^2  \right] = \left(\frac{M-1}{M} \right)\rho^2. $$
We expand to get:
\begin{align}
    \mathbb{E}_{\mathbf{x}, \mathbf{r}_m} \left[(f_{\mathbf{u}_m} - \mathbb{E}_\epsilon  [f_{\hat{\mathbf{v}}}|\mathbf{r}_m])^2 \right] &= \mathbb{E}_{\mathbf{x}, \mathbf{r}_m} \left[(\mathbf{u}_m^\top\mathbf{x} - \mathbb{E}_\epsilon [ \hat{\mathbf{v}}^\top\mathbf{x}|\mathbf{r}_m])^2 \right]\\
    &= \mathbb{E}_{\mathbf{x}, \mathbf{r}_m} \left[ \left(\mathbf{u}_m^\top\mathbf{x} - \mathbb{E}_\epsilon  [\frac{1}{M}\sum_{j=1}^M \hat{\mathbf{u}}_j^\top\mathbf{x}|\mathbf{r}_m] \right)^2 \right]\\
    &= \mathbb{E}_{\mathbf{x}, \mathbf{r}_m} \left[ \left(\mathbf{u}_m^\top\mathbf{x} - \frac{1}{M}\sum_{j=1}^M {\mathbf{u}}_j^\top\mathbf{x} \right)^2 \right]\\
    &= \mathbb{E}_{\mathbf{x}, \mathbf{r}_m} \left[ \frac{1}{{M}^2} \left( \sum_{j\neq m}^M \mathbf{u}_m^\top\mathbf{x} - {\mathbf{u}}_j^\top\mathbf{x} \right)^2 \right]\\
    &= \mathbb{E}_{\mathbf{x}, \mathbf{r}_m} \left[ \frac{1}{{M}^2} \left( \sum_{j\neq m}^M \mathbf{r}_m^\top\mathbf{x} - {\mathbf{r}}_j^\top\mathbf{x} \right)^2 \right]\\
    &= \mathbb{E}_{\mathbf{x}, \mathbf{r}_m} \left[ \frac{1}{{M}^2} \left( \sum_{j\neq m}^M \mathbf{r}_m^\top\mathbf{x} - {\mathbf{r}}_j^\top\mathbf{x} \right)^2 \right]\\
    &= \text{Tr} \left\{\left[ \frac{1}{{M}^2} \mathbb{E}_{\mathbf{r}_m}\left( \sum_{j\neq m }^M \mathbf{r}_m^\top- {\mathbf{r}}_j^\top \right)^2 \right] \mathbb{E}_{\mathbf{x}} [\mathbf{x}\mathbf{x}^\top] \right\}\\
    &= \frac{M(M-1)}{M^2}\rho^2\\
    &= \left(\frac{M-1}{M}\right)\rho^2
\end{align}
where we have used the fact that the expectation over squared $\mathbf{x}$, $\mathbb{E}_{\mathbf{x}} [\mathbf{x}\mathbf{x}^\top]$, is the identity matrix.

Furthermore, by using the fact that $f(\mathbf{x}; \hat{\mathbf{v}}, \hat{\mathbf{u}}_{m}) = (1-\alpha) f_{\hat{\mathbf{u}}_m}(\mathbf{x}) + \alpha f_{\hat{\mathbf{v}}}(\mathbf{x})$, we have that
\begin{align}
    \textup{Var}[\hat{f}] = (1-\alpha)^2 \textup{ Var}[{f}_{\hat{\mathbf{v}}}] + \alpha^2 \textup{ Var}[{f}_{\hat{\mathbf{u}}}] + 2\alpha(1-\alpha) \textup{ Cov}[f_{\hat{\mathbf{v}}}, f_{\hat{\mathbf{u}}}]
\end{align}
\end{proof}
Using these results, we can compute the generalization error of several federated learning baselines as well as \ours.

\vspace{-1mm}
\subsection{Analysis of Federated Learning Baselines}
\vspace{-1mm}

We begin by analyzing two baselines for federated learning. 

The first method learns local models on each device~\citep{DBLP:journals/corr/SmithCST17}: $f_{\ell}(\mathbf{x}; \hat{\mathbf{v}}, \hat{\mathbf{u}}_{m}) = f(\mathbf{x}; \hat{\mathbf{u}}_{m}) = \hat{\mathbf{u}}_{m}^\top \mathbf{x}$.
\begin{proposition}
\label{error_local}
The generalization error $\mathcal{E}(f_\ell)$ of local model $f_\ell(\mathbf{x}; \hat{\mathbf{v}}, \hat{\mathbf{u}}_{m})$ is $\frac{d}{N}\sigma^2$.
\end{proposition}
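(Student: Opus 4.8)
The plan is to reduce the claim to the variance of the ordinary least squares (ridgeless) estimator. First, observe that $f_\ell$ is exactly the $\alpha=1$ endpoint of the interpolation $f_\alpha = \alpha f_\ell + (1-\alpha)f_g$; substituting $\alpha=1$ into the decomposition (\ref{final_decomp}) of Corollary~\ref{corr_final_decomp} annihilates the bias term $(1-\alpha)^2(\tfrac{M-1}{M})\rho^2$, the global-variance term $(1-\alpha)^2\,\textup{Var}[f_{\hat{\mathbf{v}}}]$, and the cross term $2\alpha(1-\alpha)\,\textup{Cov}[f_{\hat{\mathbf{v}}},f_{\hat{\mathbf{u}}}]$, leaving $\mathcal{E}(f_\ell) = \textup{Var}[f_{\hat{\mathbf{u}}}] = \mathbb{E}_{\mathbf{x},\mathbf{r}_m}\big[\textup{Var}_\epsilon[\hat{\mathbf{u}}_m^\top\mathbf{x}\mid\mathbf{x},\mathbf{r}_m]\big]$. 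Equivalently, by Proposition~\ref{theo: bias-variance tradeoff_supp} the bias vanishes because $f_\ell$ uses only the local estimator and $\hat{\mathbf{u}}_m$ is unbiased for $\mathbf{u}_m$ (the ridgeless solution converges to $\mathbf{u}_m$ in expectation, as recalled at the start of the appendix), so only the variance survives.

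Second, I would compute that remaining variance directly. With $\mathbf{X}\in\mathbb{R}^{N\times d}$ the design matrix of the $N$ local inputs and $\boldsymbol{\epsilon}\in\mathbb{R}^N$ the label noise, the minimizer in (\ref{learn_u}) is $\hat{\mathbf{u}}_m = \mathbf{u}_m + (\mathbf{X}^\top\mathbf{X})^{-1}\mathbf{X}^\top\boldsymbol{\epsilon}$, hence conditionally on the inputs $\textup{Var}_\epsilon[\hat{\mathbf{u}}_m] = \sigma^2(\mathbf{X}^\top\mathbf{X})^{-1}$ and $\textup{Var}_\epsilon[\hat{\mathbf{u}}_m^\top\mathbf{x}] = \sigma^2\,\mathbf{x}^\top(\mathbf{X}^\top\mathbf{X})^{-1}\mathbf{x}$. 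Taking expectations over the evaluation point $\mathbf{x}$ and the training inputs (both independent of $\boldsymbol{\epsilon}$) and using the isotropy assumption $\mathbb{E}_{\mathbf{x}}[\mathbf{x}\mathbf{x}^\top]=I$ already invoked in the proof of Corollary~\ref{corr_final_decomp}, a trace rotation gives $\mathcal{E}(f_\ell) = \sigma^2\,\Tr\big(\mathbb{E}[(\mathbf{X}^\top\mathbf{X})^{-1}]\big)$.

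Third, evaluate $\mathbb{E}[(\mathbf{X}^\top\mathbf{X})^{-1}]$: since the rows of $\mathbf{X}$ are i.i.d. with identity second moment, $\tfrac1N\mathbf{X}^\top\mathbf{X}\to I$, and to leading order in $d/N$ — the regime fixed at the start of the appendix — $\mathbb{E}[(\mathbf{X}^\top\mathbf{X})^{-1}] = \tfrac1N I + O(d/N^2)$ (exactly $\tfrac{1}{N-d-1}I$ for Gaussian inputs, by the inverse-Wishart mean), so $\mathcal{E}(f_\ell) = \sigma^2\,\Tr(\tfrac1N I) = \tfrac{d}{N}\sigma^2$. This last step is the only delicate one: $\mathbb{E}[(\mathbf{X}^\top\mathbf{X})^{-1}]\neq(\mathbb{E}[\mathbf{X}^\top\mathbf{X}])^{-1}$ in general, so the gap must be controlled either through the closed-form inverse-Wishart expectation (clean but design-specific) or through a matrix-concentration/perturbation bound; since the paper absorbs $d/N$ into the big-$O$, the leading term $\tfrac{d}{N}\sigma^2$ is exactly what is claimed and either route closes the argument. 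Everything else — unbiasedness removing the bias, and the variance of a linear functional of $\hat{\mathbf{u}}_m$ being the displayed quadratic form — is routine bookkeeping.
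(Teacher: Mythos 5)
Your proposal is correct and follows essentially the same route as the paper: set $\alpha=1$ in equation~\ref{final_decomp} of Corollary~\ref{corr_final_decomp}, so that the bias, global-variance, and cross terms vanish and only $\textup{Var}[f_{\hat{\mathbf{u}}}]$ survives. The paper then simply asserts $\textup{Var}[f_{\hat{\mathbf{u}}}]=\frac{d}{N}\sigma^2$, whereas you supply the missing calculation via the OLS error decomposition $\hat{\mathbf{u}}_m=\mathbf{u}_m+(\mathbf{X}^\top\mathbf{X})^{-1}\mathbf{X}^\top\boldsymbol{\epsilon}$ and the leading-order evaluation of $\sigma^2\Tr\big(\mathbb{E}[(\mathbf{X}^\top\mathbf{X})^{-1}]\big)$ --- including the correct caveat that this equals $\frac{d}{N}\sigma^2$ only to leading order in $d/N$, which is exactly the regime the appendix fixes at the outset.
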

\begin{proof}
Similarly, set $\alpha=1$ in equation~\ref{final_decomp} of corollary~\ref{corr_final_decomp} and we obtain that $\mathcal{E}(f_\ell) = \text{Var}[f_{\hat{\mathbf{u}}}] = \frac{d}{N}\sigma^2$.
\end{proof}
This shows that local models only control data variance at a rate of $d/{N}$ since they are only updated using local device data which may be limited in number and vary highly (both in quality and quantity) across devices. However, local models do not suffer from device variance.

The second method updates a joint global model~\citep{DBLP:journals/corr/McMahanMRA16}, which is equivalent to setting $\alpha=0$ in our previous analysis, i.e. $f_{g}(\mathbf{x}; \hat{\mathbf{v}}, \hat{\mathbf{u}}_{m}) = f(\mathbf{x}; \hat{\mathbf{v}}) = \hat{\mathbf{v}}^\top \mathbf{x}$. We can compute its generalization error:
\begin{proposition}
\label{error_global}
The generalization error $\mathcal{E}(f_g)$ of the global model $f_{g}(\mathbf{x}; \hat{\mathbf{v}}, \hat{\mathbf{u}}_{m})$ is $\frac{M-1}{M}\rho^2 + \frac{d}{MN}\sigma^2$.
\end{proposition}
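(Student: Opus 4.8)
The plan is to read the global model off the general ensemble decomposition of Corollary~\ref{corr_final_decomp} at the endpoint $\alpha=0$ and then to evaluate the single variance term that survives. Setting $\alpha=0$ in equation~\ref{final_decomp} annihilates the $\alpha^2\Var[f_{\hat{\mathbf{u}}}]$ and cross-covariance terms and leaves $\mathcal{E}(f_g) = \left(\frac{M-1}{M}\right)\rho^2 + \Var[f_{\hat{\mathbf{v}}}]$, so the device-variance contribution $\frac{M-1}{M}\rho^2$ is inherited for free and the whole problem reduces to showing $\Var[f_{\hat{\mathbf{v}}}] = \frac{d}{MN}\sigma^2$.

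To establish that, I would write $\hat{\mathbf{v}}$ in closed form as the ridgeless least-squares estimator of the pooled objective in equation~\ref{learn_u}. Conditioning on the covariates and on $\mathbf{r}_1,\dots,\mathbf{r}_M$, this estimator splits into a signal part whose noise-expectation is $\frac1M\sum_m \mathbf{u}_m$ (the source of the $\frac{M-1}{M}\rho^2$ bias already counted) plus a noise part of the form $(X^\top X)^{-1}X^\top\bar\epsilon$, where $\bar\epsilon=\frac1M\sum_m\epsilon_m$ averages $M$ independent noise vectors and therefore has coordinatewise variance $\sigma^2/M$. The conceptual point, and what distinguishes the global estimator from the local one of Proposition~\ref{error_local}, is that pooling $M$ devices of $N$ points each behaves like a single regression on $MN$ observations (equivalently, on $N$ observations whose effective noise level is $\sigma^2/M$).

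From here the computation parallels the one underlying Proposition~\ref{error_local}: conditioning on the design gives $\Var_\epsilon[\hat{\mathbf{v}}^\top\mathbf{x}\mid\mathbf{x},X] = \frac{\sigma^2}{M}\,\mathbf{x}^\top(X^\top X)^{-1}\mathbf{x}$; taking the expectation over the test point $\mathbf{x}$ with $\mathbb{E}[\mathbf{x}\mathbf{x}^\top]=I$ turns this into $\frac{\sigma^2}{M}\Tr[(X^\top X)^{-1}]$; and under the paper's standing assumption that $d/N$ is small --- so that $\mathbb{E}[\Tr[(X^\top X)^{-1}]]\approx d/N$, exactly the fact used for the local case --- this equals $\frac{d}{MN}\sigma^2$. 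Substituting back yields $\mathcal{E}(f_g) = \frac{M-1}{M}\rho^2 + \frac{d}{MN}\sigma^2$, as claimed.

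The step I expect to be the main obstacle is the middle one: making the ``effective $MN$ samples / noise shrunk by $M$'' claim precise. One has to be careful about whether the devices are modeled as sharing the design matrix $X$ or drawing fresh copies, and to check that both conventions give the same $\frac{d}{MN}\sigma^2$; and one must lean on the ridgeless-regression facts quoted at the start of the appendix (convergence of $\hat{\mathbf{v}}$ to $\mathbf{v}$ in expectation, and the $d/N$ trace estimate) rather than re-deriving a finite-sample value of $\mathbb{E}[\Tr[(X^\top X)^{-1}]]$, whose lower-order corrections the big-$O$ convention is meant to absorb.
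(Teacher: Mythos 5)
Your proposal matches the paper's proof, which is exactly the one-line substitution of $\alpha=0$ into the decomposition of Corollary~\ref{corr_final_decomp} to get $\mathcal{E}(f_g) = \frac{M-1}{M}\rho^2 + \textup{Var}[f_{\hat{\mathbf{v}}}]$, followed by the identity $\textup{Var}[f_{\hat{\mathbf{v}}}] = \frac{d}{MN}\sigma^2$. The paper simply asserts that variance value (deferring its justification to the later proof of the $f_\alpha$ theorem, where it is again stated without detail), so your pooled-regression argument for the $\sigma^2/M$ effective noise is a correct elaboration of a step the paper leaves implicit rather than a different route.
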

\begin{proof}
Set $\alpha=0$ in equation~\ref{final_decomp} of corollary~\ref{corr_final_decomp}, we obtain $\mathcal{E}(f_g) = \frac{M-1}{M} \rho^2 + \textup{Var}[f_{\hat{\mathbf{v}}}] = \frac{M-1}{M}\rho^2 + \frac{d}{MN}\sigma^2$.
\end{proof}
Therefore, global models are able to control for data variance ($\sigma^2$) at a rate of $d/(MN)$, decreasing with the total number of datapoints across \textit{all} devices (since global parameters are updated using data across all devices), which is better than the rate for local models. However, it suffers from an extra $O(\rho^2)$ term representing device variance so one global model is unable to account for very different devices.

\vspace{-1mm}
\subsection{Analysis of \ours}
\vspace{-1mm}

Given that the above baselines achieve different generalization errors, one should be able to interpolate between the two methods to find the optimal tradeoff point. Our method therefore defines an $\alpha-$interpolation between the local and global models:
\begin{align}
    f_{\alpha}(\mathbf{x}; \hat{\mathbf{v}}, \hat{\mathbf{u}}_{m}) &= \alpha f_\ell(\mathbf{x}; \hat{\mathbf{u}}_{m}) + (1-\alpha)f_g(\mathbf{x}; \hat{\mathbf{v}}).
\end{align}
where $\alpha \in [0,1]$. The following theorem gives the generalization of this model.
\begin{theorem}
    The generalization error of $f_{\alpha}(\mathbf{x}; \hat{\mathbf{v}}, \hat{\mathbf{u}}_{m})$ is $\alpha^2\frac{d}{N}\sigma^2 + (1-\alpha)^2 \frac{M-1}{M}\rho^2 + (1-\alpha^2) \frac{d}{MN}\sigma^2$.
\end{theorem}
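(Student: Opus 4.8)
The plan is to reduce the claim entirely to Corollary~\ref{corr_final_decomp} together with Propositions~\ref{error_local} and~\ref{error_global}, so that the only genuinely new quantity to evaluate is the covariance between the local and global predictors. Specializing equation~\ref{final_decomp} to $f_\alpha(\mathbf{x}) = \alpha f_{\hat{\mathbf{u}}_m}(\mathbf{x}) + (1-\alpha) f_{\hat{\mathbf{v}}}(\mathbf{x})$ gives
\begin{equation}
\mathcal{E}(f_\alpha) = (1-\alpha)^2\left(\frac{M-1}{M}\right)\rho^2 + (1-\alpha)^2\,\textup{Var}[f_{\hat{\mathbf{v}}}] + \alpha^2\,\textup{Var}[f_{\hat{\mathbf{u}}}] + 2\alpha(1-\alpha)\,\textup{Cov}[f_{\hat{\mathbf{v}}},f_{\hat{\mathbf{u}}}].
\end{equation}
By Proposition~\ref{error_local}, $\textup{Var}[f_{\hat{\mathbf{u}}}] = \frac{d}{N}\sigma^2$, and by Proposition~\ref{error_global}, $\textup{Var}[f_{\hat{\mathbf{v}}}] = \frac{d}{MN}\sigma^2$. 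Hence the theorem follows once I show $\textup{Cov}[f_{\hat{\mathbf{v}}},f_{\hat{\mathbf{u}}}] = \frac{d}{MN}\sigma^2$: the two coefficients multiplying $\frac{d}{MN}\sigma^2$ then combine as $(1-\alpha)^2 + 2\alpha(1-\alpha) = 1-\alpha^2$, leaving exactly $\alpha^2\frac{d}{N}\sigma^2 + (1-\alpha)^2\frac{M-1}{M}\rho^2 + (1-\alpha^2)\frac{d}{MN}\sigma^2$.

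To compute the covariance I would use the closed-form ridgeless least-squares expressions. Writing $\mathbf{X}_m$, $\boldsymbol{\epsilon}_m$ for device $m$'s design matrix and label noise, and $\mathbf{X}$, $\boldsymbol{\epsilon}$ for the pooled design matrix and noise over all $M$ devices, we have $\hat{\mathbf{u}}_m - \mathbf{u}_m = (\mathbf{X}_m^\top\mathbf{X}_m)^{-1}\mathbf{X}_m^\top\boldsymbol{\epsilon}_m$ and $\hat{\mathbf{v}} - \mathbb{E}_\epsilon[\hat{\mathbf{v}}] = (\mathbf{X}^\top\mathbf{X})^{-1}\mathbf{X}^\top\boldsymbol{\epsilon}$. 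Since the noise is independent across all observations, $\mathbb{E}_\epsilon[\boldsymbol{\epsilon}\boldsymbol{\epsilon}_m^\top]$ equals $\sigma^2$ times the selector that picks out device $m$'s rows, so $\mathbf{X}^\top\mathbb{E}_\epsilon[\boldsymbol{\epsilon}\boldsymbol{\epsilon}_m^\top]\mathbf{X}_m = \sigma^2\mathbf{X}_m^\top\mathbf{X}_m$. Substituting this and taking the expectation over the inputs with $\mathbb{E}_{\mathbf{x}}[\mathbf{x}\mathbf{x}^\top] = I$, the device-$m$ Gram factors cancel and
\begin{equation}
\textup{Cov}[f_{\hat{\mathbf{v}}},f_{\hat{\mathbf{u}}}] = \sigma^2\,\textup{Tr}\!\left((\mathbf{X}^\top\mathbf{X})^{-1}\right) = \frac{d}{MN}\sigma^2,
\end{equation}
using the same large-sample normalization $\mathbf{X}^\top\mathbf{X}\approx MN\,I$ already invoked in Propositions~\ref{error_local} and~\ref{error_global}. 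Intuitively this equals $\textup{Var}[f_{\hat{\mathbf{v}}}]$ because the only randomness $\hat{\mathbf{u}}_m$ shares with $\hat{\mathbf{v}}$ is device $m$'s own label noise, which is precisely the portion of $\hat{\mathbf{v}}$'s fluctuation that device $m$ contributes.

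I expect the covariance bookkeeping to be the one delicate point: one must keep track of the fact that $\hat{\mathbf{v}}$ depends on the noise of all $M$ devices whereas $\hat{\mathbf{u}}_m$ depends only on device $m$'s, and that the expectation factors cleanly into an input part (yielding the trace of the inverse pooled Gram matrix) and a noise part (yielding $\sigma^2$ times the block selector). Everything after that — plugging in the two propositions and collecting the $(1-\alpha^2)$ coefficient — is routine algebra. As a consistency check, setting $\partial \mathcal{E}(f_\alpha)/\partial\alpha = 0$ in the final expression should return $\alpha^\star = \rho^2/\big(\rho^2 + \frac{d}{N}\sigma^2\big)$, the optimal value recorded in the following corollary.
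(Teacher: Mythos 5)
Your proposal is correct and follows essentially the same route as the paper: specialize the decomposition in Corollary~\ref{corr_final_decomp}, plug in $\textup{Var}[f_{\hat{\mathbf{u}}}]=\frac{d}{N}\sigma^2$ and $\textup{Var}[f_{\hat{\mathbf{v}}}]=\frac{d}{MN}\sigma^2$, show $\textup{Cov}[f_{\hat{\mathbf{v}}},f_{\hat{\mathbf{u}}}]=\frac{d}{MN}\sigma^2$, and collect $(1-\alpha)^2+2\alpha(1-\alpha)=1-\alpha^2$. The only cosmetic difference is that the paper evaluates the covariance by writing $\hat{\mathbf{v}}$ as the average $\frac{1}{M}\sum_j\hat{\mathbf{u}}_j$ and noting only the $j=m$ noise term survives, whereas you reach the same conclusion via the explicit ridgeless least-squares expressions — the underlying reason (the only randomness shared between $\hat{\mathbf{v}}$ and $\hat{\mathbf{u}}_m$ is device $m$'s label noise) is identical.
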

\begin{proof}
The proof follows by computing each term in equation~\ref{final_decomp} of corollary~\ref{corr_final_decomp}
\begin{equation}
    \mathcal{E}(f_\alpha) = (1 - \alpha)^2 \left(\frac{M-1}{M}\right)\rho^2 + (1-\alpha)^2 \textup{ Var}[{f}_{\hat{\mathbf{v}}}] +  \alpha^2 \textup{ Var}[{f}_{\hat{\mathbf{u}}}] + 2\alpha(1-\alpha) \textup{ Cov}[f_{\hat{\mathbf{v}}}, f_{\hat{\mathbf{u}}}]
\end{equation}
We need to find $\textup{Cov}[f_{\hat{\mathbf{v}}}, f_{\hat{\mathbf{u}}}]$:
\begin{align}
    \textup{Cov}[f_{\hat{\mathbf{v}}}, f_{\hat{\mathbf{u}}}] &=
    \mathbb{E} \left[(f_{\hat{\mathbf{v}}} - \mathbb{E} f_{\hat{\mathbf{v}}}) (f_{\hat{\mathbf{u}}} - \mathbb{E} f_{\hat{\mathbf{u}}}) \right] \\
    &= \mathbb{E} \left[ \left(\frac{1}{M}\sum_{j=1}^M \hat{\mathbf{u}}_j - \mathbf{v} \right)^\top (\hat{\mathbf{u}}_m - \mathbf{u}_m) \right] \\
    &= \mathbb{E} \left[\frac{1}{M}\sum_{j=1}^M \hat{\mathbf{r}}_j^\top \hat{\mathbf{r}}_m \right] \\
    &= \frac{d}{MN}\sigma^2 
\end{align}
where, as in the previous theorem, the expectation over squared $\mathbf{x}$, $\mathbb{E}_{\mathbf{x}} [\mathbf{x}\mathbf{x}^\top]$, is the identity matrix. Likewise, we obtain $\textup{Var}[{f}_{\hat{\mathbf{u}}}] = \frac{d}{N}\sigma^2$, and $\textup{Var}[{f}_{\hat{\mathbf{v}}}] = \frac{d}{MN}\sigma^2$. Combining everything above, we get that the generalization error is
\begin{align}
    \mathcal{E}(f_{\alpha}) &= \alpha^2\frac{d}{N}\sigma^2 + (1-\alpha)^2 \frac{M-1}{M}\rho^2 + (1-\alpha)^2 \frac{d}{MN}\sigma^2 + 2 \alpha (1-\alpha)\frac{d}{MN}\sigma^2 \\
    &= \alpha^2\frac{d}{N}\sigma^2 + (1-\alpha)^2 \frac{M-1}{M}\rho^2 + (1-\alpha^2) \frac{d}{MN}\sigma^2
\end{align}
\end{proof}
This can be solved to find the optimal $\alpha^*$ that minimizes $f_{\alpha}(\mathbf{x}; \hat{\mathbf{v}}, \hat{\mathbf{u}}_{m})$.
\begin{corollary}
The optimal $\alpha^*$ that minimizes $\mathcal{E}(f_{\alpha})$ is
\begin{equation}
    \alpha^* = \frac{\rho^2}{\rho^2 + \frac{d}{N}\sigma^2}.
\end{equation}
Moreover, when $\sigma^2, \rho^2 \neq 0$, we have that $\mathcal{E}(f_{\alpha^*}) < \mathcal{E}(f_\ell)$ and $\mathcal{E}(f_{\alpha^*}) < \mathcal{E}(f_g)$.
\end{corollary}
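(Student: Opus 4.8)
The plan is to treat $\mathcal{E}(f_\alpha)$ — which by the immediately preceding theorem equals $\alpha^2\frac{d}{N}\sigma^2 + (1-\alpha)^2\frac{M-1}{M}\rho^2 + (1-\alpha^2)\frac{d}{MN}\sigma^2$ — as a one-variable quadratic in $\alpha$ and minimize it over $\alpha\in[0,1]$. First I would abbreviate $a = \frac{d}{N}\sigma^2$, $b = \frac{M-1}{M}\rho^2$, $c = \frac{d}{MN}\sigma^2$ and expand the three terms to collect $\mathcal{E}(f_\alpha) = (a+b-c)\alpha^2 - 2b\alpha + (b+c)$. The key algebraic observation is that $a - c = \frac{d}{N}\sigma^2\bigl(1-\tfrac1M\bigr) = \frac{M-1}{M}\cdot\frac{d}{N}\sigma^2$, so the leading coefficient factors cleanly as $a+b-c = \frac{M-1}{M}\bigl(\frac{d}{N}\sigma^2 + \rho^2\bigr)$.

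Next I would note that for $M\ge 2$ and $\sigma^2,\rho^2>0$ this leading coefficient is strictly positive, so $\mathcal{E}(f_\alpha)$ is a strictly convex parabola in $\alpha$; setting its derivative $2(a+b-c)\alpha - 2b$ to zero gives the unconstrained minimizer $\alpha^* = b/(a+b-c)$, and substituting the factored forms of $b$ and $a+b-c$ cancels the common $\frac{M-1}{M}$, yielding $\alpha^* = \rho^2/(\rho^2 + \frac{d}{N}\sigma^2)$. Since $\frac{d}{N}\sigma^2\ge 0$, this value lies in $[0,1]$, so it is admissible as the constrained minimizer as well.

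For the two strict inequalities, I would use that $f_\ell$ and $f_g$ are exactly $f_\alpha$ at $\alpha = 1$ and $\alpha = 0$ respectively (as in the propositions on the local and global baselines), and that when $\rho^2,\sigma^2 > 0$ we have $0 < \alpha^* < 1$ strictly. Because a strictly convex function attains its minimum at a unique point, $\mathcal{E}(f_{\alpha^*}) < \mathcal{E}(f_\alpha)$ for every $\alpha\neq\alpha^*$; applying this at $\alpha = 1$ and at $\alpha = 0$ gives $\mathcal{E}(f_{\alpha^*}) < \mathcal{E}(f_\ell)$ and $\mathcal{E}(f_{\alpha^*}) < \mathcal{E}(f_g)$. (One could instead substitute $\alpha^*$ back in and compare the resulting closed forms term by term, but the convexity route avoids that messier computation.)

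The derivation is entirely elementary, so there is no genuine obstacle; the only points requiring care are the bookkeeping in factoring the leading coefficient — it is tempting to stop at $a+b-c$ without spotting the common $\frac{M-1}{M}$, which is precisely what makes $\alpha^*$ collapse to the stated form — and verifying the strictness hypotheses ($M\ge 2$ and both variances nonzero) that simultaneously make the parabola strictly convex and place $\alpha^*$ strictly in the interior of $[0,1]$, which is what the final two inequalities rely on.
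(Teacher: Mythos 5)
Your proposal is correct and follows essentially the same route as the paper: expand the quadratic in $\alpha$, set the derivative to zero, and observe that the common factor $\frac{M-1}{M}$ cancels to give $\alpha^* = \rho^2/(\rho^2 + \frac{d}{N}\sigma^2)$. In fact you go slightly further than the paper, which stops after computing $\alpha^*$ and never explicitly justifies the two strict inequalities; your strict-convexity argument (positive leading coefficient for $M\ge 2$, unique interior minimizer with $0<\alpha^*<1$ when $\rho^2,\sigma^2>0$, then evaluate at the endpoints $\alpha=1$ and $\alpha=0$ corresponding to $f_\ell$ and $f_g$) cleanly fills that gap.
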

\begin{proof}
Taking derivative w.r.t to $\alpha$, and setting to $0$ gives an exact expression for $\alpha^*$:
\begin{equation}
    \alpha^* = \frac{\frac{M-1}{M}\rho^2}{\frac{M-1}{M}\rho^2 + \frac{M-1}{M} \frac{d}{N}\sigma^2} = \frac{\rho^2}{\rho^2 + \frac{d}{N}\sigma^2}.
\end{equation}
\end{proof}
This shows that using an ensemble of local and global models reduces both data variance and device variance. When $\rho^2$ is large (high device variance), one should lean towards using local models that better model the local data distributions (larger $\alpha^*$). Conversely, when $\sigma^2$ is large (high data variance), one should lean towards using a global model whose parameters are updated using more data across all devices (smaller $\alpha^*$).

\vspace{-1mm}
\section{Fair Representation Learning}

\vspace{-2mm}
\subsection{Fair Training of Local Models}
\label{alg:fair}
\vspace{-1mm}

In this section we detail one extension of local representation learning to remove information that might be indicative of protected attributes. The data on each device is now a triple $(\mathbf{X}_m, \mathbf{Y}_m, \mathbf{P}_m)$ drawn non-i.i.d. from a joint distribution $p(X,Y,P)$ where $\mathbf{p} \in \mathcal{P}$ are some protected attributes which the model should not predict. For example, although there exist correlations between race and income~\cite{10.2307/1054978} which could help in income prediction~\cite{NIPS2018_7613}, it would be undesirable for our models to rely on these correlations since these would exacerbate racial biases.

To learn fair local representations, we use adversarial training~\cite{NIPS2017_6699} to remove protected attributes (Figure~\ref{local} (d)). More formally, we aim to learn a local model $\ell_m$ such that the distribution of $\ell_m(\mathbf{x} ; \theta_m^\ell)$ conditional on $\mathbf{h}$ is invariant with respect to protected attributes $\mathbf{p}$:
\begin{equation}
\label{eqn:criterion}
    p(\ell_m(\mathbf{x} ; \theta_m^\ell) = \mathbf{h} | \mathbf{p}) = p(\ell_m(\mathbf{x}; \theta_m^\ell) = \mathbf{h} | \mathbf{p}^\prime )
\end{equation}
for all $\mathbf{p},\mathbf{p}^\prime \in \mathcal{P}$ and outputs $\mathbf{h} \in {\cal H}$ of $\ell_m(\ \cdot \ ; \theta_m^\ell)$, thereby implying that $\ell_m(\mathbf{x}; \theta_m^\ell)$ and $\mathbf{p}$ are independent.~\cite{NIPS2017_6699} showed that we can use adversarial networks in order to constrain model $\ell_m$ to satisfy Equation~(\ref{eqn:criterion}). $\ell_m$ is pit against an auxiliary adversarial model $a_m = p_{\theta_m^a}(\mathbf{p} | f(\mathbf{x};\theta_m^\ell) = \mathbf{h})$ with parameters $\theta_m^a$ and loss ${\cal L}_m^a(\theta_m^\ell, \theta_m^a)$. The adversarial network $a_m$ is trained to predict $\mathbf{p}$ as much as possible given local representations $\mathbf{h}$. If $p(\ell_m(\mathbf{x}; \theta_m^\ell) = \mathbf{h} | \mathbf{p})$ varies with $\mathbf{p}$, then the corresponding correlation can be captured by adversary $a_m$. On the other hand, if $p(\ell_m(\mathbf{x}; \theta_m^\ell) = \mathbf{h} | \mathbf{p})$ is indeed invariant with respect to $\mathbf{p}$, then adversary $a_m$ should perform randomly. Therefore, we train $\ell_m$ to both minimize the global prediction loss $\mathcal{L}_m^g (\theta_m^\ell, \theta_m^a)$ and to maximize the adversarial loss ${\cal L}_m^a (\theta_m^\ell, \theta_m^a)$. In practice, the local model $\ell_m$, (local copy of the) global model $g$, and adversarial model $a_m$ are updated by solving for the minimax solution:
{\fontsize{9.5}{12}\selectfont
\begin{equation}
\label{eqn:min_thetaf}
    \smash{\hat\theta_m^\ell, \hat\theta_m^g, \hat\theta_m^a} = \arg \min_{\{\theta_m^\ell, \theta_m^g\}} \max_{\theta_m^a} \left[ {\cal L}_m^g (\theta_m^\ell, \theta_m^g) - {\cal L}_m^a (\theta_m^\ell, \theta_m^a) \right].
\end{equation}
}${\cal L}_m^g$ and ${\cal L}_m^a$ are computed using the expected value of the log-likelihood through inference networks $\ell_m$, $g$, and $a_m$. We optimize equation~(\ref{eqn:min_thetaf}) by treating it as a coordinate descent problem and alternately solving for $\smash{\hat\theta_m^\ell, \hat\theta_m^g, \hat\theta_m^a}$ using gradient-based methods (details in Appendix~\ref{sec:theory}). Proposition~\ref{tttt_supp} in Appendix~\ref{sec:theory} further shows that adversarial training learns an optimal local model $\ell_m$ that is invariant with respect to $\mathbf{p}$ under local device data distribution $p(X_m,Y_m,P_m)$. To the best of our knowledge, we are the first to extend this analysis, both theoretically and empirically, to the federated learning setting. Key to this analysis is the separation of local and global models which allows learning of \textit{fair intermediate representations} $\mathbf{h}$.

In Figure~\ref{local}, we also illustrate several other choices of local representation learning using auxiliary local models $a_m$ for (b) unsupervised autoencoding training, where $a_m$ reconstructs $\mathbf{x}$ given $\mathbf{h}$, and (c) self-supervised learning (e.g. jigsaw solving~\cite{DBLP:journals/corr/NorooziF16}), where $a_m$ predicts auxiliary features $\mathbf{z}$ given $\mathbf{h}$. However, when using local supervised learning, $a_m$ and $g$ are similar classification branches (Figure~\ref{local} (a)) both supervised by the target labels. \ours\ can therefore be trained without $a_m$ to directly learn local representations $\mathbf{h}$ for the global model $g$ to make predictions (equation~\ref{joint_obj}). Thus, \ours\ for supervised learning does not incur additional computational complexity while reducing communicated global parameters.

\vspace{-1mm}
\subsection{Theoretical Analysis of Local Fair Representation Learning}
\label{sec:fair_theory}
\vspace{-1mm}

In this section we provide some details of the theoretical proofs and implementation of fair representation learning in our federated learning framework. The setting is adapted from~\cite{NIPS2017_6699}. First recall our dual objective across the local model $\ell_m$, (local copy of the) global model $g$ and auxiliary adversarial model $a_m$:
\begin{equation}
    E(\theta_m^\ell, \theta_m^g, \theta_m^a) = {\cal L}_m^g (\theta_m^\ell, \theta_m^g) - {\cal L}_m^a (\theta_m^\ell, \theta_m^a).
\end{equation}

This implies that the global models should be trained in an adversarial manner since the path of inference when training the (local copy) of the global model also involves the local representation $\mathbf{h}$ and protected attributes $\mathbf{p}$ (refer to Figure~\ref{fig:adv}). When training the global model we optimize for the dual objective across the local model $\ell_m$, global model $g$, and adversarial model $a_m$ by finding the minimax solution $\smash{\hat\theta_m^\ell, \hat\theta_m^g, \hat\theta_m^g}$, defined as
\begin{equation}
\label{eqn:min_thetaf1}
    \smash{\hat\theta_m^\ell, \hat\theta_m^g, \hat\theta_m^a} = \arg \min_{\{\theta_m^\ell, \theta_m^g\}} \max_{\theta_m^a} E(\theta_m^\ell, \theta_m^g, \theta_m^a).
\end{equation}
To do so, we can iteratively solve for $\smash{\hat\theta_m^\ell, \hat\theta_m^g, \hat\theta_m^a}$ in an alternating fashion. In other words, initialize $\smash{\hat\theta^{\ell(0)}_m, \hat\theta^{g(0)}_m, \hat\theta^{a(0)}_m}$ and repeat until convergence:
\begin{align}
    \hat\theta^{\ell(t+1)}_m &= \argmin_{\theta_m^\ell} E(\theta_m^\ell, \theta^{g(t)}_m, \theta^{a(t)}_m), \\
    \hat\theta^{g(t+1)}_m &= \argmin_{\theta_m^g} E(\theta_m^{\ell(t+1)}, \theta_m^g, \theta^{a(t)}_m), \\
    \hat\theta^{a(t+1)}_m &= \argmax_{\theta_m^a} E(\theta_m^{\ell(t+1)}, \theta^{g(t+1)}_m, \theta_m^a).
\end{align}
${\cal L}_m^g$ and ${\cal L}_m^a$ are computed using the expected value of the log likelihood through the inference networks $\ell_m$, $g$, and $a_m$ and the optimization procedure involves using gradient descent and iteratively solving for $\smash{\hat\theta_m^\ell, \hat\theta_m^g, \hat\theta_m^a}$ until convergence. Suppose we define the local data distribution $p(X_m,Y_m,P_m)$, then Proposition~\ref{tttt_supp} shows that this adversarial training procedure learns an optimal local model $\ell_m$ that is at the same time pivotal (invariant) with respect to $\mathbf{p}$ under $p(X_m,Y_m,P_m)$.

\begin{proposition}[Optimality of $\ell_m$, adapted from Proposition 1 in~\cite{NIPS2017_6699}] Suppose we compute losses ${\cal L}_m^g$ and ${\cal L}_m^a$ using the expected log likelihood through the inference networks $\ell_m$, $g$, and $a_m$,
\begin{align}
    {\cal L}_m^g (\theta_m^\ell, \theta_m^g) &= \mathbb{E}_{\mathbf{x} \sim X_m} \mathbb{E}_{\mathbf{y} \sim Y_m|\mathbf{x}} \left[ -\log \left( \sum_{\mathbf{h}} p_{\theta_m^g} (\mathbf{y}|\mathbf{h}) \ p_{\theta_m^\ell} (\mathbf{h}|\mathbf{x}) \right) \right], \\
    {\cal L}_m^a (\theta_m^\ell, \theta_m^a) &= \mathbb{E}_{\mathbf{h} \sim f(X_m;\theta_m^\ell)}  \mathbb{E}_{\mathbf{p} \sim P_m|\mathbf{h}} [-\log p_{\theta_m^a} (\mathbf{p}|\mathbf{h})].
\end{align}
Then, if there is a minimax solution $(\hat{\theta}_m^\ell, \hat{\theta}_m^g, \hat{\theta}_m^a)$ for equation~(\ref{eqn:min_thetaf1}) such that $E(\hat{\theta}_m^\ell, \hat{\theta}_m^g, \hat{\theta}_m^a) = H(Y_m|X_m) - H(P_m)$, then $\ell_m(\ \cdot \ ; \hat{\theta}_m^\ell)$ is both an optimal classifier and a pivotal quantity.
\label{tttt_supp}
\end{proposition}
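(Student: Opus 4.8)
The plan is to follow the proof of Proposition~1 in~\cite{NIPS2017_6699}, adapted to the composed inference path $\mathbf{x}\to\mathbf{h}\to\mathbf{y}$ and to the local data distribution $p(X_m,Y_m,P_m)$. First I would resolve the inner maximization over $\theta_m^a$. Fix the local model and write $H_m = \ell_m(X_m;\theta_m^\ell)$ for the induced representation random variable. Since $\mathcal{L}_m^a(\theta_m^\ell,\theta_m^a) = \mathbb{E}_{\mathbf{h}\sim f(X_m;\theta_m^\ell)}\mathbb{E}_{\mathbf{p}\sim P_m|\mathbf{h}}[-\log p_{\theta_m^a}(\mathbf{p}|\mathbf{h})]$ is, for each $\mathbf{h}$, a cross-entropy between the true posterior $p(\mathbf{p}|\mathbf{h})$ and the adversary's model, Gibbs' inequality shows it is minimized (in the nonparametric limit where $a_m$ can represent any conditional) exactly when $p_{\theta_m^a}(\mathbf{p}|\mathbf{h}) = p(\mathbf{p}|\mathbf{h})$, with value $H(P_m\mid H_m)$. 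Because the inner problem maximizes $E = \mathcal{L}_m^g - \mathcal{L}_m^a$ over $\theta_m^a$ and $\mathcal{L}_m^g$ does not depend on $\theta_m^a$, I obtain $\max_{\theta_m^a} E(\theta_m^\ell,\theta_m^g,\theta_m^a) = \mathcal{L}_m^g(\theta_m^\ell,\theta_m^g) - H(P_m\mid H_m)$.

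Next I would lower-bound the outer minimization over $\{\theta_m^\ell,\theta_m^g\}$ by two elementary inequalities. On one hand, $\mathcal{L}_m^g(\theta_m^\ell,\theta_m^g) = \mathbb{E}_{\mathbf{x}}\mathbb{E}_{\mathbf{y}|\mathbf{x}}[-\log \sum_{\mathbf{h}} p_{\theta_m^g}(\mathbf{y}|\mathbf{h})\,p_{\theta_m^\ell}(\mathbf{h}|\mathbf{x})]$ is a cross-entropy for predicting $Y_m$ from $X_m$, where the marginalization over $\mathbf{h}$ defines a well-formed induced conditional $q(\mathbf{y}|\mathbf{x})$; hence $\mathcal{L}_m^g \ge H(Y_m\mid X_m)$, with equality iff $q(\mathbf{y}|\mathbf{x})$ equals the Bayes posterior $p(\mathbf{y}|\mathbf{x})$. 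On the other hand, because $H_m$ is a (possibly stochastic) function of $X_m$, the triple $(P_m,X_m,H_m)$ forms a Markov chain in that order, so $I(P_m;H_m)\ge 0$ and therefore $H(P_m\mid H_m)\le H(P_m)$, with equality iff $I(P_m;H_m)=0$, i.e. $H_m$ is independent of $P_m$. Combining the two, $E = \mathcal{L}_m^g - H(P_m\mid H_m) \ge H(Y_m\mid X_m) - H(P_m)$ for every choice of $\ell_m$ and $g$.

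Finally, I would invoke the hypothesis that the minimax solution attains $E(\hat\theta_m^\ell,\hat\theta_m^g,\hat\theta_m^a) = H(Y_m\mid X_m) - H(P_m)$. Since this matches the lower bound just derived, both inequalities must be tight at $(\hat\theta_m^\ell,\hat\theta_m^g)$. Tightness of the first says the learned composition $g(\cdot;\hat\theta_m^g)\circ\ell_m(\cdot;\hat\theta_m^\ell)$ realizes $p(\mathbf{y}|\mathbf{x})$, so $\ell_m(\cdot;\hat\theta_m^\ell)$ loses no predictive information about $Y_m$ and is therefore an optimal classifier under $p(X_m,Y_m)$. Tightness of the second gives $\ell_m(X_m;\hat\theta_m^\ell)\perp P_m$, i.e. $\ell_m(\cdot;\hat\theta_m^\ell)$ is pivotal, which is precisely the invariance criterion in Equation~(\ref{eqn:criterion}). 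Together with the alternating-minimization scheme around Equation~(\ref{eqn:min_thetaf1}), this completes the argument.

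The main obstacle I anticipate is not the inequality chain but stating the capacity/regularity assumptions that make the information-theoretic optima attainable: one must assume the adversary family is rich enough to represent $p(\mathbf{p}|\mathbf{h})$, and, for the ``optimal classifier'' conclusion, that the composed families $\{p_{\theta_m^g}\}\circ\{p_{\theta_m^\ell}\}$ can represent the Bayes posterior $p(\mathbf{y}|\mathbf{x})$ --- otherwise the lower bound $H(Y_m\mid X_m)$ is unachievable and the hypothesis of the proposition is vacuous. A secondary subtlety is justifying that the inner $\max_{\theta_m^a}$ commutes with the pointwise (per-$\mathbf{h}$) minimization that yields $H(P_m\mid H_m)$, and that the marginal $\sum_{\mathbf{h}}$ inside $\mathcal{L}_m^g$ is handled correctly as inducing a genuine conditional $q(\mathbf{y}|\mathbf{x})$ so that Gibbs' inequality applies cleanly.
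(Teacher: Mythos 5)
Your proposal is correct and follows essentially the same route as the paper's proof: resolve the inner maximization so that $\mathcal{L}_m^a$ collapses to the conditional entropy $H(P_m\mid \ell_m(X_m;\theta_m^\ell))$, bound $\mathcal{L}_m^g$ below by $H(Y_m\mid X_m)$ and the conditional entropy above by $H(P_m)$, and then read off optimality and pivotality from tightness of both inequalities under the stated hypothesis. The capacity and attainability caveats you flag are implicitly assumed (not spelled out) in the paper's argument, so nothing further is needed.
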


\begin{proof}
For fixed $\theta_m^\ell$, the adversary $a_m$ is optimal at
\begin{equation}
    {\hat\theta}_{\mathbf{r}_m} = \arg \max_{\theta_m^a} E(\theta_m^\ell, \theta_m^g, \theta_m^a)  = \arg \min_{\theta_m^a} {\cal L}_m^a(\theta_m^\ell, \theta_m^a),
\end{equation}
in which case $p_{{\hat\theta}_{a_m}}(\mathbf{p}|f(X_m;\theta_m^\ell) = \mathbf{h}) =
p(\mathbf{p}|\ell_m (X_m;\theta_m^\ell)=\mathbf{h})$ for all $\mathbf{p}$ and all $\mathbf{h}$, and ${\cal L}_m^a$ reduces to the expected entropy
$\mathbb{E}_{\mathbf{h} \sim \ell_m (X_m;\theta_m^\ell)} \left[ H \left( {P_m|\ell_m (X_m;\theta_m^\ell)=\mathbf{h}} \right) \right]$ of the conditional distribution of the protected variables $\mathbf{p}$.

This expectation corresponds to the conditional entropy of the random variables $P_m$ and $f(X_m;\theta_m^\ell)$ and can be written as $H(P_m|\ell_m(X_m;\theta_m^\ell))$. Accordingly, the value function $E$ can be restated as a function depending only on $\theta_m^\ell$ and $\theta_m^g$:
\begin{equation}
    E'(\theta_m^\ell, \theta_m^g) = {\cal L}_m^g (\theta_m^\ell, \theta_m^g) - H({P_m|\ell_m(X_m;\theta_m^\ell)}).
\end{equation}
By our choice of the objective function we know that
\begin{align}
    {\cal L}_m^g (\theta_m^\ell, \theta_m^g) = \mathbb{E}_{\mathbf{x} \sim X_m} \mathbb{E}_{\mathbf{y} \sim Y_m|\mathbf{x}} \left[ -\log  p (\mathbf{y}|\mathbf{x}) \right] \ge H({Y_m|X_m})
\end{align}
which implies that we have the lower bound
\begin{equation}
    H({Y_m|X_m}) - H(P_m) \leq {\cal L}_m^g (\theta_m^\ell, \theta_m^g) - H({P_m|\ell_m(X_m;\theta_m^\ell)})
\end{equation}
where the equality holds at $\smash{\hat\theta}_m^\ell, \smash{\hat\theta}_m^g = \arg \min_{\{\theta_m^\ell,\theta_m^g\}} E'(\theta_m^\ell, \theta_m^g)$ when:
\begin{enumerate}
    \item $\mathbb{E}_{\mathbf{x} \sim X_m} \mathbb{E}_{\mathbf{y} \sim Y_m|\mathbf{x}} \left[ -\log  p (\mathbf{y}|\mathbf{x}) \right] \ge H({Y_m|X_m})$, which implies that $\smash{\hat\theta}_m^\ell$ and $\smash{\hat\theta}_m^g$ perfectly minimize the negative log-likelihood of $Y_m|X_m$ under $\ell_m$, which happens when $\smash{\hat\theta}_m^\ell$ and $\smash{\hat\theta}_m^g$ are the parameters of an optimal classifier from $X_m$ to $Y_m$ (through an intermediate representation $H$). In this case, ${\cal L}_m^g$ reduces to its minimum value $H({Y_m|X_m})$.
    \item $\smash{\hat\theta}_m^\ell$ maximizes the conditional entropy $H({P_m | \ell_m(X_m;\theta_m^\ell)})$, since $H(P_m | \ell_m(X_m;\theta_m^\ell)) \leq H(P_m)$ from the properties of entropy.
\end{enumerate}
By assumption, the lower bound is active which implies that $H(P_m|\ell_m(X_m;\theta_m^\ell)) = H(P_m)$ because of the second condition. This in turn implies that $P_m$ and $\ell_m (X_m;\theta_m^\ell)$ are independent variables by the properties of (conditional) entropy. Therefore, the optimal classifier $\ell_m(\ \cdot \ ;\smash{\hat\theta}_m^\ell)$ is also a pivotal quantity with respect to the protected attributes $\mathbf{p}$ under local data distribution $p(X_m,Y_m,P_m)$.
\end{proof}

In practice, we optimize for the following dual objectives over local models $\ell_m$, (local copy of the) global model $g$, and adversarial model $a_m$ respectively:
\begin{equation}
    E(\theta_m^\ell, \theta_m^g, \theta_m^a) = {\cal L}_m^g(\theta_m^\ell,\theta_m^g) - \lambda {\cal L}_m^a(\theta_m^\ell, \theta_m^a).
\end{equation}
where $\lambda$ is a hyperparameter that controls the tradeoff between the prediction model and the adversary model.

\begin{figure}[tbp]
\centering
\includegraphics[width=0.4\linewidth]{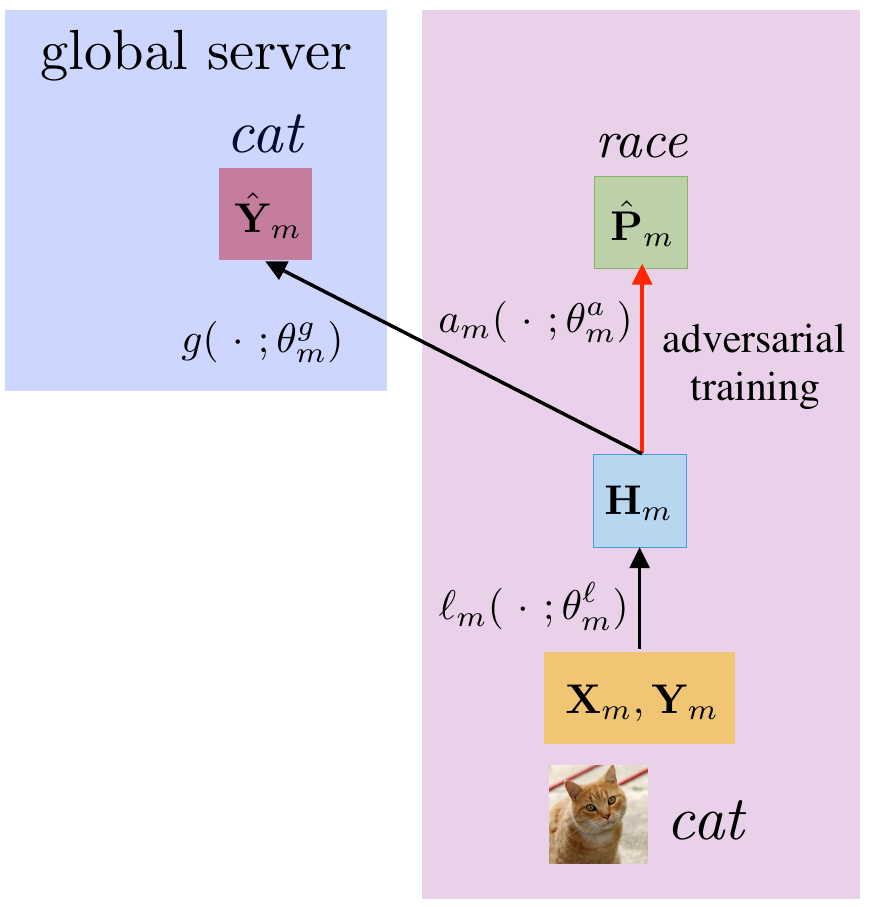}
\caption{A closer look at the inference paths involved in adversarial training. The local models $\ell_m$, (local copy of the) global model $g$ and adversarial model $a_m$ are trained jointly for the global prediction objective and adversarial objective. Refer to equation~(\ref{eqn:min_thetaf1}) for the dual optimization objective over local and global model and adversary parameters respectively.}
\label{fig:adv}
\end{figure}

\vspace{-1mm}
\section{Experimental Details and Extra Results}
\label{details_supp}
\vspace{-1mm}

Here we provide all the details regarding experimental setup, dataset preprocessing, model architectures, model training, and performance evaluation. Our anonymized code is attached in the supplementary material. All experiments are conducted on a single machine with 4 GeForce GTX TITAN X GPUs.

\textbf{A note on hyperparameters used:} For MNIST and CIFAR experiments, we would like to emphasize that our initial set of hyperparameters were taken directly from the default set of hyperparamters in \url{https://github.com/shaoxiongji/federated-learning} for fair comparison across all baselines and our approach. They were \textbf{NOT} manually tuned for \ours\ to perform better.

For synthetic data, there are no hyperparamters involved.

For experiments on mobile data, since it is a new dataset, we start by training the best vanilla federated learning model using \fed\ and use the exact same set of hyperparameters for \ours.

For experiments on fairness, we again use all the default hyperparameters as obtained from the tutorial \url{https://blog.godatadriven.com/fairness-in-ml} and associated code \url{https://github.com/equialgo/fairness-in-ml}.

\vspace{-1mm}
\subsection{Synthetic Data}
\label{synthetic_supp}
\vspace{-1mm}

We set $d = 20, M = 100$, number of train samples per device as $2000$ and the number of test samples per device as $1000$. Data on device $m$ is generated by $\mathbf{x} \sim \mathcal{U}[-1.0, 1.0]$ and teacher weights $\mathbf{u}_m = \mathbf{v} + \mathbf{r}_m$ are sampled as $\mathbf{v} \sim \mathcal{U}[0.0, 1.0]$, $\mathbf{r}_m \sim \mathcal{N}(\mathbf{0}_d, \rho^2 \mathbf{I}_d)$, where $\rho^2$ represents device variance. Labels are observed with noise, $y = \mathbf{u}_m^\top \mathbf{x} + \epsilon$, $\epsilon \sim \mathcal{N}(0, \sigma^2)$, where $\sigma^2$ represents data variance. We plot the average test error when local models perform better due to higher device variance (Figure~\ref{test_plot} left, $\sigma=1.5,\rho=0.1$) and when global models perform better due to lower device variance (Figure~\ref{test_plot} right, $\sigma=1.5,\rho=0.06$). For both settings, using an interpolation of local and global models performs better than either extremes, which supports our analysis.

We also provide several other results demonstrating the effects of data and device variance on the performance local and/or global models. In particular, we fix data variance $\sigma=1.5$ and gradually decrease device variance $\rho \in \{0.5, 0.1, 0.06, 0.02\}$. This results in 4 cases: 1) when local models perform close to optimal (Figure~\ref{test_plot_supp} far left, $\sigma=1.5,\rho=0.5$), 2) when local models perform better (Figure~\ref{test_plot_supp} middle left, $\sigma=1.5,\rho=0.1$), 3) when global models perform better (Figure~\ref{test_plot_supp} middle right, $\sigma=1.5,\rho=0.06$), and 4) when global models perform close to optimal (Figure~\ref{test_plot_supp} far right, $\sigma=1.5,\rho=0.02$). For all settings, using an $\alpha$-interpolation of both local and global models performs either close to the optimal extremes (cases 1 and 4) or better than either extremes (cases 2 and 3).

\begin{figure}[t]
    \centering
    \includegraphics[width=\linewidth]{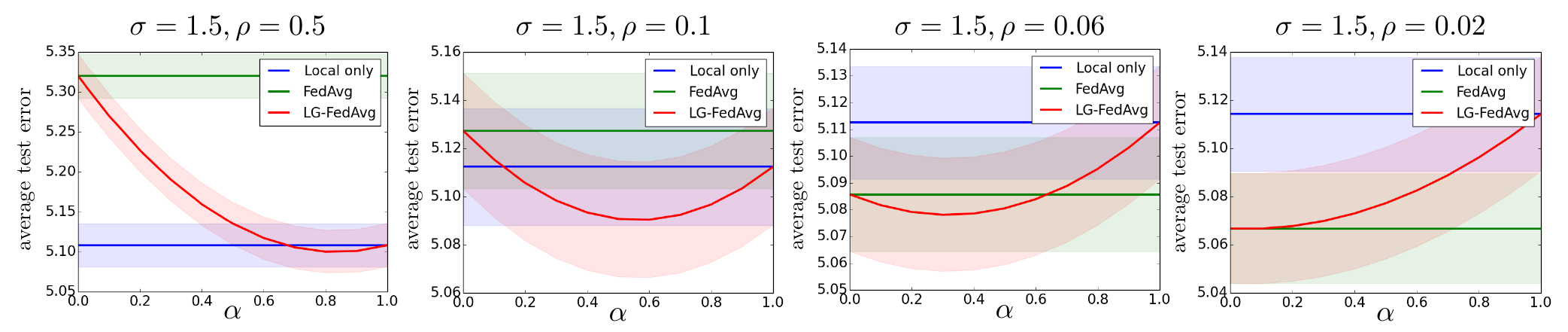}
    \caption{Average test error under four settings: 1) when local models perform close to optimal (far left, $\sigma=1.5,\rho=0.5$), 2) when local models perform better (middle left, $\sigma=1.5,\rho=0.1$), 3) when global models perform better (middle right, $\sigma=1.5,\rho=0.06$), and 4) when global models perform close to optimal (far right, $\sigma=1.5,\rho=0.02$). For all settings, using an $\alpha$-interpolation of both local and global models performs either close to the optimal extremes (cases 1 and 4) or better than either extremes (cases 2 and 3).\vspace{-6mm}}
    \label{test_plot_supp}
\end{figure}

\vspace{-1mm}
\subsection{Model Performance and Communication Efficiency}
\label{efficient_supp}
\vspace{-1mm}

\subsubsection{MNIST}

\textbf{Details:} In all our experiments, we train with number of local epochs $E=1$ and local minibatch size $B=10$. We set $C=0.1$. Images were normalized prior to training and testing. In our experiments, we take the last two layers to form our global model, reducing the number of parameters to $15.79\%$ ($99,978/633,226$). Table~\ref{mnist:config} shows the of hyperparameters used. The dataset can be found here: \url{http://yann.lecun.com/exdb/mnist/}. We train \ours\ with global updates until we reach a goal accuracy ($97.5\%$ for MNIST) before training for additional rounds to jointly update local and global models. Our results are averaged over 10 runs. \# FedAvg and LG Rounds are rounded to the nearest multiple of 5, which we use to calculate the number of parameters communicated. Standard deviations are also reported.

\begin{table}[t]
\fontsize{8.5}{11}\selectfont
\centering
\caption{Table of hyperparameters for MNIST experiments.}
\setlength\tabcolsep{3.5pt}
\begin{tabular}{l | c | c}
\Xhline{3\arrayrulewidth}
Model & Parameter & Value \\
\Xhline{0.5\arrayrulewidth}
\multirow{10}{*}{\fed} 
& Input dim & 784 \\
& Layers & [512, 256, 256, 128] \\
& Output dim & 10 \\
& Loss & cross entropy \\
& Batchsize & 10 \\
& Activation & ReLU \\
& Optimizer & SGD \\
& Learning rate & 0.05 \\
& Momentum & 0.5 \\
& Global epochs & 800 \\
\Xhline{0.5\arrayrulewidth}
\multirow{10}{*}{\textsc{Local Only}}
& Input dim & 784 \\
& Layers & [512, 256, 256, 128] \\
& Output dim & 10 \\
& Loss & cross entropy \\
& Batchsize & 10 \\
& Activation & ReLU \\
& Optimizer & SGD \\
& Learning rate & 0.05 \\
& Momentum & 0.5 \\
& Global epochs & 200 \\
\Xhline{0.5\arrayrulewidth}
\multirow{10}{*}{\ours, {Local model}}
& Input dim & 784 \\
& Layers & [512, 256, 256, 128] \\
& Output dim & 10 \\
& Loss & cross entropy \\
& Batchsize & 10 \\
& Activation & ReLU \\
& Optimizer & SGD \\
& Learning rate & 0.05 \\
& Momentum & 0.5 \\
& Global epochs & 100 \\
\Xhline{0.5\arrayrulewidth}
\multirow{11}{*}{\ours, {Global model}}
& Layers kept & 2 \\
& Input dim & 256 \\
& Layers & [128] \\
& Output dim & 10 \\
& Loss & cross entropy \\
& Batchsize & 10 \\
& Activation & ReLU \\
& Optimizer & SGD \\
& Learning rate & 0.05 \\
& Momentum & 0.5 \\
& Global epochs & 400 \\
\Xhline{3\arrayrulewidth}
\end{tabular}
\label{mnist:config}
\end{table}

\textbf{Extra results:} In this section we provide results on MNIST in comparison with the baselines, see Table~\ref{mnist_supp}. Although MNIST is a slightly smaller dataset, we find that both local and global models help in maintaining performance while using fewer communication parameters.

\definecolor{gg}{RGB}{15,125,15}
\definecolor{rr}{RGB}{190,45,45}

\begin{table*}[!tbp]
\fontsize{8.5}{11}\selectfont
\centering
\caption{\small Comparison of federated learning methods on MNIST with non-iid splits over devices. We report accuracy under both local test and new test settings as well as the total number of parameters communicated across training iterations. Best results in \textbf{bold}. \ours \ outperforms \fed \ under local test and achieves similar performance under new test while using around $50\%$ of the total communicated parameters, across different device splits ($2-10$ classes per device). Mean and standard deviation are computed over $10$ runs.}
\setlength\tabcolsep{1.0pt}

\begin{tabular}{l l || c c c c c}
\Xhline{3\arrayrulewidth}
Data & \multirow{1}{*}{Method} &  \multirow{1}{*}{Local Test Acc. $(\uparrow)$} & \multicolumn{1}{c}{New Test Acc. $(\uparrow)$} & \multirow{1}{*}{FedAvg Rounds} & \multirow{1}{*}{LG Rounds} & \multirow{1}{*}{Params Comm. $(\downarrow)$} \\
\Xhline{0.5\arrayrulewidth}
\multirow{7}{*}{\rotatebox{90}{2 classes/device}} & \fed~\citep{DBLP:journals/corr/McMahanMRA16} & $98.20 \pm 0.05$ & $\mathbf{98.20 \pm 0.05}$ & $800$ & $0$ & $5.6 \times 10^{10}$ \\
& Local only~\citep{DBLP:journals/corr/SmithCST17} & $98.72 \pm 0.35$ & $30.41 \pm 7.88$ & $0$ & $0$ & $0$ \\
& \ours\ (ours) & $\mathbf{98.77 \pm 0.09}$ & $97.72 \pm 0.08$ & $400$ & $100$ & $\mathbf{2.9 \times 10^{10}}$ \\
& \ours\ (ours) & $98.71 \pm 0.08$ & $97.94 \pm 0.06$ & $500$ & $100$ & $3.6 \times 10^{10}$ \\
& \ours\ (ours) & $98.70 \pm 0.01$ & $98.03 \pm 0.05$ & $600$ & $100$ & $4.3 \times 10^{10}$ \\
& \ours\ (ours) & $98.63 \pm 0.09$ & $98.07 \pm 0.03$ & $700$ & $100$ & $5.0 \times 10^{10}$ \\
& \ours\ (ours) & $98.54 \pm 0.05$ & $\mathbf{98.17 \pm 0.05}$ & $800$ & $100$ & $5.7 \times 10^{10}$ \\
%\Xhline{0.5\arrayrulewidth}
\Xhline{3\arrayrulewidth}
\end{tabular}

\vspace{2mm}

\begin{tabular}{l l || c c c c c}
\Xhline{3\arrayrulewidth}
Data & \multirow{1}{*}{Method} &  \multirow{1}{*}{Local Test Acc. $(\uparrow)$} & \multicolumn{1}{c}{New Test Acc. $(\uparrow)$} & \multirow{1}{*}{FedAvg Rounds} & \multirow{1}{*}{LG Rounds} & \multirow{1}{*}{Params Comm. $(\downarrow)$} \\
\Xhline{0.5\arrayrulewidth}
\multirow{7}{*}{\rotatebox{90}{3 classes/device}} & \fed~\citep{DBLP:journals/corr/McMahanMRA16} & $98.20 \pm 0.02$ & $\mathbf{98.20 \pm 0.02}$ & $800$ & $0$ & $5.6 \times 10^{10}$ \\
& Local only~\citep{DBLP:journals/corr/SmithCST17} & $97.55 \pm 0.30$ & $36.11 \pm 10.13$ & $0$ & $0$ & $0$ \\
& \ours\ (ours) & $\mathbf{98.55 \pm 0.09}$ & $97.92 \pm 0.08$ & $400$ & $100$ & $\mathbf{2.9 \times 10^{10}}$ \\
& \ours\ (ours) & $98.38 \pm 0.04$ & $98.03 \pm 0.08$ & $500$ & $100$ & $3.6 \times 10^{10}$ \\
& \ours\ (ours) & $98.44 \pm 0.06$ & $98.10 \pm 0.03$ & $600$ & $100$ & $4.3 \times 10^{10}$ \\
& \ours\ (ours) & $98.37 \pm 0.08$ & $98.14 \pm 0.02$ & $700$ & $100$ & $5.0 \times 10^{10}$ \\
& \ours\ (ours) & $98.34 \pm 0.10$ & $\mathbf{98.18 \pm 0.05}$ & $800$ & $100$ & $5.7 \times 10^{10}$ \\
%\Xhline{0.5\arrayrulewidth}
\Xhline{3\arrayrulewidth}
\end{tabular}

\vspace{2mm}

\begin{tabular}{l l || c c c c c}
\Xhline{3\arrayrulewidth}
Data & \multirow{1}{*}{Method} &  \multirow{1}{*}{Local Test Acc. $(\uparrow)$} & \multicolumn{1}{c}{New Test Acc. $(\uparrow)$} & \multirow{1}{*}{FedAvg Rounds} & \multirow{1}{*}{LG Rounds} & \multirow{1}{*}{Params Comm. $(\downarrow)$} \\
\Xhline{0.5\arrayrulewidth}
\multirow{7}{*}{\rotatebox{90}{4 classes/device}} & \fed~\citep{DBLP:journals/corr/McMahanMRA16} & $98.21 \pm 0.05$ & $\mathbf{98.21 \pm 0.05}$ & $800$ & $0$ & $5.6 \times 10^{10}$ \\
& Local only~\citep{DBLP:journals/corr/SmithCST17} & $96.53 \pm 0.41$ & $43.15 \pm 16.41$ & $0$ & $0$ & $0$ \\
& \ours\ (ours) & $98.32 \pm 0.08$ & $97.98 \pm 0.08$ & $400$ & $100$ & $\mathbf{2.9 \times 10^{10}}$ \\
& \ours\ (ours) & $98.28 \pm 0.07$ & $98.00 \pm 0.05$ & $500$ & $100$ & $3.6 \times 10^{10}$ \\
& \ours\ (ours) & $98.33 \pm 0.05$ & $98.12 \pm 0.05$ & $600$ & $100$ & $4.3 \times 10^{10}$ \\
& \ours\ (ours) & $98.30 \pm 0.06$ & $98.12 \pm 0.03$ & $700$ & $100$ & $5.0 \times 10^{10}$ \\
& \ours\ (ours) & $\mathbf{98.34 \pm 0.06}$ & $\mathbf{98.20 \pm 0.06}$ & $800$ & $100$ & $5.7 \times 10^{10}$ \\
%\Xhline{0.5\arrayrulewidth}
\Xhline{3\arrayrulewidth}
\end{tabular}

\vspace{2mm}

\begin{tabular}{l l || c c c c c}
\Xhline{3\arrayrulewidth}
Data & \multirow{1}{*}{Method} &  \multirow{1}{*}{Local Test Acc. $(\uparrow)$} & \multicolumn{1}{c}{New Test Acc. $(\uparrow)$} & \multirow{1}{*}{FedAvg Rounds} & \multirow{1}{*}{LG Rounds} & \multirow{1}{*}{Params Comm. $(\downarrow)$} \\
\Xhline{0.5\arrayrulewidth}
\multirow{7}{*}{\rotatebox{90}{5 classes/device}} & \fed~\citep{DBLP:journals/corr/McMahanMRA16} & $98.13 \pm 0.05$ & $\mathbf{98.13 \pm 0.05}$ & $800$ & $0$ & $5.6 \times 10^{10}$ \\
& Local only~\citep{DBLP:journals/corr/SmithCST17} & $95.47 \pm 0.31$ & $58.69 \pm 4.11$ & $0$ & $0$ & $0$ \\
& \ours\ (ours) & $98.18 \pm 0.06$ & $97.82 \pm 0.10$ & $400$ & $100$ & $\mathbf{2.9 \times 10^{10}}$ \\
& \ours\ (ours) & $\mathbf{98.26 \pm 0.07}$ & $98.01 \pm 0.07$ & $500$ & $100$ & $3.6 \times 10^{10}$ \\
& \ours\ (ours) & $98.23 \pm 0.04$ & $98.06 \pm 0.05$ & $600$ & $100$ & $4.3 \times 10^{10}$ \\
& \ours\ (ours) & $98.23 \pm 0.04$ & $98.10 \pm 0.05$ & $700$ & $100$ & $5.0 \times 10^{10}$ \\
& \ours\ (ours) & $98.21 \pm 0.07$ & $\mathbf{98.12 \pm 0.07}$ & $800$ & $100$ & $5.7 \times 10^{10}$ \\
%\Xhline{0.5\arrayrulewidth}
\Xhline{3\arrayrulewidth}
\end{tabular}

\vspace{2mm}

\begin{tabular}{l l || c c c c c}
\Xhline{3\arrayrulewidth}
Data & \multirow{1}{*}{Method} &  \multirow{1}{*}{Local Test Acc. $(\uparrow)$} & \multicolumn{1}{c}{New Test Acc. $(\uparrow)$} & \multirow{1}{*}{FedAvg Rounds} & \multirow{1}{*}{LG Rounds} & \multirow{1}{*}{Params Comm. $(\downarrow)$} \\
\Xhline{0.5\arrayrulewidth}
\multirow{7}{*}{\rotatebox{90}{10 classes/device (iid) }} & \fed~\citep{DBLP:journals/corr/McMahanMRA16} &  $97.93 \pm 0.08$ &	$\mathbf{97.93 \pm 0.08}$&	$800$&	$0$&	$5.6 \times 10^{10}$ \\
& Local only~\citep{DBLP:journals/corr/SmithCST17} & $88.03 \pm 0.37$&	$86.24 \pm 0.87$&	$0$&	$0$&	$0$\\
& \ours\ (ours) & $97.59 \pm 0.08$&	$97.61 \pm 0.08$&   $400$&	$100$&	$\mathbf{2.9 \times 10^{10}}$ \\
& \ours\ (ours) & $97.78 \pm 0.13$&	$97.82 \pm 0.14$&	$500$&	$100$&	$3.6 \times 10^{10}$ \\
& \ours\ (ours) & $97.84 \pm 0.10$&	$97.86 \pm 0.08$&	$600$&	$100$&	$4.3 \times 10^{10}$ \\
& \ours\ (ours) & $97.85 \pm 0.09$&	$97.88 \pm 0.09$&	$700$&  $100$&	$5.0 \times 10^{10}$ \\
& \ours\ (ours) & $\mathbf{97.91 \pm 0.10}$ &	$\mathbf{97.93 \pm 0.07}$&	$800$&	$100$&	$5.7 \times 10^{10}$ \\
\Xhline{3\arrayrulewidth}
\end{tabular}

\vspace{2mm}

\label{mnist_supp}
\vspace{-4mm}
\end{table*}

\subsubsection{CIFAR10}

\textbf{Details:} We train with number of local epochs $E=1$ and local minibatch size $B=50$. We set $C=0.1$. Images are randomly cropped to size $32$, randomly flipped horizontally with probability $p=0.5$, resized to $224 \times 224$, and normalized. For our model architecture, we chose Lenet-5. We use the two convolutional layers for the global model in our \ours\ method to minimize the number of parameters. We therefore reduce the number of parameters to $4.48\%$ ($2872/64102$). Table~\ref{mnist:config} shows a table of additional hyperparameters used. The dataset can be found here: \url{https://www.cs.toronto.edu/~kriz/cifar.html}. We train \ours\ with global updates until we reach a goal accuracy ($57\%$ for CIFAR-10) before training for additional rounds to jointly update local and global models. Our results are averaged over 10 runs and we report standard deviations. \# FedAvg and LG Rounds are rounded to the nearest multiple of 5, which we use to calculate the number of parameters communicated.

\begin{table}[t]
\fontsize{8.5}{11}\selectfont
\centering
\caption{Table of hyperparameters for CIFAR-10 experiments.}
\setlength\tabcolsep{3.5pt}
\begin{tabular}{l | c | c}
\Xhline{3\arrayrulewidth}
Model & Parameter & Value \\
\Xhline{0.5\arrayrulewidth}
\multirow{7}{*}{\fed} 
& Loss & cross entropy \\
& Batchsize & 50 \\
& Optimizer & SGD \\
& Learning rate & 0.1 \\
& Momentum & 0.5 \\
& Learning rate decay & 0.005 \\
& Global epochs & 1800 \\
\Xhline{0.5\arrayrulewidth}
\multirow{7}{*}{\textsc{Local Only}}
& Loss & cross entropy \\
& Batchsize & 50 \\
& Optimizer & SGD \\
& Learning rate & 0.1 \\
& Momentum & 0.5 \\
& Learning rate decay & 0.005 \\
& Global epochs & 200 \\
\Xhline{0.5\arrayrulewidth}
\multirow{7}{*}{\ours, {Local model}}
& Loss & cross entropy \\
& Batchsize & 50 \\
& Optimizer & SGD \\
& Learning rate & 0.1 \\
& Momentum & 0.5 \\
& Learning rate decay & 0.005 \\
& Global epochs & 100 \\
\Xhline{0.5\arrayrulewidth}
\multirow{7}{*}{\ours, {Global model}}
& Loss & cross entropy \\
& Batchsize & 50 \\
& Optimizer & SGD \\
& Learning rate & 0.1 \\
& Momentum & 0.5 \\
& Learning rate decay & 0.005 \\
& Global epochs & 1200 \\
\Xhline{3\arrayrulewidth}
\end{tabular}
\label{cifar10:config}
\end{table}

\textbf{Extra results:} In this section we provide more results and also show a sensitivity analysis to various hyperparameters especially regarding the data splits across devices and the local-global model split in our method. See Table~\ref{cifar10_supp}. Our results are especially strong here: across different data splits (different number of users per device), \ours\ consistently performs better on local test and new test while using fewer parameters.

\definecolor{gg}{RGB}{15,125,15}
\definecolor{rr}{RGB}{190,45,45}

\begin{table*}[!tbp]
\fontsize{8.5}{11}\selectfont
\centering
\caption{\small Comparison of federated learning methods on CIFAR-10 with non-iid split over devices. We report accuracy under both local test and new test settings as well as the total number of parameters communicated across training iterations. Best results in \textbf{bold}. \ours \ outperforms \fed\ and under local test and achieves similar performance under new test while using around $50\%$ of the total communicated parameters, across different device splits ($2-10$ classes per device). Mean and standard deviation are computed over $10$ runs.}
\setlength\tabcolsep{1.0pt}

\begin{tabular}{l l || c c c c c}
\Xhline{3\arrayrulewidth}
Data & \multirow{1}{*}{Method} &  \multirow{1}{*}{Local Test Acc. $(\uparrow)$} & \multicolumn{1}{c}{New Test Acc. $(\uparrow)$} & \multirow{1}{*}{FedAvg Rounds} & \multirow{1}{*}{LG Rounds} & \multirow{1}{*}{Params Comm. $(\downarrow)$} \\
\Xhline{0.5\arrayrulewidth}
\multirow{7}{*}{\rotatebox{90}{2 classes/device}} & \fed~\citep{DBLP:journals/corr/McMahanMRA16} & $58.99 \pm 1.50$ & $58.99 \pm 1.50$ & $1800$ & $0$ & $12.7 \times 10^{9}$ \\
& Local only~\citep{DBLP:journals/corr/SmithCST17} &$87.93 \pm 2.14$ & $10.03 \pm 0.06$ & $0$ & $0$ & $0$ \\
& \ours\ (ours) & $90.20 \pm 0.79$ & $56.52 \pm 1.59$ & $1000$ & $100$ & $\mathbf{7.1 \times 10^{9}}$ \\
& \ours\ (ours) & $90.77 \pm 0.50$ & $57.95 \pm 1.48$ & $1200$ & $100$ & $8.5 \times 10^{9}$ \\
& \ours\ (ours) & $91.07 \pm 0.62$ & $59.28 \pm 1.70$ & $1400$ & $100$ & $9.9 \times 10^{9}$ \\
& \ours\ (ours) & $91.45 \pm 0.77$ & $59.96 \pm 1.61$ & $1600$ & $100$ & $11.3 \times 10^{9}$ \\
& \ours\ (ours) & $\mathbf{91.77 \pm 0.56}$ & $\mathbf{60.79 \pm 1.45}$ & $1800$ & $100$ & $12.7 \times 10^{9}$ \\
%\Xhline{0.5\arrayrulewidth}
\Xhline{3\arrayrulewidth}
\end{tabular}

\vspace{2mm}

\begin{tabular}{l l || c c c c c}
\Xhline{3\arrayrulewidth}
Data & \multirow{1}{*}{Method} &  \multirow{1}{*}{Local Test Acc. $(\uparrow)$} & \multicolumn{1}{c}{New Test Acc. $(\uparrow)$} & \multirow{1}{*}{FedAvg Rounds} & \multirow{1}{*}{LG Rounds} & \multirow{1}{*}{Params Comm. $(\downarrow)$} \\
\Xhline{0.5\arrayrulewidth}
\multirow{7}{*}{\rotatebox{90}{3 classes/device}} & \fed~\citep{DBLP:journals/corr/McMahanMRA16} & $63.68 \pm 0.35$ & $63.68 \pm 0.350$ & $1800$ & $0$ & $12.7 \times 10^{9}$ \\
& Local only~\citep{DBLP:journals/corr/SmithCST17} &$79.79 \pm 1.05$ & $10.00 \pm 0.00$ & $0$ & $0$ & $0$ \\
& \ours\ (ours) & $86.01 \pm 0.55$ & $61.78 \pm 0.61$ & $1000$ & $100$ & $\mathbf{7.1 \times 10^{9}}$ \\
& \ours\ (ours) & $85.13 \pm 0.76$ & $63.01 \pm 0.58$ & $1200$ & $100$ & $8.5 \times 10^{9}$ \\
& \ours\ (ours) & $86.69 \pm 0.58$ & $63.57 \pm 0.31$ & $1400$ & $100$ & $9.9 \times 10^{9}$ \\
& \ours\ (ours) & $86.70 \pm 0.49$ & $63.39 \pm 1.68$ & $1600$ & $100$ & $11.3 \times 10^{9}$ \\
& \ours\ (ours) & $\mathbf{87.26 \pm 0.63}$ & $\mathbf{64.79 \pm 0.55}$ & $1800$ & $100$ & $12.7 \times 10^{9}$ \\
%\Xhline{0.5\arrayrulewidth}
\Xhline{3\arrayrulewidth}
\end{tabular}

\vspace{2mm}

\begin{tabular}{l l || c c c c c}
\Xhline{3\arrayrulewidth}
Data & \multirow{1}{*}{Method} &  \multirow{1}{*}{Local Test Acc. $(\uparrow)$} & \multicolumn{1}{c}{New Test Acc. $(\uparrow)$} & \multirow{1}{*}{FedAvg Rounds} & \multirow{1}{*}{LG Rounds} & \multirow{1}{*}{Params Comm. $(\downarrow)$} \\
\Xhline{0.5\arrayrulewidth}
\multirow{7}{*}{\rotatebox{90}{4 classes/device}} & \fed~\citep{DBLP:journals/corr/McMahanMRA16} & $65.54 \pm 0.66$ & $65.54 \pm 0.66$ & $1800$ & $0$ & $12.7 \times 10^{9}$ \\
& Local only~\citep{DBLP:journals/corr/SmithCST17} & $78.01 \pm 0.46$ & $10.22 \pm 0.29$ & $0$ & $0$ & $0$ \\
& \ours\ (ours) & $82.56 \pm 0.54$ & $63.62 \pm 0.64$ & $1000$ & $100$ & $\mathbf{7.1 \times 10^{9}}$ \\
& \ours\ (ours) & $83.02 \pm 0.47$ & $64.40 \pm 0.45$ & $1200$ & $100$ & $8.5 \times 10^{9}$ \\
& \ours\ (ours) & $83.61 \pm 0.26$ & $65.41 \pm 0.71$ & $1400$ & $100$ & $9.9 \times 10^{9}$ \\
& \ours\ (ours) & $83.78 \pm 0.56$ & $65.99 \pm 0.55$ & $1600$ & $100$ & $11.3 \times 10^{9}$ \\
& \ours\ (ours) & $\mathbf{84.14 \pm 0.42}$ & $\mathbf{66.48 \pm 0.74}$ & $1800$ & $100$ & $12.7 \times 10^{9}$ \\
%\Xhline{0.5\arrayrulewidth}
\Xhline{3\arrayrulewidth}
\end{tabular}

\vspace{2mm}

\begin{tabular}{l l || c c c c c}
\Xhline{3\arrayrulewidth}
Data & \multirow{1}{*}{Method} &  \multirow{1}{*}{Local Test Acc. $(\uparrow)$} & \multicolumn{1}{c}{New Test Acc. $(\uparrow)$} & \multirow{1}{*}{FedAvg Rounds} & \multirow{1}{*}{LG Rounds} & \multirow{1}{*}{Params Comm. $(\downarrow)$} \\
\Xhline{0.5\arrayrulewidth}
\multirow{7}{*}{\rotatebox{90}{5 classes/device}} & \fed~\citep{DBLP:journals/corr/McMahanMRA16} & $67.21 \pm 0.45$ & $67.21 \pm 0.45$ & $1800$ & $0$ & $12.7 \times 10^{9}$ \\
& Local only~\citep{DBLP:journals/corr/SmithCST17} & $73.42 \pm 0.56$ & $10.51 \pm 0.49$ & $0$ & $0$ & $0$ \\
& \ours\ (ours) & $80.97 \pm 0.62$ & $65.34 \pm 1.00$ & $1000$ & $100$ & $\mathbf{7.1 \times 10^{9}}$ \\
& \ours\ (ours) & $81.50 \pm 0.52$ & $66.32 \pm 0.48$ & $1200$ & $100$ & $8.5 \times 10^{9}$ \\
& \ours\ (ours) & $81.92 \pm 0.55$ & $67.26 \pm 0.44$ & $1400$ & $100$ & $9.9 \times 10^{9}$ \\
& \ours\ (ours) & $82.29 \pm 0.38$ & $67.61 \pm 0.61$ & $1600$ & $100$ & $11.3 \times 10^{9}$ \\
& \ours\ (ours) & $\mathbf{82.51 \pm 0.50}$ & $\mathbf{68.32 \pm 0.56}$ & $1800$ & $100$ & $12.7 \times 10^{9}$ \\
%\Xhline{0.5\arrayrulewidth}
\Xhline{3\arrayrulewidth}
\end{tabular}

\vspace{2mm}

\begin{tabular}{l l || c c c c c}
\Xhline{3\arrayrulewidth}
Data & \multirow{1}{*}{Method} &  \multirow{1}{*}{Local Test Acc. $(\uparrow)$} & \multicolumn{1}{c}{New Test Acc. $(\uparrow)$} & \multirow{1}{*}{FedAvg Rounds} & \multirow{1}{*}{LG Rounds} & \multirow{1}{*}{Params Comm. $(\downarrow)$} \\
\Xhline{0.5\arrayrulewidth}
\multirow{7}{*}{\rotatebox{90}{10 classes/device (iid)}} & \fed~\citep{DBLP:journals/corr/McMahanMRA16} & $67.74 \pm 0.45$ & $67.74 \pm 0.45$ & $1800$ & $0$ & $12.7 \times 10^{9}$ \\
& Local only~\citep{DBLP:journals/corr/SmithCST17} & $45.54 \pm 0.31$ & $16.90 \pm 3.09$ & $0$ & $0$ & $0$ \\
& \ours\ (ours) & $68.09 \pm 0.66$ & $67.93 \pm 0.61$ & $1000$ & $100$ & $\mathbf{7.1 \times 10^{9}}$ \\
& \ours\ (ours) & $68.97 \pm 0.55$ & $68.90 \pm 0.54$ & $1200$ & $100$ & $8.5 \times 10^{9}$ \\
& \ours\ (ours) & $69.36 \pm 0.37$ & $69.16 \pm 0.30$ & $1400$ & $100$ & $9.9 \times 10^{9}$ \\
& \ours\ (ours) & $69.64 \pm 0.38$ & $69.52 \pm 0.44$ & $1600$ & $100$ & $11.3 \times 10^{9}$ \\
& \ours\ (ours) & $\mathbf{69.89 \pm 0.48}$ & $\mathbf{69.76 \pm 0.49}$ & $1800$ & $100$ & $12.7 \times 10^{9}$ \\
%\Xhline{0.5\arrayrulewidth}
\Xhline{3\arrayrulewidth}
\end{tabular}

\vspace{2mm}

\label{cifar10_supp}
\vspace{-4mm}
\end{table*}

\subsubsection{VQA}
\label{vqa_supp}

\textbf{Details:} We adapt the baseline model from \citep{VQA} without \textit{norm I} image channel embeddings. We also substitute the VGGNet~\citep{Simonyan14c} used in the original baseline model with a pre-trained ResNet-18~\citep{DBLP:journals/corr/HeZRS15}. Finally we use the deep LSTM~\citep{hochreiter1997long} embedding, which is an LSTM that consists of two hidden layers. For the \ours\ method, the global model uses the two final fully connected layers of the image and question channels, as well as the the additional two fully connected layers following the fusion via element-wise multiplication. The global model reduces the number of parameters to $9.53\%$ ($5149200/54042572$). We use 50 devices and set number of local epochs $E=1$, local minibatch size $B=100$, fraction of devices sampled per round $C=0.1$. To train and evaluate our models, we use the data from the following: \url{https://visualqa.org/download.html}. Table~\ref{vqa:config} shows a table of hyperparameters used, which strictly follows the baseline model from \citep{VQA}. We first trained the best \fed\ model and used the exact same hyperparamters to train \ours\ as well.

\begin{figure}[t]
    \centering
    \includegraphics[width=0.5\linewidth]{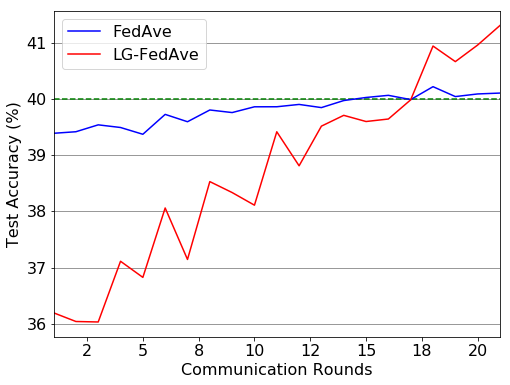}
    \caption{Test accuracy on VQA across $20$ rounds (dotted green line marks the goal accuracy of $40\%$ used in Table~\ref{vqa}). \ours\ reaches an accuracy of $41.30\%$ compared to $40.22\%$ for \fed\ while using only $9.53\%$ of the parameters.\vspace{-4mm}}
    \label{vqa_graph}
\end{figure}

\textbf{Extra results:} In Figure \ref{vqa_graph}, we plot the convergence of test accuracy across communication rounds. \ours\ outperforms \fed\ after $20$ rounds while requiring only $9.53\%$ of parameters in \fed \ and continues to improve.

\subsubsection{Other Baselines}

The \textsc{MTL} baseline~\citep{DBLP:journals/corr/SmithCST17} is implemented by training local models each with parameters $\mathbf{w}_m$ and adding a joint regularization term $\mathcal{R}(\mathbf{W}, \mathbf{\Omega}) = \lambda_1 \textrm{tr} (\mathbf{W} \mathbf{\Omega} \mathbf{W}^\top) + \lambda_2 \| \mathbf{W} \|_F^2$ where $\lambda_1$ and $\lambda_2$ are hyperparmeters, and $\mathbf{W} = [\mathbf{w}_1, . . . , \mathbf{w}_M]$ is a matrix whose $m$-th column is the weight vector for the $m$-th device. We choose $\mathbf{\Omega} = (\mathbf{I}_M - \frac{1}{M} \mathbf{1} \mathbf{1}^\top)^2$ which implements mean-regularized multitask learning~\citep{10.1145/1014052.1014067}, which assumes that all the tasks form one cluster, and that the weight vectors are close to their mean. For CIFAR-10, computing the full $\mathbf{W}$ requires storing a matrix of size $p \times M$. Optimizing and storing this matrix becomes infeasible as the number of users or model size increases. Even for our experimental setting, where $M = 100$ and $p \approx 64,000$ (number of parameters in each local model), running MTL require the following relaxations. First, we reduce $M$ to $10$, reducing the size of $\mathbf{W}$ and therefore the number of parameters communicated per round.

\begin{table}[t]
\fontsize{8.5}{11}\selectfont
\centering
\caption{Table of hyperparameters for VQA experiments.}
\setlength\tabcolsep{3.5pt}
\begin{tabular}{l | c | c}
\Xhline{3\arrayrulewidth}
Model & Parameter & Value \\
\Xhline{0.5\arrayrulewidth}
\multirow{7}{*}{\fed} 
& Loss & cross entropy \\
& Batchsize & 100 \\
& Optimizer & SGD \\
& Learning rate & 0.01 \\
& Momentum & 0.9 \\
& Learning rate decay & 0.0005 \\
& Global epochs & 100 \\
\Xhline{0.5\arrayrulewidth}
\multirow{7}{*}{\ours, {Local model}}
& Loss & cross entropy \\
& Batchsize & 100 \\
& Optimizer & SGD \\
& Learning rate & 0.01 \\
& Momentum & 0.9 \\
& Learning rate decay & 0.0005 \\
& Global epochs & 100 \\
\Xhline{0.5\arrayrulewidth}
\multirow{7}{*}{\ours, {Global model}}
& Loss & cross entropy \\
& Batchsize & 100 \\
& Optimizer & SGD \\
& Learning rate & 0.01 \\
& Momentum & 0.9 \\
& Learning rate decay & 0.0005 \\
& Global epochs & 100 \\
\Xhline{3\arrayrulewidth}
\end{tabular}
\label{vqa:config}
\end{table}

\vspace{-1mm}
\subsection{Learning Personalized Mood Predictors from Mobile Data}
\label{mood_supp}
\vspace{-1mm}

\textbf{Dataset details:} We designed and collected a new dataset called Mobile Assessment for the Prediction of Suicide (MAPS). MAPS was designed to elucidate real-time indicators of suicide risk in adolescents ages $13-18$ years. Current adolescent suicide ideators and recent suicide attempters along with aged-matched psychiatric controls with no lifetime suicidal thoughts and behaviors completed baseline clinical assessments (i.e., lifetime mental disorders, current psychiatric symptoms). Following the baseline clinical characterization, a smartphone app, the Effortless Assessment of Risk States (EARS), was installed onto adolescents' phones, and passive sensor data were acquired for $6$-months. Notably, during EARS installation, a keyboard logger is configured on adolescents’ phones, which then tracks all words typed into the phone as well as they app used during this period. Each day during the $6$-month follow-up, participants also were asked to rate their mood on the previous day on a scale ranging from $1-100$, with higher scores indicating a better mood.

\textbf{All users have given consent for their mobile device data to be collected and shared with us for research purposes.}

MAPS is a realistic federated learning benchmark since it contains real-world data with privacy concerns and high device variance due to highly personalized use of mobile phones. We used a preliminary preprocessed version containing $572$ samples across $14$ participants. We discretize the scores into $5$ bins for $5$-way classification. We use a random $80/10/10$ split for training/validation/testing, conduct all experiments $10$ times, and report the average accuracy and standard deviation.

%This study monitors $100$ adolescents including individuals who have recently attempted suicide, individuals who experience suicidal ideation, and psychiatric controls.
%Across a duration of $6$ months, data was collected from each participant's smartphone using a keyboard logger which tracks all typed words.
%Participants were asked to rate their mood for the previous day on a scale ranging from $1-100$, with higher scores indicating a better mood. 

\textbf{Model details:} To assess how mobile text data can be used to make personalized mood predictions, we train a MLP classifier on top of a Bi-LSTM encoder. The Bi-LSTM has 128 hidden units and the MLP has two hidden layers, each with size 512. We conduct our experiments over 10 iterations. Within each iteration, we use a random 80/10/10 split for training/validation/testing. We train and validate our model 5 times on this split and select the model that performs best on the validation set. We use the test accuracy of this best-performing model as the test accuracy for the iteration. We report the average accuracy and standard deviation over all 10 iterations in Figure~\ref{test_plot}(d). In addition to local only and \fed\ results, we plot the performance of \ours\ across different splits of local and global models (i.e. $\alpha \in \{ 0.2, 0.4, 0.6, 0.8\}$). Consistent with our theoretical findings, an $\alpha$-split across local and global models leverages both personalized representations per devices as well as statistical strength sharing through data across all devices, outperforming either local or global extremes.

\begin{table}[t]
\fontsize{8.5}{11}\selectfont
\centering
\caption{Table of hyperparameters for MAPS experiments.}
\setlength\tabcolsep{3.5pt}
\begin{tabular}{l | c | c}
\Xhline{3\arrayrulewidth}
Model & Parameter & Value \\
\Xhline{0.5\arrayrulewidth}
\multirow{7}{*}{\fed} 
& BiLSTM encoder hidden units & $128$ \\
& MLP hidden layers & [512, 512] \\
& Loss & cross entropy \\
& Max tokens per batch & $2000$ \\
& Optimizer & adam \\
& Learning rate & 5e-3 \\
& Learning rate shrink & $0.5$ \\
\Xhline{0.5\arrayrulewidth}
\multirow{7}{*}{Local only}
& BiLSTM encoder hidden units & $128$ \\
& MLP hidden layers & [512, 512] \\
& Loss & cross entropy \\
& Max tokens per batch & $2000$ \\
& Optimizer & adam \\
& Learning rate & 5e-3 \\
& Learning rate shrink & $0.5$ \\
\Xhline{0.5\arrayrulewidth}
\multirow{7}{*}{Global}
& BiLSTM encoder hidden units & 128 \\
& MLP hidden layers & [512, 512] \\
& Loss & cross entropy \\
& Max tokens per batch & 2000 \\
& Optimizer & adam \\
& Learning rate & 5e-3 \\
& Learning rate shrink & 0.5 \\
\Xhline{3\arrayrulewidth}
\end{tabular}
\label{vqa:config}
\end{table}

\vspace{-1mm}
\subsection{Heterogeneous Data in an Online Setting}
\label{hetero_supp}
\vspace{-1mm}

\textbf{Details:} Our experiments for the rotated MNIST follow the same settings and hyperparameter selection as our normal MNIST experiments (section~\ref{efficient_supp}). However, we include an additional device, which randomly samples 3000 and 500 images from the train and test sets respectively and rotates them by a fixed 90 degrees. We show some samples of the rotated MNIST images we used in Figure~\ref{mnist_pics}, where the top row shows the normal MNIST images used during training and the bottom row shows the rotated MNIST images on the new test device.

\begin{figure}[tbp]
\centering
\includegraphics[width=0.5\linewidth]{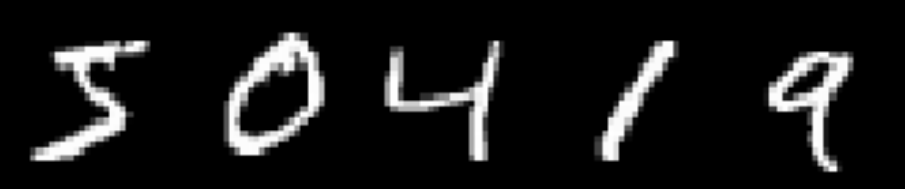}
\includegraphics[width=0.5\linewidth]{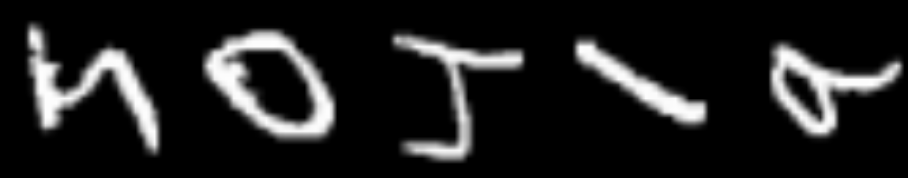}
\caption{Sample MNIST images used for training (top) and their rotated counterparts used to test the impact of heterogeneous data on a trained federated model in an online setting (bottom).}
\label{mnist_pics}
\end{figure}

\vspace{-1mm}
\subsection{Learning Fair Representations}
\label{fair_supp}
\vspace{-1mm}

\textbf{Details:} For method 1, \fed, we train the global model and global adversary for $50$ outer epochs, within which the number of local epochs $E=10$. For methods 2 and 3 involving local models, we begin by pre-training the local models and local adversaries for $10$ epochs before joint local and global training for 10 epochs. Table~\ref{fair:config} shows the table of all hyperparameters used. Experiments were run 10 times with the same hyperparameters but different random seeds. We aimed to keep the local, global, and adversary models as similar as possible between the three baselines for fair comparison. Apart from the number of local and global epochs all hyperparameters were kept the same from the tutorial \url{https://blog.godatadriven.com/fairness-in-ml} and associated code \url{https://github.com/equialgo/fairness-in-ml}. The data can be found at \url{https://archive.ics.uci.edu/ml/datasets/Adult}.

\begin{table}[t]
\fontsize{8.5}{11}\selectfont
\centering
\caption{Table of hyperparameters for experiments on learning fair representations on the UCI adult dataset.}
\setlength\tabcolsep{3.5pt}
\begin{tabular}{l | c | c}
\Xhline{3\arrayrulewidth}
Model & Parameter & Value \\
\Xhline{0.5\arrayrulewidth}
\multirow{11}{*}{\fed, {Global model}} & Input dim & 93 \\
& Layers & [32,32,32] \\
& Output dim & 1 \\
& Loss & cross entropy \\
& Dropout & 0.2 \\
& Batchsize & 32 \\
& Activation & ReLU \\
& Optimizer & SGD \\
& Learning rate & 0.1 \\
& Momentum & 0.5 \\
& Global epochs & 50 \\
\Xhline{0.5\arrayrulewidth}
\multirow{11}{*}{\fed, {Global Adversary}} & Input dim & 32 \\
& Layers & [32,32,32] \\
& Output dim & 2 \\
& Loss & cross entropy \\
& Dropout & 0.2 \\
& Batchsize & 32 \\
& Activation & ReLU \\
& Optimizer & SGD \\
& Learning rate & 0.1 \\
& Momentum & 0.5 \\
& Global epochs & 50 \\
\Xhline{0.5\arrayrulewidth}
\multirow{11}{*}{\makecell{\ours, Local model\\\ours \ + Ave, Local model}} & Input dim & 93 \\
& Layers & [32,32,32] \\
& Output dim & 1 \\
& Loss & cross entropy \\
& Dropout & 0.2 \\
& Batchsize & 32 \\
& Activation & ReLU \\
& Optimizer & SGD \\
& Learning rate & 0.1 \\
& Momentum & 0.5 \\
& Local epochs & 10 \\
\Xhline{0.5\arrayrulewidth}
\multirow{11}{*}{\makecell{\ours \ + Ave, Local adversary}} & Input dim & 93 \\
& Layers & [32,32,32] \\
& Output dim & 2 \\
& Loss & cross entropy \\
& Dropout & 0.2 \\
& Batchsize & 32 \\
& Activation & ReLU \\
& Optimizer & SGD \\
& Learning rate & 0.1 \\
& Momentum & 0.5 \\
& Local epochs & 10 \\
\Xhline{0.5\arrayrulewidth}
\multirow{11}{*}{\makecell{\ours, Global model\\\ours \ + Ave, Global model}} & Input dim & 93 \\
& Layers & [32,32,32] \\
& Output dim & 2 \\
& Loss & cross entropy \\
& Dropout & 0.2 \\
& Batchsize & 32 \\
& Activation & ReLU \\
& Optimizer & SGD \\
& Learning rate & 0.1 \\
& Momentum & 0.5 \\
& Global epochs & 10 \\
\Xhline{3\arrayrulewidth}
\end{tabular}
\label{fair:config}
\end{table}

\vspace{-1mm}
\section{Discussion and Future Work}
\vspace{-1mm}

We believe that \ours\ is a general approach that offers several extensions for future work.

Firstly, combining \ours\ with existing work on compressing the number of parameters and gradient updates could further improve the efficiency of federated learning. For example, existing work in sparsifying the data and model~\cite{pmlr-v70-wang17f}, developing efficient gradient-based methods~\cite{Wang2018CooperativeSA,NIPS2017_7218}, and compressing the updates~\cite{DBLP:journals/corr/abs-1802-06058,DBLP:journals/corr/abs-1901-03040,46622} can all be applied to our local and global models. In particular, sparsifying the model through techniques such as distillation~\cite{DBLP:journals/corr/abs-1802-05668} and hashing~\cite{pmlr-v37-chenc15} could help to store the local models on devices with small memory and computational power.

Secondly, depending on the test time scenario (i.e. local test vs new test), there is a trade off between the ideal size of local models and the global model. If we know which device the test data belongs to, then having a more accurate local model would allow us to perform better prediction at test time. However, if we do not know which device the test data belongs to, it is important to use a more accurate global model to learn the true data distribution across all devices. Therefore, another step for future work would be dynamically learn the number of layers spread across the local and the global models, in a manner similar to learning dynamic computation steps in neural networks~\cite{DBLP:journals/corr/abs-1902-01046}. Different devices which contain different data distributions could use different local models which are dynamically learnt rather than hand-designed by the user. Techniques in neural architecture search could also be helpful for this purpose.

Finally, learning fair representations is of utmost importance as our machine learning systems are deployed in real-life settings such as healthcare, law, and policy-making. In addition to the adversarial training method we described in this paper, there are a variety of methods for learning fair representations that can also be incorporated into our flexible local models. For example, recent work has shown that pre-trained word and sentence representations encode and exacerbate gender, race, and religious biases~\cite{bolukbasi2016man,DBLP:journals/corr/abs-1903-10561,DBLP:journals/corr/abs-1904-03310}. Incorporating these debiasing methods for text data would be an important step towards learning \textit{fair} and \textit{unbiased} local representations in federated learning.

\end{document}